\documentclass{article} 
\usepackage{iclr2022_conference,times}


\usepackage{amsmath,amsfonts,bm}









\def\eqref#1{(\ref{#1})}









\def\1{\bm{1}}










\DeclareMathAlphabet{\mathsfit}{\encodingdefault}{\sfdefault}{m}{sl}
\SetMathAlphabet{\mathsfit}{bold}{\encodingdefault}{\sfdefault}{bx}{n}













\DeclareMathOperator*{\argmin}{arg\,min}

\usepackage[colorlinks,linkcolor=red,filecolor=blue,citecolor=blue,urlcolor=blue]{hyperref}
\usepackage{url}
\usepackage{booktabs}       
\usepackage{amsfonts}       
\usepackage{amsmath}
\usepackage{graphicx}
\usepackage{amsthm}
\usepackage{amssymb}
\usepackage{multirow}
\usepackage{makecell}
\usepackage{algpseudocode}
\usepackage{algorithm}
\usepackage{amssymb}%
\usepackage{mathtools}
\usepackage{stmaryrd}
\usepackage[T1]{fontenc}
\usepackage{xargs}
\usepackage{xcolor}
\usepackage{wrapfig}

\newtheorem{Lemma}{Lemma}

\newtheorem{definition}{Definition}
\newtheorem{Theorem}{Theorem}

\newtheorem{Corollary}{Corollary}

\newtheorem{assumption}{Assumption}
\newtheorem{Remark}{Remark}
\newtheorem*{Lemma*}{Lemma}
\newtheorem*{Theorem*}{Theorem}
\newtheorem*{Corollary*}{Corollary}

\newcommand{\beq}{\begin{equation}}
\newcommand{\eeq}{\end{equation}}
\newcommand{\eqdef}{\mathrel{\mathop:}=}
\def\EE{\mathbb{E}}
\newcommand{\norm}[1]{\left\Vert #1 \right\Vert}

\def\maxiter{T}

\newcommand{\algo}{\textsc{Comp-AMS}}

\title{\centering On Distributed Adaptive Optimization with Gradient Compression}

    \author{\hspace{4.1cm} Xiaoyun Li, Belhal Karimi, Ping Li\vspace{0.1in} \\
\hspace{4.8cm}Cognitive Computing Lab\\
\hspace{5.5cm}Baidu Research\\
\hspace{3.8cm}10900 NE 8th St. Bellevue, WA 98004, USA\\
\hspace{2cm} \texttt{\texttt{\{xiaoyunli,belhalkarimi,liping11\}@baidu.com}}
}

\iclrfinalcopy 
\begin{document}

\maketitle

\begin{abstract}\vspace{-0.1in}

\noindent\footnote{Published at ICLR 2022. Submission available to public in \url{www.openreview.net} since Sept. 2021.}We study \algo, a distributed optimization framework based on gradient averaging and adaptive AMSGrad algorithm. Gradient compression with error feedback is applied to reduce the communication cost in the gradient transmission process. Our convergence analysis of \algo\ shows that such compressed gradient averaging strategy yields same convergence rate as standard AMSGrad, and also exhibits the linear speedup effect w.r.t. the number of local workers. Compared with recently proposed protocols on distributed adaptive methods, \algo\ is simple and convenient. Numerical experiments are conducted to justify the theoretical findings, and demonstrate that the proposed method can achieve same test accuracy as the full-gradient AMSGrad with substantial communication savings. With its simplicity and efficiency, \algo\ can serve as a useful distributed training framework for adaptive gradient methods.\vspace{-0.1in}
\end{abstract}

\section{Introduction}\label{sec:introduction}

Deep neural network has achieved the state-of-the-art learning performance on numerous AI applications, e.g., computer vision and natural language processing~\citep{Proc:Graves_ICASSP13,Proc:GAN_NIPS14,Proc:Resnet_CVPR16,NLP_review18,sentiment_review18}, reinforcement learning~\citep{Arxiv:MnihKSGAWR13,Article:Levine_JMLR16,AlphaGo_17}, recommendation systems~\citep{Proc:Covington_2016}, computational advertising~\citep{Proc:Zhao_CIKM19,Proc:Xu_SIGMOD21,Arxiv:Zhao2022}, etc. With the increasing size of data and growing complexity of deep neural networks, standard single-machine training procedures encounter at least two major challenges:
\begin{itemize}
    \item Due to the limited computing power of a single-machine, processing the massive number of data samples takes a long time---training is too slow. Many real-world applications cannot afford spending days or even weeks on training.

    \item In many scenarios, data are stored on multiple servers, possibly at different locations, due to the storage constraints (massive user behavior data, Internet images, etc.) or privacy reasons~\citep{Proc:Chang18}.
    Hence, transmitting data among servers might be costly.
\end{itemize}

\textit{Distributed learning} framework has been commonly used to tackle the above two issues. Consider the distributed optimization task where $n$ workers jointly solve the following optimization problem
\begin{equation}\label{eq:opt}
\min_{\theta} f(\theta) \eqdef \min_{\theta} \frac{1}{n} \sum_{i=1}^n f_i(\theta)= \frac{1}{n} \sum_{i=1}^n \mathbb E_{x\sim \mathcal X_i}[F_i(\theta;x)],
\end{equation}
where the non-convex function $f_i$ represents the average loss over the local data samples for worker $i \in [n]$, and $\theta\in\mathbb R^d$ the global model parameter. $\mathcal X_i$ is the data distribution on each local node. In the classical centralized distributed setting, in each iteration the central server uniformly randomly assigns the data to $n$ local workers ($\mathcal X_i$'s are the same), at which the gradients of the model are computed in parallel. Then the central server aggregates the local gradients, updates the global model (e.g., by stochastic gradient descent (SGD)), and transmits back the updated model to the local nodes for subsequent gradient computation. The scenario where $\mathcal X_i$'s are different gives rise to the recently proposed Federated Learning (FL)~\citep{mcmahan2017communication} framework, which will not be the major focus of this work. As we can see, distributed training naturally solves aforementioned issues: 1) We use $n$ computing nodes to train the model, so the time per training epoch can be largely reduced; 2) There is no need to transmit the local data to central server. Besides, distributed training also provides stronger error tolerance since the training process could continue even one local machine breaks down. As a result of these advantages, there has been a surge of study and applications on distributed systems~\citep{nedic2009distributed,boyd2011distributed,duchi2011dual,Arxiv:Goyal17,hong2017prox,koloskova2019decentralized,lu2019gnsd}.

\textbf{Gradient compression.} Among many optimization strategies, SGD is still the most popular prototype in distributed training for its simplicity and effectiveness~\citep{chilimbi2014project,Proc:Agrawal_NIPS19,mikami2018massively}. Yet, when the deep learning model is very large, the communication between local nodes and central server could be expensive, and the burdensome gradient transmission would slow down the whole training system. Thus, reducing the communication cost in distributed SGD has become an active topic, and an important ingredient of large-scale distributed systems (e.g.,~\cite{Proc:Seide14}). Solutions based on quantization, sparsification and other compression techniques of the local gradients have been proposed, e.g.,~\cite{aji2017sparse,alistarh2017qsgd,de2017understanding,wen2017terngrad,bernstein2018signsgd,stich2018sparsified,wangni2018gradient,Proc:Ivkin_NIPS19,yang2019swalp,Arxiv:fedsketch20}. However, it has been observed both theoretically and empirically~\citep{stich2018sparsified,ajalloeian2020analysis}, that directly updating with the compressed gradients usually brings non-negligible performance downgrade in terms of convergence speed and accuracy. To tackle this problem, studies (e.g.,~\cite{stich2018sparsified,karimireddy2019error}) show that the technique of \textit{error feedback} can to a large extent remedy the issue of such gradient compression, achieving the same convergence rate as full-gradient SGD.

\textbf{Adaptive optimization.} In recent years, adaptive optimization algorithms (e.g., AdaGrad~\citep{Duchi10-adagrad}, Adam~\citep{kingma2014adam} and AMSGrad~\citep{reddi2019convergence}) have become popular because of their superior empirical performance. These methods use different implicit learning rates for different coordinates that keep changing adaptively throughout the training process, based on the learning trajectory. In many cases, adaptive methods have been shown to converge faster than SGD, sometimes with better generalization as well. Nevertheless, the body of literature that extends adaptive methods to distributed training is still fairly limited. In particular, even the simple gradient averaging approach, though appearing standard, has not been analyzed for adaptive optimization algorithms. Given that distributed SGD with compressed gradient averaging can match the performance of standard SGD, one natural question is: is it also true for adaptive methods? In this work, we fill this gap formally, by analyzing \algo, a distributed adaptive optimization framework using the gradient averaging protocol, with communication-efficient gradient compression. Our method has been implemented in the PaddlePaddle  platform (\url{www.paddlepaddle.org.cn}).

\textbf{Our contributions.} We study a simple algorithm design leveraging the \emph{adaptivity} of AMSGrad and the computational virtue of \emph{local gradient compression}:
\begin{itemize}
\item We propose \algo, a synchronous distributed adaptive optimization framework based on global averaging with gradient compression, which is efficient in both communication and memory as no local moment estimation is needed. We consider the BlockSign and Top-$k$ compressors, coupled with the error-feedback technique to compensate for the bias implied by the compression step for fast convergence.

\item We provide the convergence analysis of distributed \algo\ (with $n$ workers) in smooth non-convex optimization. In the special case of $n=1$ (single machine), similar to SGD, gradient compression with error feedback in adaptive method achieves the same convergence rate $\mathcal O(\frac{1}{\sqrt T})$ as the standard full-gradient counterpart. Also, we show that with a properly chosen learning rate, \algo\ achieves $\mathcal O(\frac{1}{\sqrt{nT}})$ convergence, implying a linear speedup in terms of the number of local workers to attain a stationary point.

\item Experiments are conducted on various training tasks on image classification and sentiment analysis to validate our theoretical findings on the linear speedup effect. Our results show that \algo\ has comparable performance with other distributed adaptive methods, and approaches the accuracy of full-precision AMSGrad with a substantially reduced communication cost. Thus, it can serve as a convenient distributed training strategy in practice.

\end{itemize}

\section{Related Work}\label{sec:related}

\subsection{Distributed SGD with Compressed Gradients}

\textbf{Quantization.} To reduce the expensive communication in large-scale distributed SGD training systems, extensive works have considered various compression techniques applied to the gradient transaction procedure. The first strategy is quantization.~\cite{Proc:8-bit_ICLR16} condenses 32-bit floating numbers into 8-bits when representing the gradients.~\cite{Proc:Seide14,bernstein2018signsgd,Proc:Bernstein_ICLR19,karimireddy2019error} use the extreme 1-bit information (sign) of the gradients, combined with tricks like momentum, majority vote and memory. Other quantization-based methods include QSGD~\citep{alistarh2017qsgd,Proc:Zhang_ICML17,Proc:Wu_ICML18} and LPC-SVRG~\citep{Proc:Yu_AISTATS19}, leveraging unbiased stochastic quantization. Quantization has been successfully applied to industrial-level applications, e.g.,~\cite{Proc:Xu_SIGMOD21}. The saving in communication of quantization methods is moderate: for example, 8-bit quantization reduces the cost to 25\% (compared with 32-bit full-precision). Even in the extreme 1-bit case, the largest compression ratio is around $1/32\approx 3.1\%$.

\textbf{Sparsification.} Gradient sparsification is another popular solution which may provide higher compression rate. Instead of commuting the full gradient, each local worker only passes a few coordinates to the central server and zeros out the others. Thus, we can more freely choose higher compression ratio (e.g., 1\%, 0.1\%), still achieving impressive performance in many applications~\citep{Proc:Lin_ICLR18}. Stochastic sparsification methods, including uniform and magnitude based sampling~\citep{wangni2018gradient}, select coordinates based on some sampling probability, yielding unbiased gradient compressors with proper scaling. Deterministic methods are simpler, e.g., Random-$k$, Top-$k$~\citep{stich2018sparsified,shi2019convergence} (selecting $k$ elements with largest magnitude), Deep Gradient Compression~\citep{Proc:Lin_ICLR18}, but usually lead to biased gradient estimation. More applications and analysis of compressed distributed SGD can be found in~\citet{alistarh2018convergence,jiang2018linear,Proc:Jiang_SIGMOD18,Proc:Shen_ICML18,Proc:Basu_NIPS19}, among others.

\textbf{Error Feedback (EF).} Biased gradient estimation, which is a consequence of many aforementioned methods (e.g., signSGD, Top-$k$), undermines the model training, both theoretically and empirically, with slower convergence and worse generalization~\citep{ajalloeian2020analysis,Arxiv:Beznosikov20}. The technique of \textit{error feedback} is able to ``correct for the bias'' and fix the convergence issues.
In this procedure, the difference between the true stochastic gradient and the compressed one is accumulated locally, which is then added back to the local gradients in later iterations.~\cite{stich2018sparsified,karimireddy2019error} prove the $\mathcal O(\frac{1}{T})$ and $\mathcal O(\frac{1}{\sqrt T})$ convergence rate of EF-SGD in strongly convex and non-convex setting respectively, matching the rates of vanilla SGD~\citep{nemirovski2009robust,ghadimi2013stochastic}. More recent works on the convergence rate of SGD with error feedback include~\cite{Article:Stich_arxiv19,Proc:Zheng_NIPS19,Proc:Richtarik_NeurIPS21}, etc.

\subsection{Adaptive Optimization}

\begin{wrapfigure}{r}{0.5\linewidth}
\vspace{-0.2in}
\begin{minipage}{\linewidth}
\begin{algorithm}[H]
\caption{\textsc{AMSGrad}~\citep{reddi2019convergence}} \label{alg:amsgrad}
\begin{algorithmic}[1]
\State{\textbf{Input}: parameters $\beta_1$, $\beta_2$, $\epsilon$, learning rate $\eta_t$ }
\State{\textbf{Initialize:} $\theta_{1} \in \mathbb R^d$, $m_0=v_{0} = \bm{0} \in \mathbb R^{d}$}
\vspace{0.03in}
\State{\textbf{for $t=1, \ldots, T$ do}}
\State{\hspace{0.2in}Compute stochastic gradient $g_t$ at $\theta_t$}
\State{\hspace{0.2in}$m_t = \beta_1 m_{t-1} + (1 - \beta_1) g_t$}
\State{\hspace{0.2in}$v_t = \beta_2 v_{t-1} + (1 - \beta_2) g_t^2$ }
\State{\label{line:maxop}\hspace{0.2in}$\hat{v}_t = \max( \hat{v}_{t-1} , v_t )$ }
\State{\hspace{0.2in}$\theta_{t+1} = \theta_t - \eta_t \frac{m_t}{ \sqrt{\hat{v}_t +\epsilon} }$}
\State{\textbf{end for}}
\end{algorithmic}
\end{algorithm}
\end{minipage}
\end{wrapfigure}

In each SGD update, all the coordinates share the same learning rate, which is either constant or decreasing through the iterations. Adaptive optimization methods cast different learning rates on each dimension. For instance, AdaGrad, developed in~\cite{Duchi10-adagrad}, divides the gradient elementwise by $\sqrt{\sum_{t=1}^T g_{t}^2}\in \mathbb R^d$, where $g_{t}\in \mathbb R^d$ is the gradient vector at time $t$ and $d$ is the model dimensionality. Thus, it intrinsically assigns different learning rates to different coordinates throughout the training---elements with smaller previous gradient magnitudes tend to move a larger step via larger learning rate. Other adaptive methods include AdaDelta~\citep{Proc:adadelta} and Adam~\citep{kingma2014adam}, which introduce momentum and moving average of second moment estimation into AdaGrad hence leading to better performances.
AMSGrad~\citep{reddi2019convergence} (Algorithm~\ref{alg:amsgrad}, which is the prototype in our paper), fixes the potential convergence issue of Adam. \cite{wang2019optimistic} and \cite{Proc:Zhou_NeurIPS20} improve the convergence and generalization of AMSGrad through optimistic acceleration and differential privacy.

Adaptive optimization methods have been widely used in training deep learning models in language, computer vision and advertising applications, e.g.,~\cite{,Arxiv:Choi_2019,Proc:LAMB_ICLR20,Arxiv:Zhang_ICLR21,Arxiv:Zhao2022}. In distributed setting,~\cite{nazari2019dadam,chen2021convergence} study decentralized adaptive methods, but communication efficiency was not considered. Mostly relevant to our work,~\cite{Arxiv:QAdam} proposes a distributed training algorithm based on Adam, which requires every local node to store a local estimation of the moments of the gradient. Thus, one has to keep extra two more tensors of the model size on each local worker, which may be less feasible in terms of memory particularly with large models. More recently,~\cite{Proc:1bitAdam} proposes an Adam pre-conditioned momentum SGD method. \cite{chen2020toward,karimireddy2020mime,Proc:Reddi_ICLR21} proposed local/global adaptive FL methods, which can be further accelerated via layer-wise adaptivity~\citep{Arxiv:fedlamb2021}.

\section{Communication-Efficient Adaptive Optimization}\label{sec:main}

\subsection{Gradient Compressors}

In this paper, we mainly consider deterministic $q$-deviate compressors defined as below.

\begin{assumption}\label{ass:quant} The gradient compressor $\mathcal C:\mathbb R^d\mapsto \mathbb R^d$ is $q$-deviate: for $\forall x\in\mathbb R^d$, $\exists$ $0\leq q < 1$ such that $\norm{\mathcal C(x)-x} \leq q \norm{x}$.
\end{assumption}
Larger $q$ indicates heavier compression, while smaller $q$ implies better approximation of the true gradient. $q=0$ implies $\mathcal C(x)=x$, i.e., no compression. In the following, we give two popular and efficient $q$-deviate compressors that will be adopted in this paper.

\begin{definition}[Top-$k$]\label{def:topk}
For $x\in\mathbb R^d$, denote $\mathcal S$ as the size-$k$ set of $i\in[d]$ with largest $k$ magnitude $|x_i|$. The \textbf{Top-$k$} compressor is defined as $\mathcal C(x)_i=x_i$, if $i\in\mathcal S$; $\mathcal C(x)_i=0$ otherwise.
\end{definition}

\begin{definition}[Block-Sign]\label{def:sign}
For $x\in\mathbb R^d$, define $M$ blocks indexed by $\mathcal B_i$, $i=1,...,M$, with $d_i\eqdef |\mathcal B_i|$. The \textbf{Block-Sign} compressor is defined as $\mathcal C(x)=[sign(x_{\mathcal B_1})\frac{\|x_{\mathcal B_1}\|_1}{d_1},..., sign(x_{\mathcal B_M}) \frac{\|x_{\mathcal B_M}\|_1}{d_M}]$, where $x_{\mathcal B_i}$ is the sub-vector of $x$ at indices $\mathcal B_i$.
\end{definition}

\begin{Remark}
It is well-known~\citep{stich2018sparsified} that for \textbf{Top-$k$}, $q^2=1-\frac{k}{d}$. For \textbf{Block-Sign}, by Cauchy-Schwartz inequality we have $q^2=1-\min_{i\in [M]} \frac{1}{d_i}$ where $M$ and $d_i$ are defined in Definition~\ref{def:sign}~\citep{Proc:Zheng_NIPS19}.
\end{Remark}

The intuition behind \textbf{Top-$k$} is that, it has been observed empirically that when training many deep models, most gradients are typically very small, and gradients with large magnitude contain most information. The \textbf{Block-Sign} compressor is a simple extension of the 1-bit \textbf{SIGN} compressor~\citep{Proc:Seide14,bernstein2018signsgd}, adapted to different gradient magnitude in different blocks, which, for neural nets, are usually set as the distinct network layers. The scaling factor in Definition~\ref{def:sign} is to preserve the (possibly very different) gradient magnitude in each layer. In principle, \textbf{Top-$k$} would perform the best when the gradient is effectively sparse, while \textbf{Block-Sign} compressor is favorable by nature when most gradients have similar magnitude within each layer.

\subsection{\algo : Distributed Adaptive Training by Gradient Aggregation}

We present in Algorithm~\ref{alg:sparsams} the proposed communication-efficient distributed adaptive method in this paper, \algo. This framework can be regarded as an analogue to the standard synchronous distributed SGD: in each iteration, each local worker transmits to the central server the compressed stochastic gradient computed using local data. Then the central server takes the average of local gradients, and performs an AMSGrad update. In Algorithm~\ref{alg:sparsams}, lines 7-8 depict the error feedback operation at local nodes. $e_{t,i}$ is the accumulated error from gradient compression on the $i$-th worker up to time $t-1$. This residual is added back to $g_{t,i}$ to get the ``corrected'' gradient. In Section~\ref{sec:theory} and Section~\ref{sec:experiment}, we will show that error feedback, similar to the case of SGD, also brings good convergence behavior under gradient compression in distributed AMSGrad.

\newpage

\textbf{Comparison with related methods.} Next, we discuss the differences between \algo\ and two recently proposed methods also trying to solve compressed distributed adaptive optimization.

\begin{itemize}
    \item \textbf{Comparison with~\cite{Arxiv:QAdam}.}\hspace{0.1in}\cite{Arxiv:QAdam} develops a quantized variant of Adam~\citep{kingma2014adam}, called ``QAdam''. In this method, each worker keeps a local copy of the moment estimates, commonly noted $m$ and $v$, and compresses and transmits the ratio $\frac{m}{v}$ as a whole to the server. Their method is hence very much like the compressed distributed SGD, with the exception that the ratio $\frac{m}{v}$ plays the role of the gradient vector $g$ communication-wise. Thus, two local moment estimators are additionally required, which have same size as the deep learning model. In our \algo, the moment estimates $m$ and $v$ are kept and updated only at the central server, thus not introducing any extra variable (tensor) on local nodes during training (except for the error accumulator). Hence, \algo\ is not only effective in communication reduction, but also efficient in terms of memory (space), which is feasible when training large-scale learners like BERT and CTR prediction models, e.g.,~\cite{Proc:BERT,Proc:Xu_SIGMOD21}, to lower the hardware consumption in practice. Additionally, the convergence rate in \cite{Arxiv:QAdam} does not improve linearly with $n$, while we prove the linear speedup effect of \algo.

    \item \textbf{Comparison with~\cite{Proc:1bitAdam}}\hspace{0.1in}The recent work~\citep{Proc:1bitAdam} proposes ``1BitAdam''. They first run some warm-up training steps using standard Adam, and then store the second moment moving average $v$. Then, distributed Adam training starts with $v$ frozen. Thus, 1BitAdam is actually more like a distributed momentum SGD with some pre-conditioned coordinate-wise learning rates. The number of warm-up steps also needs to be carefully tuned, otherwise bad pre-conditioning may hurt the learning performance. Our \algo\ is simpler, as no pre-training is needed. Also, 1BitAdam requires extra tensors for $m$ locally, while \algo\  does not need additional local memory.
\end{itemize}

\begin{algorithm}[tb]
\caption{Distributed \algo\ with error feedback (EF)} \label{alg:sparsams}
\begin{algorithmic}[1]
\State{\textbf{Input}: parameters $\beta_1$, $\beta_2$, $\epsilon$, learning rate $\eta_t$ }
\State{\textbf{Initialize}: central server parameter $\theta_{1} \in \mathbb R^d \subseteq \mathbb R^d$; $e_{1,i}=\bm{0}$ the error accumulator for each worker; $m_0=\bm{0}$, $v_0=\bm{0}$, $\hat v_0=\bm{0}$}
\vspace{0.03in}
\State{\textbf{for $t=1, \ldots, T$ do}}
\State{\quad\textbf{parallel for worker $i \in [n]$ do}:}
\State{\quad\quad  Receive model parameter $\theta_{t}$ from central server}
\State{\quad\quad  Compute stochastic gradient $g_{t,i}$ at $\theta_t$}
\State{\quad\quad  Compute the compressed gradient $\tilde g_{t,i}=\mathcal C(g_{t,i}+e_{t,i})$ \label{line:topk} }
\State{\quad\quad  Update the error $e_{t+1,i}=e_{t,i}+g_{t,i}-\tilde g_{t,i}$}
\State{\quad\quad  Send $\tilde g_{t,i}$ back to central server}
\State{\quad \textbf{end parallel}}

\State{\quad\textbf{Central server do:}}
\State{\quad $\bar g_{t}=\frac{1}{n}\sum_{i=1}^n \tilde g_{t,i}$}
\State{\quad $m_t=\beta_1 m_{t-1}+(1-\beta_1)\bar g_t$}
\State{\quad $v_t=\beta_2 v_{t-1}+(1-\beta_2)\bar g_t^2$}
\State{\quad $\hat v_t=\max(v_t,\hat v_{t-1})$ \label{line:v}}
\State{\quad Update the global model $\theta_{t+1}=\theta_{t}-\eta_t\frac{m_t}{\sqrt{\hat v_t+\epsilon}}$}
\State{\textbf{end for}}
\end{algorithmic}
\end{algorithm}

\section{Convergence Analysis}\label{sec:theory}

For the convergence analysis of \algo\, we will make following additional assumptions.

\begin{assumption}\label{ass:smooth}(Smoothness)
For $\forall i \in [n]$, $f_i$ is  L-smooth: $\norm{\nabla f_i (\theta) - \nabla f_i (\vartheta)} \leq L \norm{\theta-\vartheta}$.
\end{assumption}

\begin{assumption}\label{ass:boundgrad}(Unbiased and bounded stochastic gradient)
For $\forall t >0$, $\forall i \in [n]$, the stochastic gradient is unbiased and uniformly bounded: $\EE[g_{t,i}] = \nabla f_i(\theta_t)$ and $\norm{g_{t,i}} \leq G$.
\end{assumption}

\begin{assumption}\label{ass:var}(Bounded variance)
For $\forall t >0$, $\forall i \in [n]$: (i) the \textbf{local variance} of the stochastic gradient is bounded: $\EE[\|g_{t,i} - \nabla f_i(\theta_t)\|^2] < \sigma^2$; (ii) the \textbf{global variance} is bounded by $\frac{1}{n}\sum_{i=1}^n\|\nabla f_i(\theta_t)-\nabla f(\theta_t)\|^2\leq \sigma_g^2$.
\end{assumption}

In Assumption~\ref{ass:boundgrad}, the uniform bound on the stochastic gradient is common in the convergence analysis of adaptive methods, e.g.,~\cite{reddi2019convergence,Arxiv:Zhou_18,Proc:Chen_ICLR19}. The global variance bound $\sigma_g^2$ in Assumption~\ref{ass:var} characterizes the difference among local objective functions, which, is mainly caused by different local data distribution $\mathcal X_i$ in \eqref{eq:opt}. In classical distributed setting where all the workers can access the same dataset and local data are assigned randomly, $\sigma_g^2\equiv 0$. While typical federated learning (FL) setting with $\sigma_g^2>0$ is not the focus of this present paper, we consider the global variance in our analysis to shed some light on the impact of non-i.i.d. data distribution in the federated setting for broader interest and future investigation.

We derive the following general convergence rate of \algo\ in the distributed setting. The proof is deferred to Appendix~\ref{app:proof}.

\begin{Theorem}  \label{theo:rate}
Denote $C_0=\sqrt{\frac{4(1+q^2)^3}{(1-q^2)^2}G^2+\epsilon}$, $C_1=\frac{\beta_1}{1-\beta_1}+\frac{2q}{1-q^2}$, $\theta^*=\argmin f(\theta)$ defined as (\ref{eq:opt}). Under Assumptions~\ref{ass:quant} to~\ref{ass:var}, with $\eta_t=\eta\leq \frac{\epsilon}{3C_0\sqrt{2L \max\{2L,C_1\}}}$, Algorithm~\ref{alg:sparsams} satisfies
\begin{align*}
    \frac{1}{T}\sum_{t=1}^T \mathbb E[\|\nabla f(\theta_t)\|^2]
    &\leq 2C_0\Big(\frac{\mathbb E[f(\theta_1)-f(\theta^*)]}{T\eta}+\frac{\eta L \sigma^2}{n\epsilon}+\frac{3\eta^2 LC_0C_1^2\sigma^2}{n\epsilon^2}  \\
    &\hspace{0.7in} + \frac{12\eta^2q^2LC_0\sigma_g^2}{(1-q^2)^2\epsilon^2}+\frac{ (1+C_1)G^2d}{T\sqrt\epsilon}+\frac{\eta (1+2C_1)C_1LG^2d}{T\epsilon} \Big).
\end{align*}
\end{Theorem}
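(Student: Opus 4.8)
The plan is to combine the standard error-feedback ``virtual iterate'' technique with an AMSGrad-style descent analysis, treating the adaptive preconditioner $\frac{1}{\sqrt{\hat v_t+\epsilon}}$ as a slowly-varying, uniformly bounded diagonal scaling. First I would establish two preliminary bounds. (i) \emph{The error accumulator.} Observing that line~8 of Algorithm~\ref{alg:sparsams} gives $e_{t+1,i}=(g_{t,i}+e_{t,i})-\mathcal C(g_{t,i}+e_{t,i})$, Assumption~\ref{ass:quant} yields $\norm{e_{t+1,i}}\le q\norm{g_{t,i}+e_{t,i}}$. Applying Young's inequality to $\norm{g_{t,i}+e_{t,i}}^2$ and summing the resulting geometric recursion, together with $\norm{g_{t,i}}\le G$ from Assumption~\ref{ass:boundgrad}, bounds $\mathbb E\norm{e_{t,i}}^2$ by a quantity of order $\frac{q^2}{(1-q^2)^2}G^2$. (ii) \emph{The preconditioner.} The same boundedness propagates to $\norm{\tilde g_{t,i}}$ and hence to $\norm{\bar g_t}$; since $\hat v_t=\max(v_t,\hat v_{t-1})$ is a coordinatewise maximum of such bounded quantities, every coordinate of $\sqrt{\hat v_t+\epsilon}$ lies in $[\sqrt\epsilon,\,C_0]$, which is exactly the role of $C_0=\sqrt{\tfrac{4(1+q^2)^3}{(1-q^2)^2}G^2+\epsilon}$.

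The central device is the error-feedback telescoping identity. Writing $\bar e_t=\frac1n\sum_i e_{t,i}$ and $\bar g_t^{\mathrm f}=\frac1n\sum_i g_{t,i}$ for the \emph{true} averaged gradient, averaging line~8 over the $n$ workers gives $\bar g_t=\bar g_t^{\mathrm f}+\bar e_t-\bar e_{t+1}$, so the compressed average equals the true average up to a telescoping error. I would then introduce an auxiliary sequence $\tilde\theta_t$ that simultaneously (a) unrolls the heavy-ball momentum $m_t=\beta_1 m_{t-1}+(1-\beta_1)\bar g_t$ via a correction proportional to $\frac{\beta_1}{1-\beta_1}$, and (b) folds in the accumulated error through a term built from $\bar e_t$. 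The purpose of $\tilde\theta_t$ is to make its one-step increment driven by the preconditioned \emph{true} gradient rather than the compressed, momentum-smoothed one. The two corrections are precisely what the constant $C_1=\frac{\beta_1}{1-\beta_1}+\frac{2q}{1-q^2}$ records: the first summand is the momentum contribution and the second the compression contribution, the latter coming from $\norm{\bar e_t}\lesssim\frac{q}{1-q^2}G$.

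Next I would run the descent lemma on $\tilde\theta_t$. By $L$-smoothness (Assumption~\ref{ass:smooth}),
\[
\mathbb E[f(\tilde\theta_{t+1})]\le \mathbb E[f(\tilde\theta_t)]+\mathbb E\big\langle\nabla f(\tilde\theta_t),\,\tilde\theta_{t+1}-\tilde\theta_t\big\rangle+\tfrac{L}{2}\mathbb E\norm{\tilde\theta_{t+1}-\tilde\theta_t}^2.
\]
The inner-product term is converted into $-\,\Theta(\eta)\,\mathbb E\norm{\nabla f(\theta_t)}^2$ by replacing $\nabla f(\tilde\theta_t)$ with $\nabla f(\theta_t)$ (the mismatch is $O(\eta)$ by smoothness, producing the $\eta^2$ terms), by using unbiasedness $\mathbb E[g_{t,i}]=\nabla f_i(\theta_t)$ with $\frac1n\sum_i\nabla f_i=\nabla f$, and by lower-bounding the preconditioner by $\sqrt\epsilon$. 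Averaging the stochastic gradients over $n$ independent workers contracts the local-variance contribution by $1/n$, which is the source of the $\frac{\sigma^2}{n}$ terms and hence of the claimed linear speedup; the interaction of compression with gradient heterogeneity produces the $\frac{q^2\sigma_g^2}{(1-q^2)^2}$ term via Assumption~\ref{ass:var}(ii). Summing over $t=1,\dots,T$, telescoping $f(\tilde\theta_1)-f(\tilde\theta_{T+1})\le f(\theta_1)-f(\theta^*)$, and telescoping the coordinatewise-monotone preconditioner differences (bounded via $\hat v_t\succeq\hat v_{t-1}$ and $\norm{\cdot}\le G$, yielding the $O(\tfrac{G^2 d}{T})$ boundary terms), then dividing by $\eta T$ and multiplying by $2C_0$ gives the stated bound. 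The step-size restriction $\eta\le\frac{\epsilon}{3C_0\sqrt{2L\max\{2L,C_1\}}}$ is exactly the threshold that keeps the coefficient of $\frac1T\sum_t\mathbb E\norm{\nabla f(\theta_t)}^2$ positive after the $\frac{L}{2}\eta^2$ self-terms are absorbed.

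The hard part will be the coupling between the auxiliary-sequence construction and the preconditioner handling: because $\frac{1}{\sqrt{\hat v_t+\epsilon}}$ is time-varying and data-dependent, the key inner product $\big\langle\nabla f(\theta_t),\,\bar g_t^{\mathrm f}/\sqrt{\hat v_t+\epsilon}\big\rangle$ cannot be turned directly into $\norm{\nabla f(\theta_t)}^2$; one must carefully sandwich the preconditioner between $\sqrt\epsilon$ and $C_0$, control the cross terms generated jointly by momentum unrolling and error accumulation, and account for the drift $\hat v_t\ne\hat v_{t-1}$ without destroying the telescoping. Keeping all three bias sources---momentum, compression error, and preconditioner drift---simultaneously under control while preserving the $1/n$ variance reduction is the crux of the argument.
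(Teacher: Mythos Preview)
Your proposal is correct and follows essentially the same route as the paper: the paper builds two nested virtual iterates (first $\theta_t'=\theta_t-\eta\,\mathcal E_t/\sqrt{\hat v_{t-1}+\epsilon}$ with the momentum-weighted error $\mathcal E_t=(1-\beta_1)\sum_\tau\beta_1^{t-\tau}\bar e_\tau$, then $x_t=\theta_t'-\eta\tfrac{\beta_1}{1-\beta_1}m_{t-1}'/\sqrt{\hat v_{t-1}+\epsilon}$), applies smoothness to $x_t$, and bounds the resulting four terms exactly via the sandwich $\sqrt\epsilon\le\sqrt{\hat v_t+\epsilon}\le C_0$ and the telescoping sums $\sum_t\|D_t\|_1\le d/\sqrt\epsilon$, $\sum_t\|D_t\|^2\le d/\epsilon$ for $D_t=(\hat v_{t-1}+\epsilon)^{-1/2}-(\hat v_t+\epsilon)^{-1/2}$. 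The one point to make explicit when you execute it is that the shift $\hat v_t\to\hat v_{t-1}$ in the main inner product is forced by \emph{measurability} (since $\hat v_t$ depends on the step-$t$ randomness you cannot pull $\mathbb E[\bar g_t^{\mathrm f}]=\nabla f(\theta_t)$ through $1/\sqrt{\hat v_t+\epsilon}$ directly), and that to extract the $-\tfrac{\eta}{C_0}\|\nabla f(\theta_t)\|^2$ term you need the \emph{upper} bound $C_0$ on the denominator rather than the lower bound $\sqrt\epsilon$.
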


The LHS of Theorem~\ref{theo:rate} is the expected squared norm of the gradient from a uniformly chosen iterate $t\in [T]$, which is a common convergence measure in non-convex optimization. From Theorem~\ref{theo:rate}, we see that the more compression we apply to the gradient vectors (i.e.,~larger $q$), the larger the gradient magnitude is, i.e.,~the slower the algorithm converges. This is intuitive as heavier compression loses more gradient information which would slower down the learner to find a good solution.

Note that, \algo\ with $n=1$ naturally reduces to the single-machine (sequential) AMSGrad (Algorithm~\ref{alg:amsgrad}) with compressed gradients instead of full-precision ones.~\cite{karimireddy2019error} specifically analyzed this case for SGD, showing that compressed single-machine SGD with error feedback has the same convergence rate as vanilla SGD using full gradients. In alignment with the conclusion in~\citet{karimireddy2019error}, for adaptive AMSGrad, we have a similar result.

\vspace{0.1in}

\begin{Corollary}\label{coro:mainsingle}
When $n=1$, under Assumption~\ref{ass:quant} to Assumption~\ref{ass:var}, setting the stepsize as $\eta = \min\{\frac{\epsilon}{3C_0\sqrt{2L \max\{2L,C_1\}}}, \frac{1}{\sqrt{\maxiter}}\}$, Algorithm~\ref{alg:sparsams} satisfies
\begin{align*}
    \frac{1}{T}\sum_{t=1}^T \mathbb E[\|\nabla f(\theta_t)\|^2]\leq \mathcal O(\frac{1}{\sqrt{T}}+ \frac{\sigma^2}{\sqrt T}+\frac{d}{T}).
\end{align*}
\end{Corollary}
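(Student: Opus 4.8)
The plan is to obtain Corollary~\ref{coro:mainsingle} as a direct specialization of Theorem~\ref{theo:rate} to $n=1$, substituting the prescribed step size and tracking only the dependence on $T$ while treating the problem-dependent quantities $q$, $G$, $\epsilon$, $L$, $\beta_1$ (equivalently the constants $C_0$ and $C_1$) as $\mathcal O(1)$.

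First I would note that with a single worker the objective in \eqref{eq:opt} reduces to $f\equiv f_1$, so $\nabla f_i(\theta_t)-\nabla f(\theta_t)=0$ and hence $\sigma_g^2=0$ in Assumption~\ref{ass:var}(ii). This removes the global-variance term $\frac{12\eta^2 q^2 L C_0\sigma_g^2}{(1-q^2)^2\epsilon^2}$ from the bound of Theorem~\ref{theo:rate}, and setting $n=1$ in the remaining five terms leaves a bound whose summands scale (up to the prefactor $2C_0$ and the $\sigma^2$, $d$ dependence) as $\frac{1}{T\eta}$, $\eta$, $\eta^2$, $\frac{1}{T}$, and $\frac{\eta}{T}$.

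Next I would substitute $\eta=\min\{\frac{\epsilon}{3C_0\sqrt{2L\max\{2L,C_1\}}},\frac{1}{\sqrt{T}}\}$. The first entry is precisely the admissibility ceiling required by Theorem~\ref{theo:rate}, so for all $T$ large enough the min equals $\frac{1}{\sqrt{T}}$ and the theorem applies verbatim. Plugging $\eta=\frac{1}{\sqrt{T}}$ into the five surviving summands gives orders $\frac{1}{\sqrt{T}}$ (optimality gap), $\frac{\sigma^2}{\sqrt{T}}$ (local variance), $\frac{\sigma^2}{T}$, $\frac{d}{T}$, and $\frac{d}{T^{3/2}}$ respectively; the third and fifth are dominated by $\frac{\sigma^2}{\sqrt{T}}$ and $\frac{d}{T}$, so collecting the leading contributions and absorbing $2C_0$ and the other fixed constants into the $\mathcal O(\cdot)$ yields $\mathcal O(\frac{1}{\sqrt{T}}+\frac{\sigma^2}{\sqrt{T}}+\frac{d}{T})$, as claimed.

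There is no deep obstacle: the result is a bookkeeping specialization of Theorem~\ref{theo:rate}. The only points needing care are the handling of the $\min$ in the step size---one must check both that the constant branch cannot spoil the $\frac{1}{\sqrt{T}}$ rate (it cannot, since $\frac{1}{T}\le\frac{1}{\sqrt{T}}$ makes the optimality-gap term $\frac{1}{T\eta}$ at most $\frac{1}{\sqrt{T}}$ in either branch) and that $\eta=\frac{1}{\sqrt{T}}$ indeed respects the admissibility constraint for large $T$ so that Theorem~\ref{theo:rate} is legitimately invoked---together with keeping $C_0$, $C_1$, $L$, $\epsilon$, $G$ fixed so their polynomial combinations are $\mathcal O(1)$, which is what licenses discarding the $\frac{\sigma^2}{T}$ and $\frac{d}{T^{3/2}}$ remainders.
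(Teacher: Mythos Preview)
Your proposal is correct and is exactly the intended derivation: the paper does not give a separate proof of Corollary~\ref{coro:mainsingle}, treating it as the immediate specialization of Theorem~\ref{theo:rate} to $n=1$ (where $\sigma_g^2=0$) with the stated step size, just as you do. Your bookkeeping of the five surviving terms and the handling of the $\min$ in $\eta$ (using $\eta\le 1/\sqrt{T}$ for the $\eta$-terms and $1/\eta=\mathcal O(\sqrt{T})$ for the $1/(T\eta)$ term, with problem constants absorbed into $\mathcal O(\cdot)$) is the standard argument.
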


Corollary~\ref{coro:mainsingle} states that with error feedback, single machine AMSGrad with biased compressed gradients can also match the convergence rate $\mathcal O(\frac{1}{\sqrt{T}}+\frac{d}{T})$ of standard AMSGrad~\citep{Arxiv:Zhou_18} in non-convex optimization. It also achieves the same rate $\mathcal O(\frac{1}{\sqrt T})$ of vanilla SGD~\citep{karimireddy2019error} when $T$ is sufficiently large. In other words, error feedback also fixes the convergence issue of using compressed gradients in AMSGrad.

\vspace{0.05in}
\textbf{Linear Speedup.} In Theorem~\ref{theo:rate}, the convergence rate is derived by assuming a constant learning rate. By carefully choosing a decreasing learning rate dependent on the number of workers, we have the following simplified statement.

\begin{Corollary}\label{coro:linear speedup}
Under the same setting as Theorem~\ref{theo:rate}, set $\eta = \min\{\frac{\epsilon}{3C_0\sqrt{2L \max\{2L,C_1\}}}, \frac{\sqrt n}{\sqrt{\maxiter}}\}$. The \algo\ iterates admit
\begin{align}
    \frac{1}{T}\sum_{t=1}^T \mathbb E[\|\nabla f(\theta_t)\|^2]\leq \mathcal O(\frac{1}{\sqrt{nT}}+\frac{\sigma^2}{\sqrt{nT}}+\frac{n(\sigma^2+\sigma_g^2)}{T}).  \label{label:eq:linear speedup}
\end{align}
\end{Corollary}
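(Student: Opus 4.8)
The plan is to derive Corollary~\ref{coro:linear speedup} directly from the general bound in Theorem~\ref{theo:rate} by substituting the prescribed step size and tracking only the dependence on $n$ and $T$, treating $C_0$, $C_1$, $L$, $\epsilon$, $G$, $d$, $q$, and the initial suboptimality $\mathbb E[f(\theta_1)-f(\theta^*)]$ as $\mathcal O(1)$ constants while keeping $\sigma^2$ and $\sigma_g^2$ explicit. Write $\bar\eta=\frac{\epsilon}{3C_0\sqrt{2L\max\{2L,C_1\}}}$ for the step-size ceiling required by Theorem~\ref{theo:rate}. The choice $\eta=\min\{\bar\eta,\sqrt n/\sqrt T\}$ automatically satisfies $\eta\le\bar\eta$, so the theorem applies verbatim for every $T$; for $T$ large enough that $\sqrt n/\sqrt T\le\bar\eta$ (equivalently $T\ge n/\bar\eta^2$) the minimum is attained at $\eta=\sqrt n/\sqrt T$, which is the regime in which the clean rate emerges. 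I will additionally use $n\le T$, which likewise holds once $T$ is large.

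Next I would substitute $\eta=\sqrt n/\sqrt T$ (so $\eta^2=n/T$) into each of the six terms inside the parentheses of Theorem~\ref{theo:rate}. The first term $\frac{\mathbb E[f(\theta_1)-f(\theta^*)]}{T\eta}$ becomes $\mathcal O(\frac{1}{\sqrt{nT}})$; the second term $\frac{\eta L\sigma^2}{n\epsilon}$ becomes $\mathcal O(\frac{\sigma^2}{\sqrt{nT}})$; the third $\frac{3\eta^2 LC_0C_1^2\sigma^2}{n\epsilon^2}$ becomes $\mathcal O(\frac{\sigma^2}{T})$ (the factor $\eta^2$ cancels the $1/n$); the fourth $\frac{12\eta^2 q^2 LC_0\sigma_g^2}{(1-q^2)^2\epsilon^2}$ becomes $\mathcal O(\frac{n\sigma_g^2}{T})$; the fifth $\frac{(1+C_1)G^2d}{T\sqrt\epsilon}$ is $\mathcal O(\frac{1}{T})$; and the sixth $\frac{\eta(1+2C_1)C_1 LG^2 d}{T\epsilon}$ is $\mathcal O(\frac{\sqrt n}{T^{3/2}})$. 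The leading multiplicative factor $2C_0$ is a constant and is absorbed into the $\mathcal O(\cdot)$.

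Finally I would collapse these into the stated three-term bound by discarding the dominated contributions. Since $n\le T$ in this regime, $\frac{1}{T}\le\frac{1}{\sqrt{nT}}$ and $\frac{\sqrt n}{T^{3/2}}=\frac{1}{\sqrt{nT}}\cdot\frac{n}{T}\le\frac{1}{\sqrt{nT}}$, so the fifth and sixth terms are absorbed into the first. Since $n\ge 1$, the third term obeys $\frac{\sigma^2}{T}\le\frac{n\sigma^2}{T}\le\frac{n(\sigma^2+\sigma_g^2)}{T}$ and is absorbed into the fourth. What remains is exactly $\mathcal O(\frac{1}{\sqrt{nT}}+\frac{\sigma^2}{\sqrt{nT}}+\frac{n(\sigma^2+\sigma_g^2)}{T})$, as claimed.

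I expect the only genuinely delicate point, rather than a true obstacle, to be the correct handling of the $\min$: one must simultaneously verify that the step-size constraint of Theorem~\ref{theo:rate} holds for all $T$ (so the bound is valid) and that $T$ is large enough for the $\eta=\sqrt n/\sqrt T$ branch to be active (so the $\frac{1}{\sqrt{nT}}$ and $\frac{n}{T}$ scalings actually appear). The resulting $n\le T$ is precisely what lets the $d$-dependent terms be subsumed, which is why the final statement carries no explicit $\frac{d}{T}$ term.
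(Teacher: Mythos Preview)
Your proposal is correct and is precisely the intended derivation: the paper does not give a separate proof of Corollary~\ref{coro:linear speedup} but treats it as an immediate consequence of plugging the stated $\eta$ into Theorem~\ref{theo:rate}, which is exactly what you do. Your term-by-term accounting and the absorptions of the $\mathcal O(1/T)$, $\mathcal O(\sqrt n/T^{3/2})$, and $\mathcal O(\sigma^2/T)$ contributions are all sound, and your remark that the rate is meant for the regime $T\ge n/\bar\eta^2$ (hence $n\le T$) is the right way to interpret the big-$\mathcal O$ here.
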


In Corollary~\ref{coro:linear speedup}, we see that the global variance $\sigma_g^2$ appears in the $\mathcal O(\frac{1}{T})$ term, which says that it asymptotically has no impact on the convergence. This matches the result of momentum SGD~\citep{Proc:Yu_ICML19}. When $T\geq \mathcal O(n^3)$ is sufficiently large, the third term in \eqref{label:eq:linear speedup} vanishes, and the convergence rate becomes $\mathcal O(\frac{1}{\sqrt{nT}})$. Therefore, to reach an $\mathcal O(\delta)$ stationary point, one worker ($n=1$) needs $T=\mathcal O(\frac{1}{\delta^2})$ iterations, while distributed training with $n$ workers requires only $T=\mathcal O(\frac{1}{n\delta^2})$ iterations, which is $n$ times faster than single machine training. That is, \algo\ has a linear speedup in terms of the number of the local workers. Such acceleration effect has also been reported for compressed SGD~\citep{jiang2018linear,Proc:Zheng_NIPS19} and momentum SGD~\citep{Proc:Yu_ICML19} with error feedback.

\section{Experiments}\label{sec:experiment}

In this section, we provide numerical results on several common datasets. Our main objective is to validate the theoretical results, and demonstrate that the proposed \algo\ can approach the learning performance of full-precision AMSGrad with significantly reduced communication costs.

\subsection{Datasets, Models and Methods}

Our experiments are conducted on various image and text datasets. The MNIST~\citep{mnist} contains 60000 training samples of $28\times 18$ gray-scale hand-written digits from 10 classes, and 10000 test samples. We train MNIST with a Convolutional Neural Network (CNN), which has two convolutional layers followed by two fully connected layers with ReLu activation. Dropout is applied after the max-pooled convolutional layer with rate 0.5. The CIFAR-10 dataset~\citep{cifar} consists of 50000 $32\times 32$ RGB natural images from 10 classes for training and 10000 images for testing, which is trained by LeNet-5~\citep{mnist}. Moreover, we also implement ResNet-18~\citep{Proc:Resnet_CVPR16} on this dataset. The IMDB movie review~\citep{imdb} is a popular binary classification dataset for sentiment analysis. Each movie review is tokenized by top-2000 most frequently appeared words and transformed into integer vectors, which is of maximal length 500. We train a Long-Short Term Memory (LSTM) network with a 32-dimensional embedding layer and 64 LSTM cells, followed by two fully connected layers before output. Cross-entropy loss is used for all the tasks. Following the classical distributed training setting, in each training iteration, data samples are uniformly randomly assigned to the workers.

We compare \algo\ with full-precision distributed AMSGrad, QAdam~\citep{Arxiv:QAdam} and 1BitAdam~\citep{Proc:1bitAdam}. For \algo, \textbf{Top-$k$} picks top 1\% gradient coordinates (i.e.,~sparsity 0.01). QAdam and 1BitAdam both use 1-bit quantization to achieve high compression. For MNIST and CIFAR-10, the local batch size on each worker is set to be 32. For IMDB, the local batch size is 16. The hyper-parameters in \algo\ are set as default $\beta_1=0.9$, $\beta_2=0.999$ and $\epsilon=10^{-8}$, which are also used for QAdam and 1BitAdam. For 1BitAdam, the epochs for warm-up training is set to be $1/20$ of the total epochs. For all methods, we tune the initial learning rate over a fine grid (see Appendix~\ref{app sec:resnet}) and report the best results averaged over three independent runs. Our experiments are performed on a GPU cluster with NVIDIA Tesla V100 cards.

\subsection{General Evaluation and Communication Efficiency}

The training loss and test accuracy on MNIST + CNN, CIFAR-10 + LeNet and IMDB + LSTM are reported in Figure~\ref{fig:loss acc}. We provide more results on larger ResNet-18 model in Appendix~\ref{app sec:resnet}. On CIFAR-10, we deploy a popular decreasing learning rate schedule, where the step size $\eta$ is divided by 10 at the 40-th and 80-th epoch, respectively. We observe:
\begin{itemize}
    \item On MNIST, all the methods can approach the training loss and test accuracy of full-precision AMSGrad. The 1BitAdam seems slightly better, but the gap is very small. On CIFAR-10, \algo\ with \textbf{Block-Sign} performs the best and matches AMSGrad in terms of test accuracy.

    \item On IMDB, \algo\ with \textbf{Top-$k$} has both the fastest convergence and best generalization compared with other compressed methods. This is because the IMDB text data is more sparse (with many padded zeros), where \textbf{Top-$k$} is expected to work better than sign. The 1BitAdam converges slowly. We believe one possible reason is that 1BitAdam is quite sensitive to the quality of the warm-up training. For sparse text data, the estimation of second moment $v$ is more unstable, making the strategy of freezing $v$ by warm-up less effective.
\end{itemize}

\newpage

\textbf{Communication Efficiency.} In Figure~\ref{fig:communication}, we plot the training loss and test accuracy against the number of bits transmitted to the central server during the distributed training process, where we assume that the full-precision gradient is represented using 32 bits per floating number. As we can see, \algo-\textbf{Top-$0.01$} achieves around 100x communication reduction, to attain similar accuracy as the full-precision distributed AMSGrad. The saving of \textbf{Block-Sign} is around 30x, but it gives slightly higher accuracy than \textbf{Top-$0.01$} on MNIST and CIFAR-10. In all cases, \algo\ can substantially reduce the communication cost compared with full-precision distributed AMSGrad, without losing accuracy.

\begin{figure}[t]
    \begin{center}
        \mbox{\hspace{-0.2in}
        \includegraphics[width=2in]{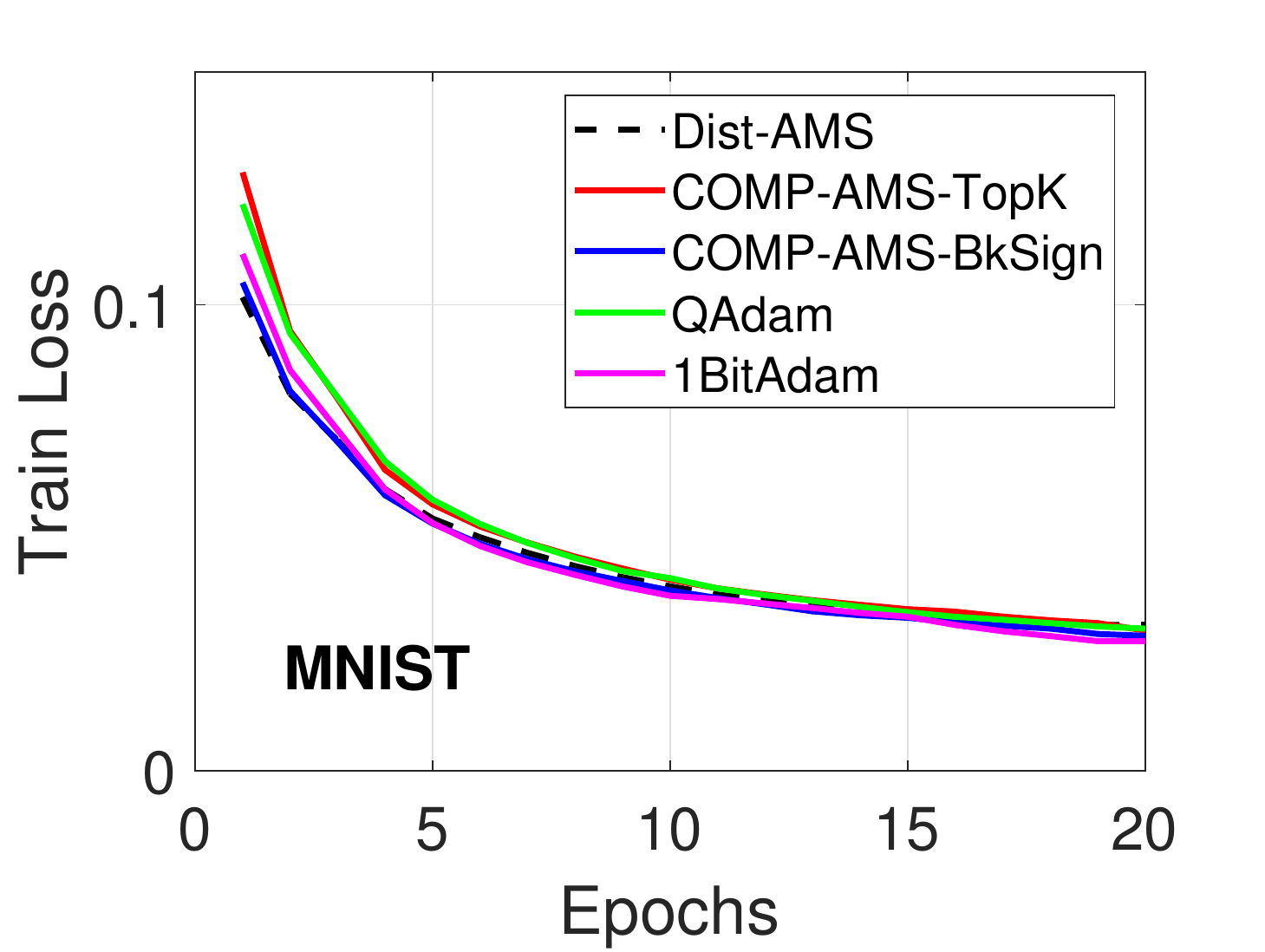}\hspace{-0.1in}
        \includegraphics[width=2in]{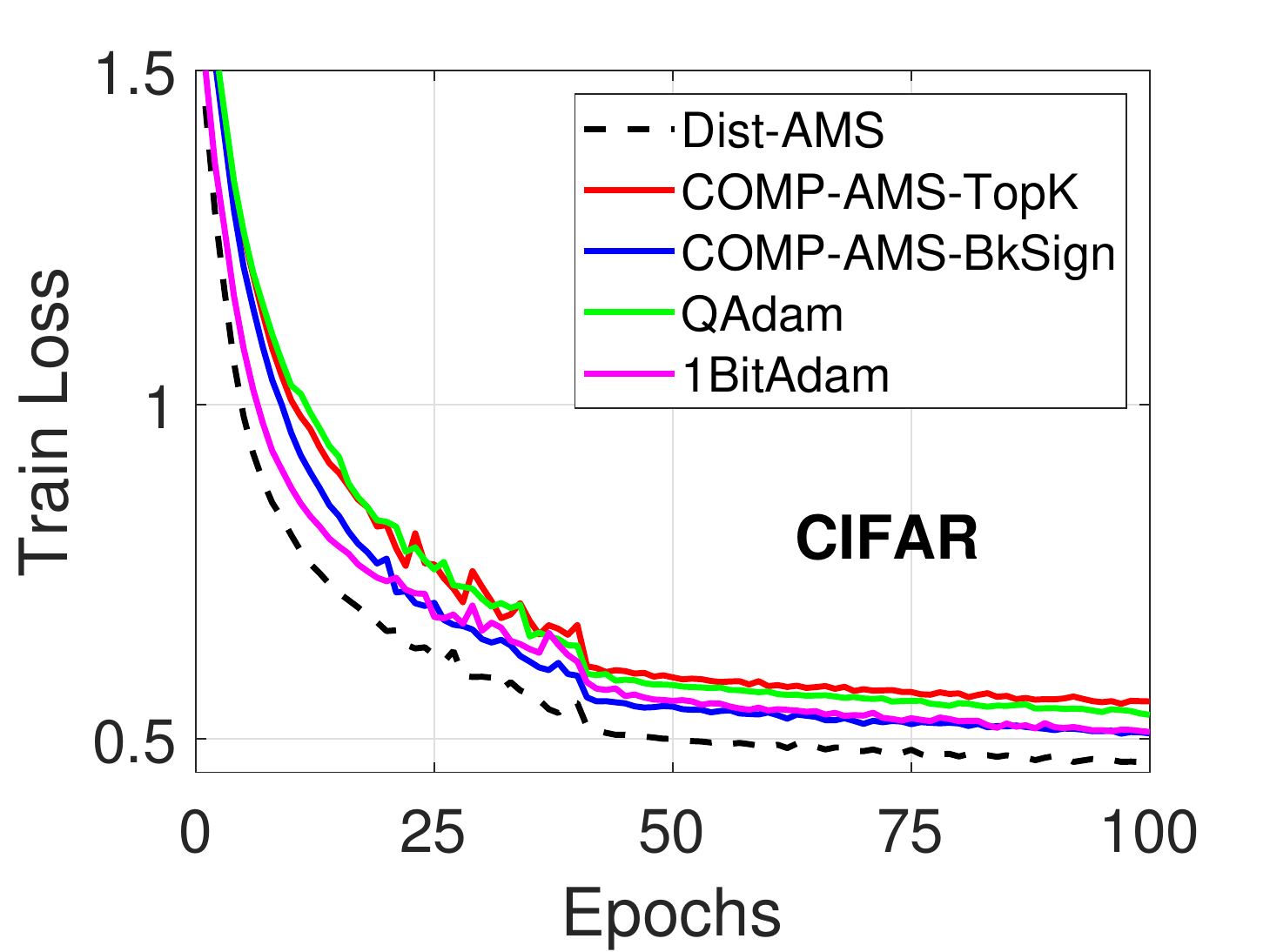}\hspace{-0.1in}
        \includegraphics[width=2in]{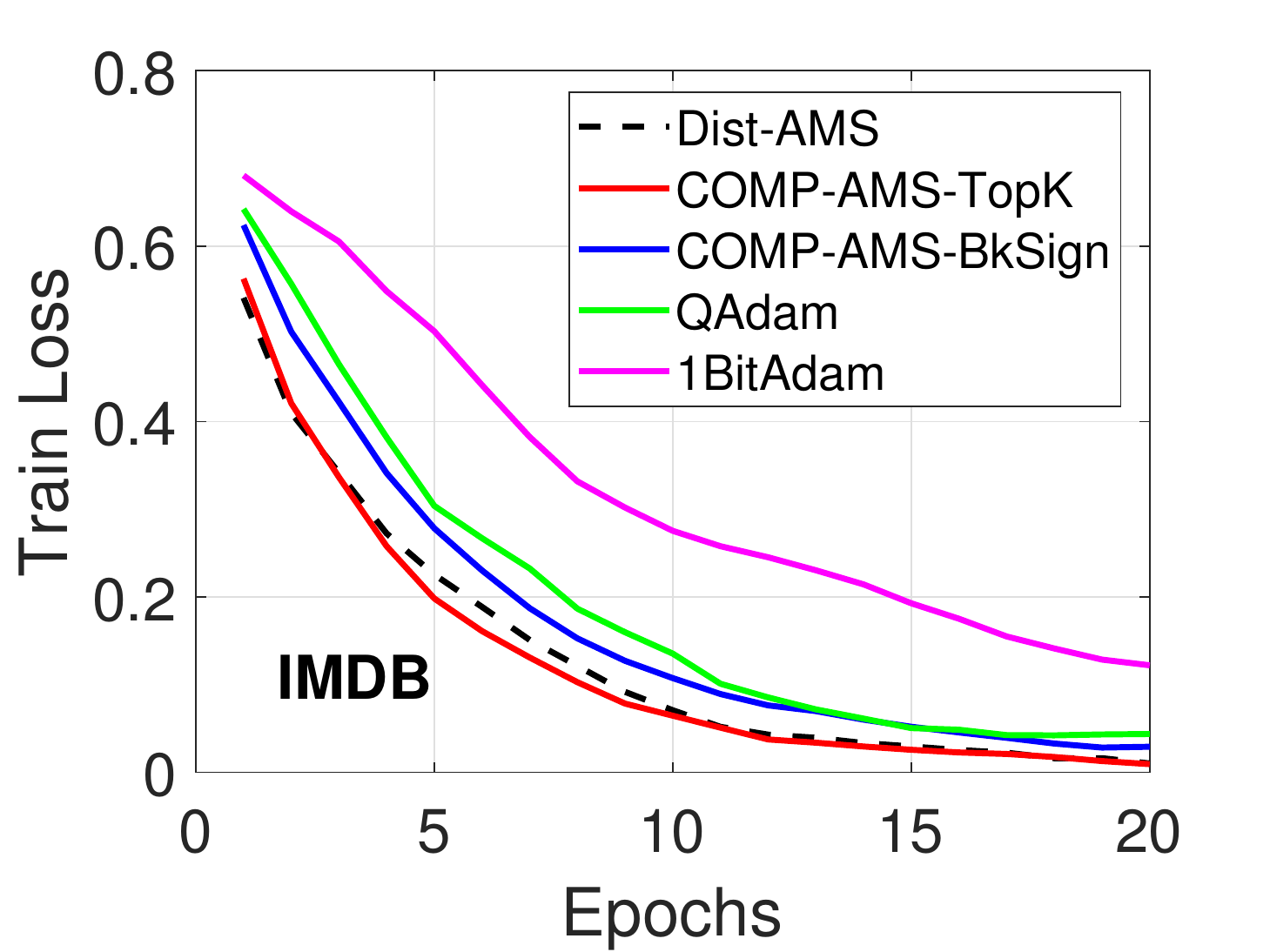}
        }
        \mbox{\hspace{-0.2in}
        \includegraphics[width=2in]{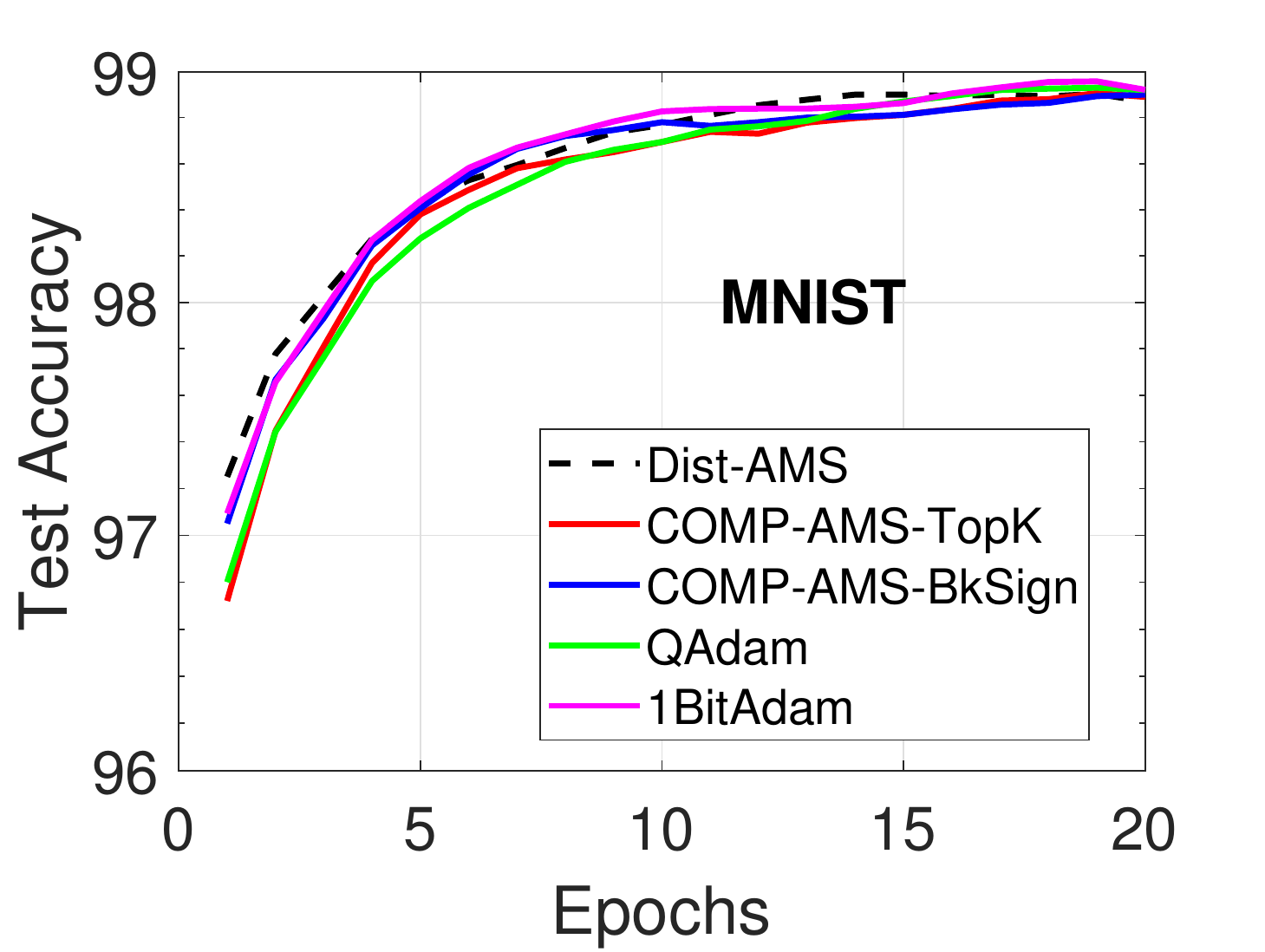}\hspace{-0.1in}
        \includegraphics[width=2in]{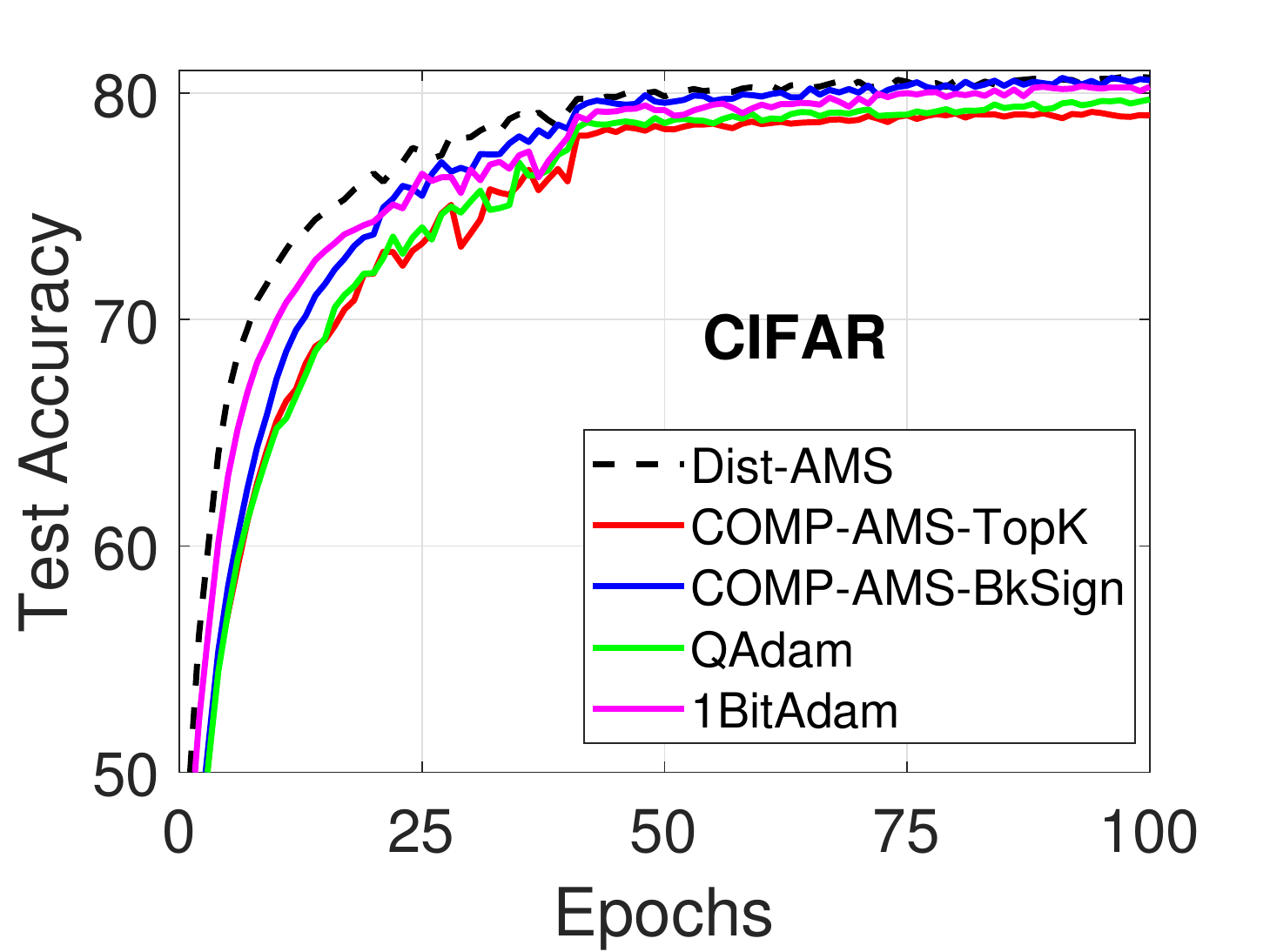}\hspace{-0.1in}
        \includegraphics[width=2in]{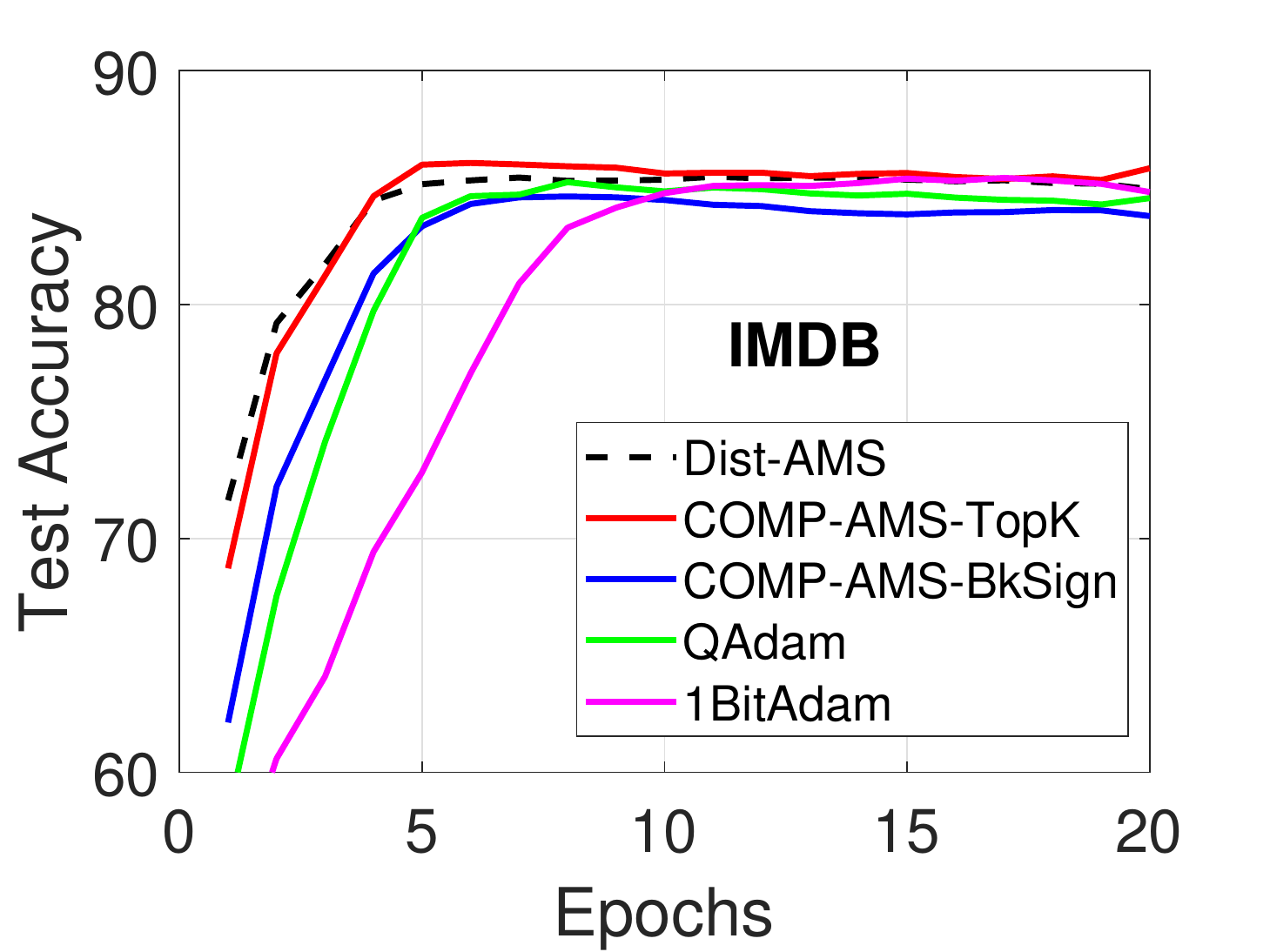}
        }
    \end{center}

	\caption{Training loss and test accuracy vs. epochs, on MNIST + CNN, CIFAR-10 + LeNet and IMDB + LSTM with $n=16$ local workers.}
	\label{fig:loss acc}
\end{figure}

\begin{figure}[h]
    \begin{center}
        \mbox{\hspace{-0.2in}
        \includegraphics[width=2in]{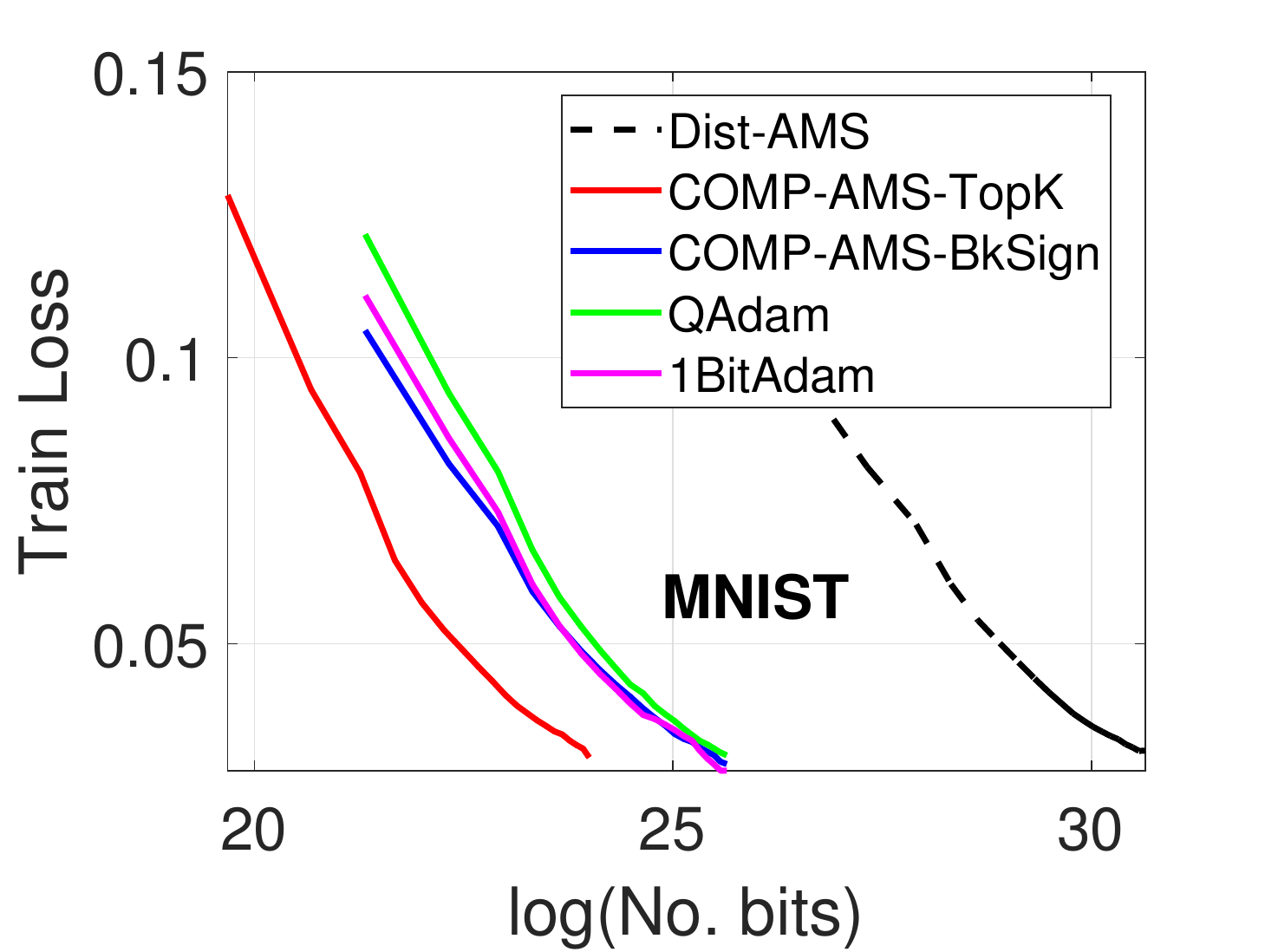}\hspace{-0.1in}
        \includegraphics[width=2in]{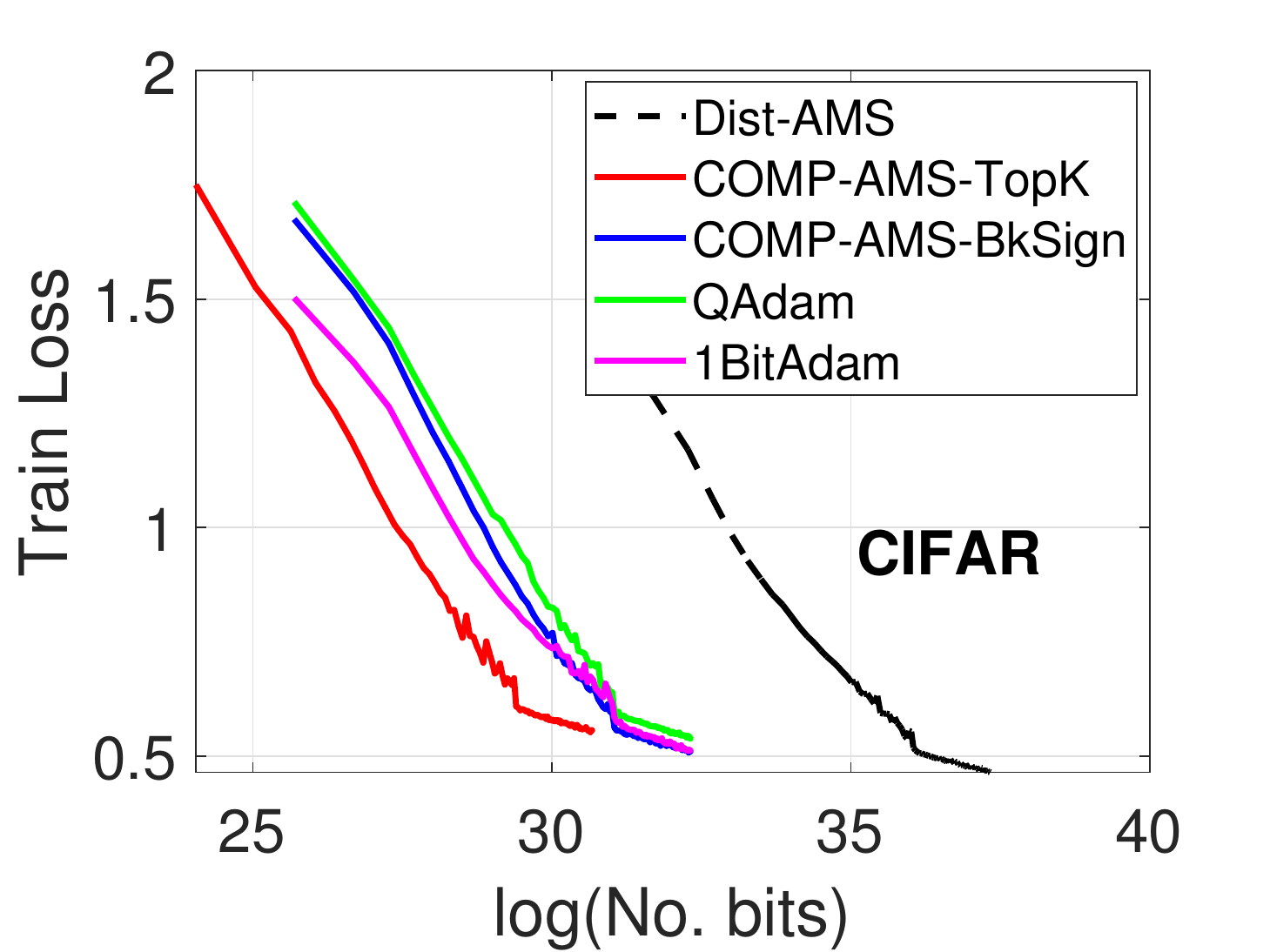}\hspace{-0.1in}
        \includegraphics[width=2in]{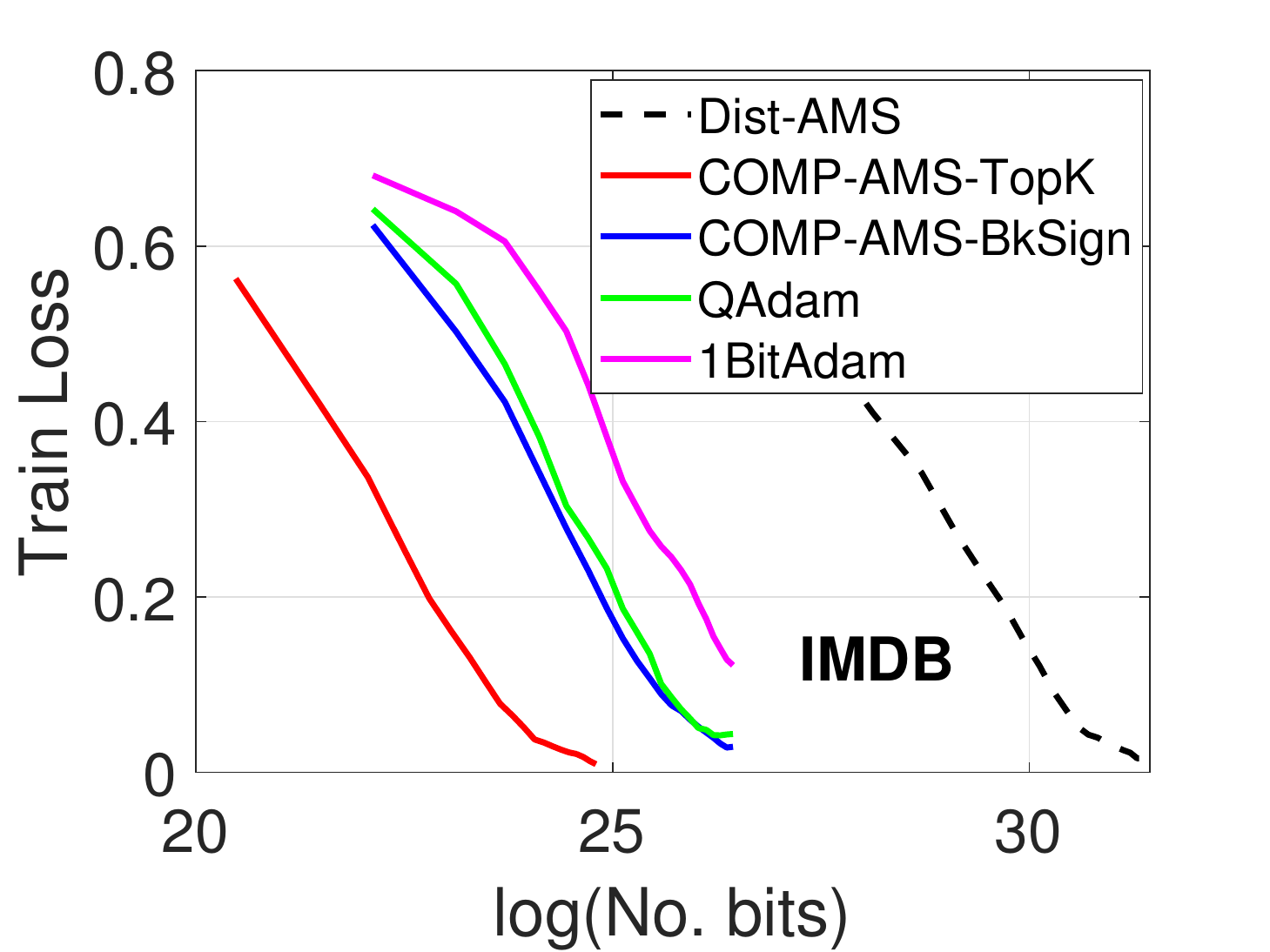}
        }
        \mbox{\hspace{-0.2in}
        \includegraphics[width=2in]{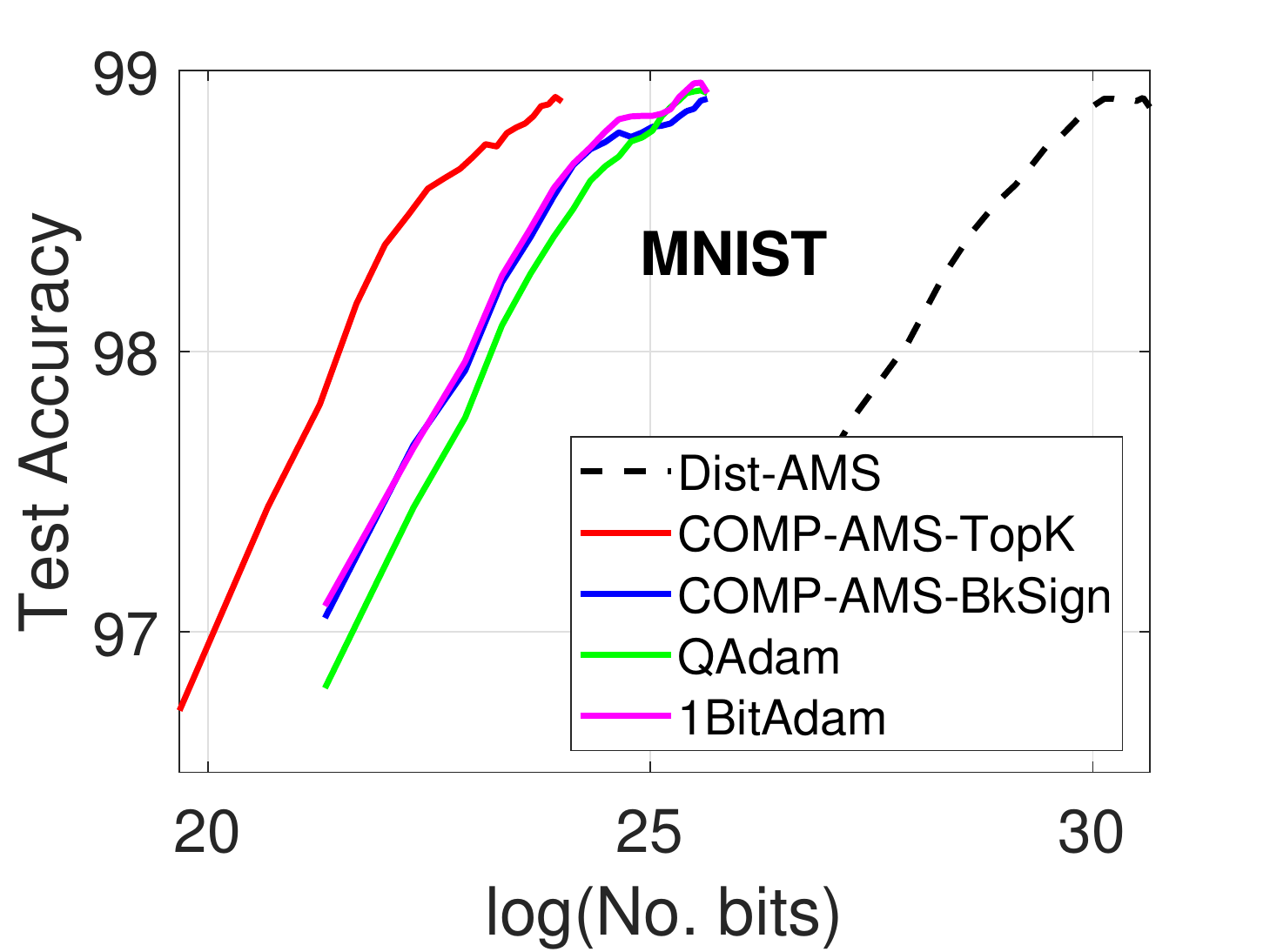}\hspace{-0.1in}
        \includegraphics[width=2in]{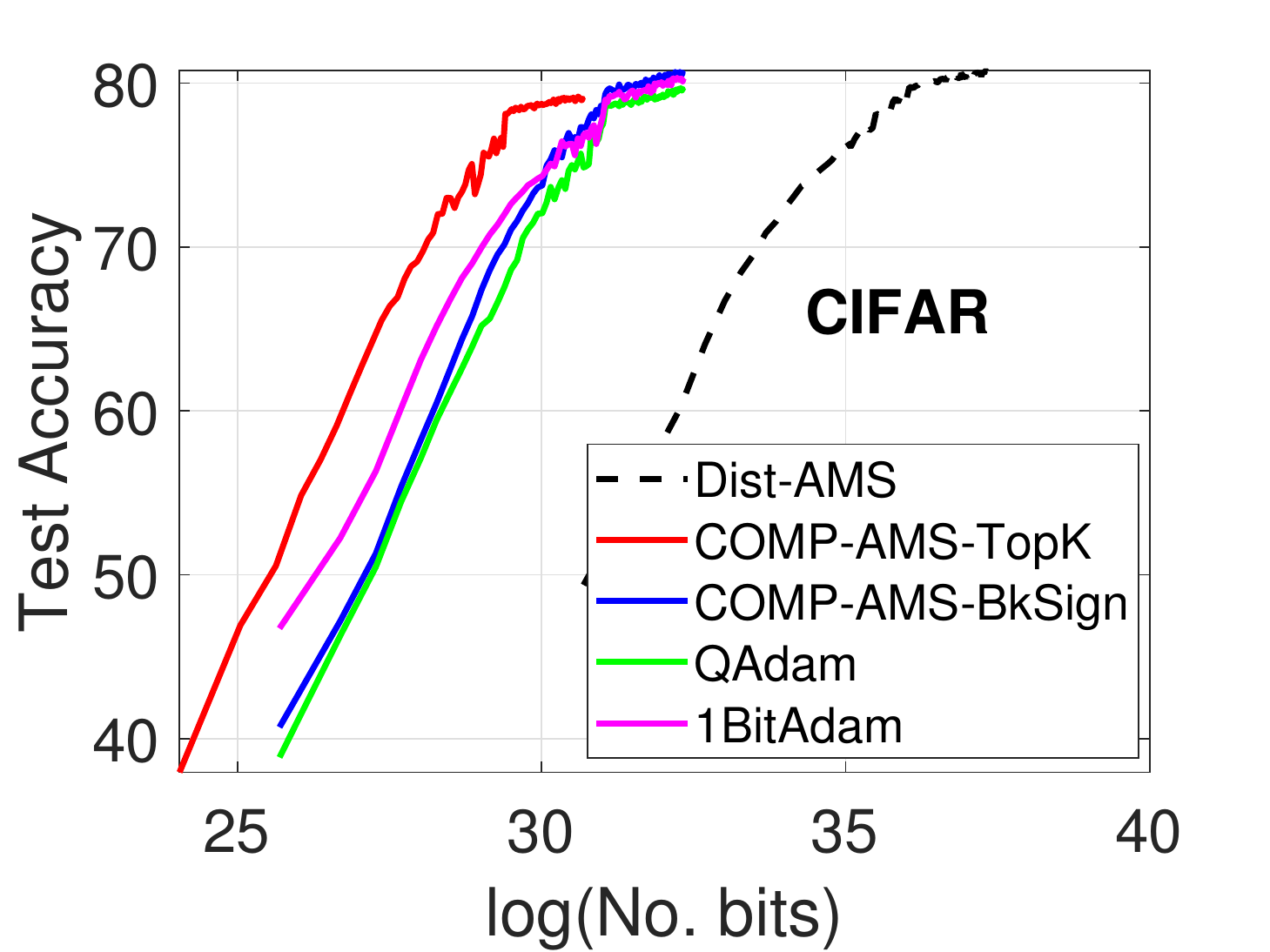}\hspace{-0.1in}
        \includegraphics[width=2in]{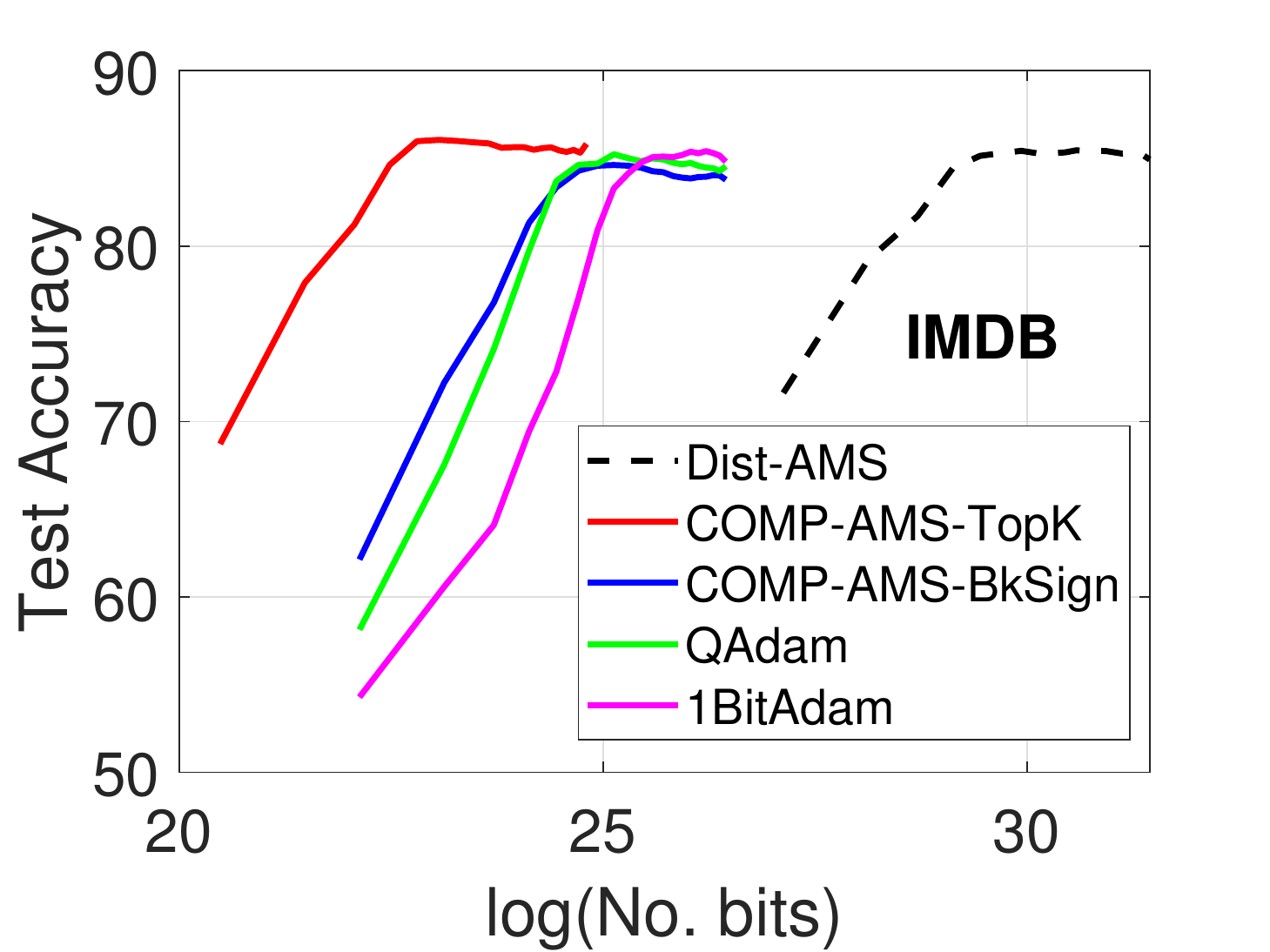}
        }
    \end{center}

	\caption{Train loss and Test accuracy vs. No. bits transmitted, on MNIST + CNN, CIFAR-10 + LeNet and IMDB + LSTM with $n=16$ local workers.}
	\label{fig:communication}
\end{figure}

\subsection{Linear Speedup of \algo}

\begin{figure}[ht]
  \begin{center}
   \mbox{
    \includegraphics[width=2.3in]{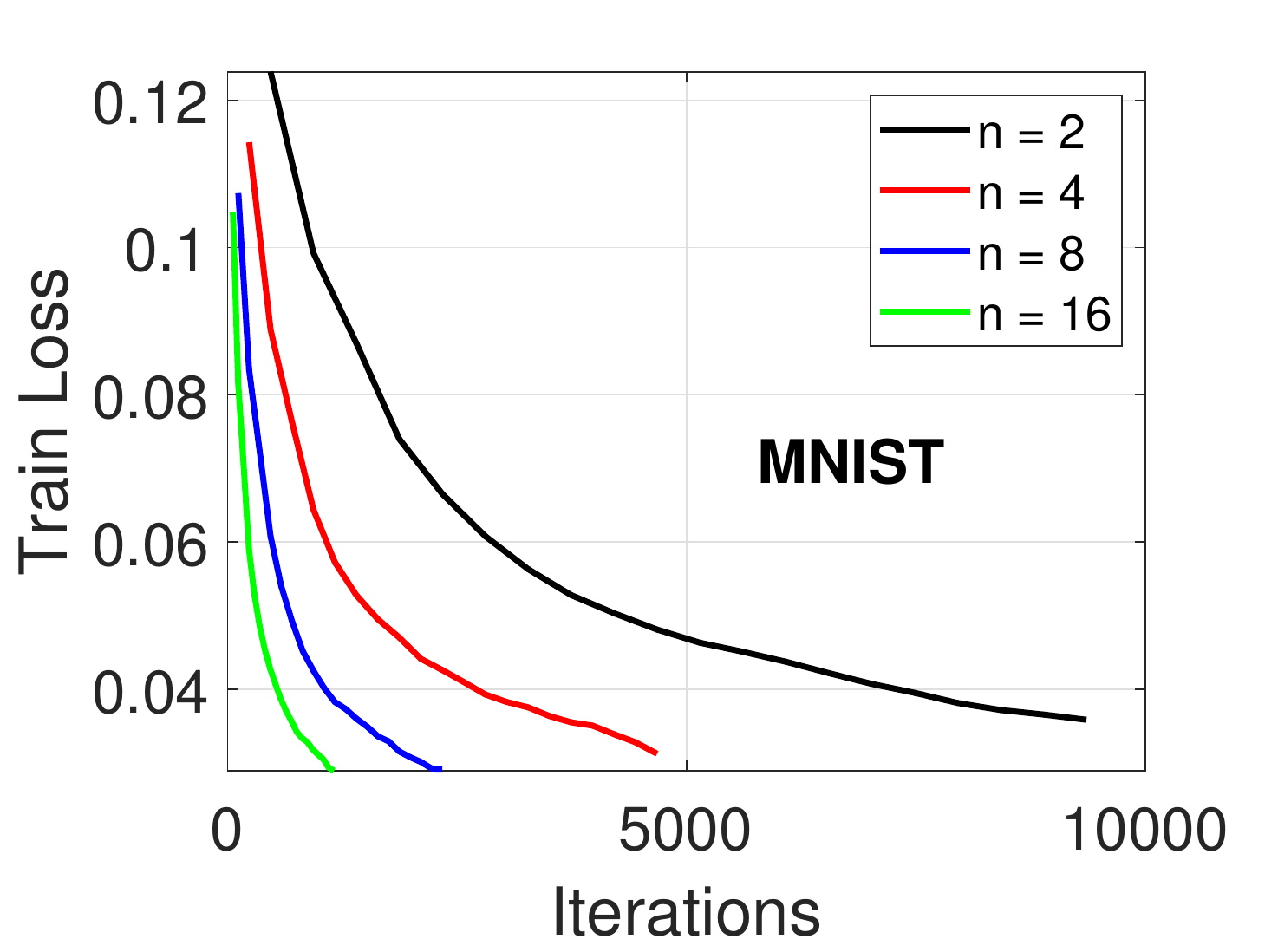}
    \includegraphics[width=2.3in]{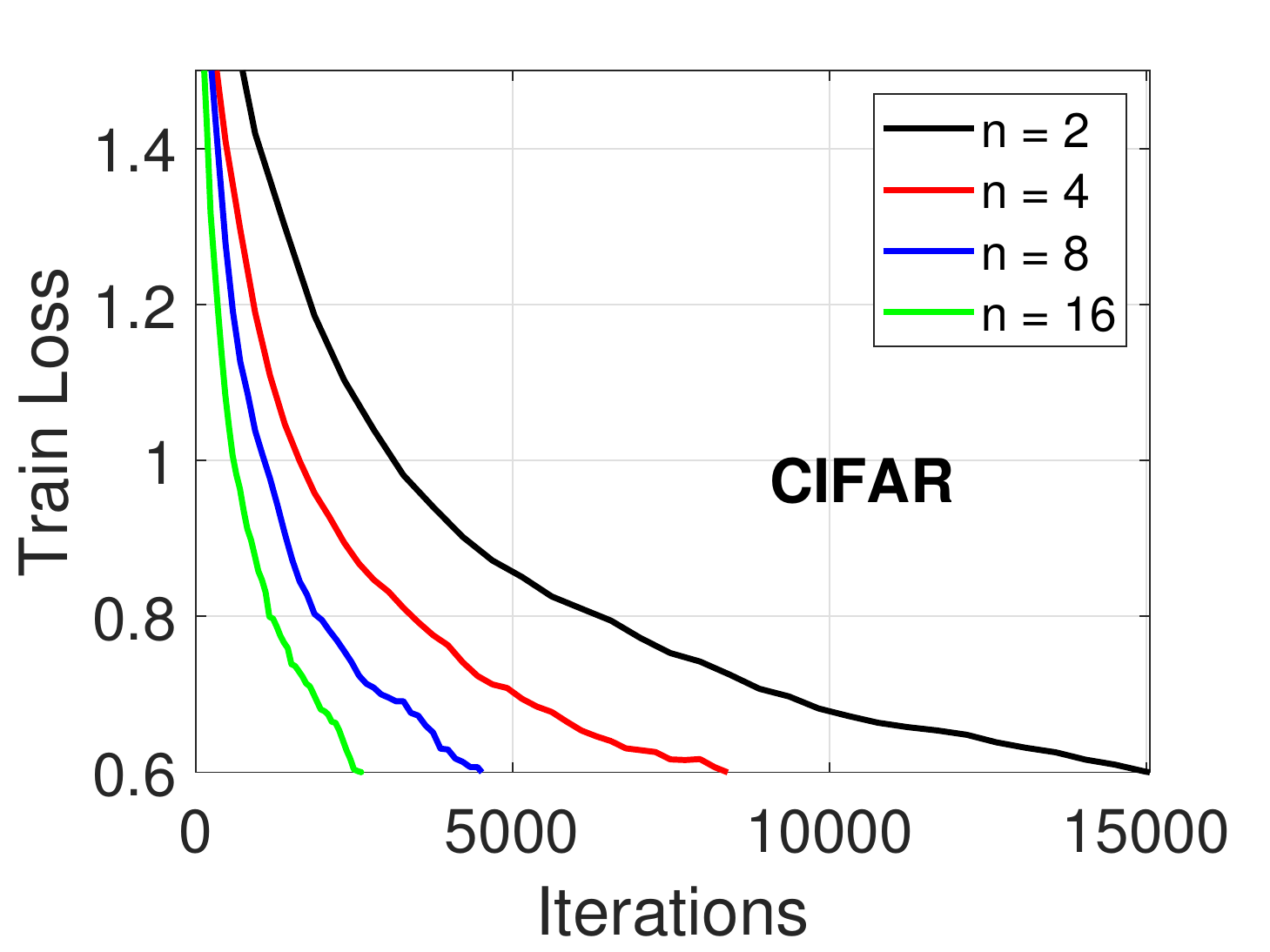}
    }
  \end{center}
  \caption{The linear speedup of \algo\ with varying $n$. \textbf{Left:} MNIST with \textbf{Block-Sign} compressor on CNN. \textbf{Right:} CIFAR-10 with \textbf{Top-$k$-0.01} compression on LeNet.}
  \label{fig:speedup}
\end{figure}

Corollary~\ref{coro:linear speedup} reveals the linear speedup of \algo\ in distributed training. In Figure~\ref{fig:speedup}, we present the training loss on MNIST and CIFAR-10 against the number of iterations, with varying number of workers $n$. We use \algo\ with \textbf{Block-Sign} on MNIST, and \textbf{Top-$k$-0.01} on CIFAR. As suggested by the theory, we use $5\times 10^{-4}\sqrt n$ as the learning rate. From Figure~\ref{fig:speedup}, we see the number of iterations to achieve a certain loss exhibits a strong linear relationship with $n$---it (approximately) decreases by half whenever we double $n$, which justifies the linear speedup of \algo.

\subsection{Discussion}

We provide a brief summary of our empirical observations. The proposed \algo\ is able to match the learning performance of full-gradient AMSGrad in all the presented experiments. In particular, for data/model involving some sparsity structure, \algo\ with the \textbf{Top-$k$} compressor could be more effective. Also, our results reveal that 1BitAdam might be quite sensitive to the pre-conditioning quality, while \algo\ can be more easily tuned and implemented in practice.

We would like to emphasize that, the primary goal of the experiments is to show that \algo\ is able to match the performance of full-precision AMSGrad, but not to argue that \algo\ is always better than the other algorithms. Since different methods use different underlying optimization algorithms (e.g., AMSGrad, Adam, momentum SGD), comparing \algo\ with other distributed training methods would be largely determined by the comparison among these optimization protocols, which is typically data and task dependent. Our results say that: whenever one wants to use AMSGrad to train a deep neural network, she/he can simply employ the distributed \algo\ scheme to gain a linear speedup in training time with learning performance as good as the full-precision training, taking little communication cost and memory consumption.

\section{Conclusion}\label{sec:conclusion}

In this paper, we study the simple, convenient, yet unexplored gradient averaging strategy for distributed adaptive optimization called \algo. \textbf{Top-$k$} and \textbf{Block-Sign} compressor are incorporated for communication efficiency, whose biases are compensated by the error feedback strategy. We develop the convergence rate of \algo, and show that same as the case of SGD, for AMSGrad, compressed gradient averaging with error feedback matches the convergence of full-gradient AMSGrad, and linear speedup can be obtained in the distributed training. Numerical experiments are conducted to justify the theoretical findings, and demonstrate that \algo\ provides comparable performance with other distributed adaptive methods, and achieves similar accuracy as full-precision AMSGrad with significantly reduced communication overhead. Given the simple architecture and hardware (memory) efficiency, we expect \algo\ shall be able to serve as a useful and convenient distributed adaptive optimization framework in practice.

\bibliographystyle{iclr2022_conference}
\bibliography{ref}

\begin{thebibliography}{79}
\providecommand{\natexlab}[1]{#1}
\providecommand{\url}[1]{\texttt{#1}}
\expandafter\ifx\csname urlstyle\endcsname\relax
  \providecommand{\doi}[1]{doi: #1}\else
  \providecommand{\doi}{doi: \begingroup \urlstyle{rm}\Url}\fi

\bibitem[Agarwal et~al.(2018)Agarwal, Suresh, Yu, Kumar, and
  McMahan]{Proc:Agrawal_NIPS19}
Naman Agarwal, Ananda~Theertha Suresh, Felix~X. Yu, Sanjiv Kumar, and Brendan
  McMahan.
\newblock {cpSGD}: Communication-efficient and differentially-private
  distributed {SGD}.
\newblock In \emph{Advances in Neural Information Processing Systems
  (NeurIPS)}, pp.\  7575--7586, Montr{\'{e}}al, Canada, 2018.

\bibitem[Ajalloeian \& Stich(2020)Ajalloeian and Stich]{ajalloeian2020analysis}
Ahmad Ajalloeian and Sebastian~U Stich.
\newblock Analysis of {SGD} with biased gradient estimators.
\newblock \emph{arXiv preprint arXiv:2008.00051}, 2020.

\bibitem[Aji \& Heafield(2017)Aji and Heafield]{aji2017sparse}
Alham~Fikri Aji and Kenneth Heafield.
\newblock Sparse communication for distributed gradient descent.
\newblock In \emph{Proceedings of the 2017 Conference on Empirical Methods in
  Natural Language Processing (EMNLP)}, pp.\  440--445, Copenhagen, Denmark,
  2017.

\bibitem[Alistarh et~al.(2017)Alistarh, Grubic, Li, Tomioka, and
  Vojnovic]{alistarh2017qsgd}
Dan Alistarh, Demjan Grubic, Jerry Li, Ryota Tomioka, and Milan Vojnovic.
\newblock {QSGD:} communication-efficient {SGD} via gradient quantization and
  encoding.
\newblock In \emph{Advances in Neural Information Processing Systems (NIPS)},
  pp.\  1709--1720, Long Beach, CA, 2017.

\bibitem[Alistarh et~al.(2018)Alistarh, Hoefler, Johansson, Konstantinov,
  Khirirat, and Renggli]{alistarh2018convergence}
Dan Alistarh, Torsten Hoefler, Mikael Johansson, Nikola Konstantinov, Sarit
  Khirirat, and C{\'{e}}dric Renggli.
\newblock The convergence of sparsified gradient methods.
\newblock In \emph{Advances in Neural Information Processing Systems
  (NeurIPS)}, pp.\  5977--5987, Montr{\'{e}}al, Canada, 2018.

\bibitem[Basu et~al.(2019)Basu, Data, Karakus, and Diggavi]{Proc:Basu_NIPS19}
Debraj Basu, Deepesh Data, Can Karakus, and Suhas~N. Diggavi.
\newblock Qsparse-local-sgd: Distributed {SGD} with quantization,
  sparsification and local computations.
\newblock In \emph{Advances in Neural Information Processing Systems
  (NeurIPS)}, pp.\  14668--14679, Vancouver, Canada, 2019.

\bibitem[Bernstein et~al.(2018)Bernstein, Wang, Azizzadenesheli, and
  Anandkumar]{bernstein2018signsgd}
Jeremy Bernstein, Yu{-}Xiang Wang, Kamyar Azizzadenesheli, and Animashree
  Anandkumar.
\newblock {SIGNSGD:} compressed optimisation for non-convex problems.
\newblock In \emph{Proceedings of the 35th International Conference on Machine
  Learning (ICML)}, pp.\  559--568, Stockholmsm{\"{a}}ssan, Stockholm, Sweden,
  2018.

\bibitem[Bernstein et~al.(2019)Bernstein, Zhao, Azizzadenesheli, and
  Anandkumar]{Proc:Bernstein_ICLR19}
Jeremy Bernstein, Jiawei Zhao, Kamyar Azizzadenesheli, and Anima Anandkumar.
\newblock {signSGD} with majority vote is communication efficient and fault
  tolerant.
\newblock In \emph{Proceedings of the 7th International Conference on Learning
  Representations (ICLR)}, New Orleans, LA, 2019.

\bibitem[Beznosikov et~al.(2020)Beznosikov, Horv{\'a}th, Richt{\'a}rik, and
  Safaryan]{Arxiv:Beznosikov20}
Aleksandr Beznosikov, Samuel Horv{\'a}th, Peter Richt{\'a}rik, and Mher
  Safaryan.
\newblock On biased compression for distributed learning.
\newblock \emph{arXiv preprint arXiv:2002.12410}, 2020.

\bibitem[Boyd et~al.(2011)Boyd, Parikh, Chu, Peleato, and
  Eckstein]{boyd2011distributed}
Stephen~P. Boyd, Neal Parikh, Eric Chu, Borja Peleato, and Jonathan Eckstein.
\newblock Distributed optimization and statistical learning via the alternating
  direction method of multipliers.
\newblock \emph{Found. Trends Mach. Learn.}, 3\penalty0 (1):\penalty0 1--122,
  2011.

\bibitem[Chang et~al.(2018)Chang, Balachandar, Lam, Yi, Brown, Beers, Rosen,
  Rubin, and Kalpathy{-}Cramer]{Proc:Chang18}
Ken Chang, Niranjan Balachandar, Carson~K. Lam, Darvin Yi, James~M. Brown,
  Andrew Beers, Bruce~R. Rosen, Daniel~L. Rubin, and Jayashree
  Kalpathy{-}Cramer.
\newblock Distributed deep learning networks among institutions for medical
  imaging.
\newblock \emph{J. Am. Medical Informatics Assoc.}, 25\penalty0 (8):\penalty0
  945--954, 2018.

\bibitem[Chen et~al.(2021{\natexlab{a}})Chen, Shen, Huang, and
  Liu]{Arxiv:QAdam}
Congliang Chen, Li~Shen, Hao{-}Zhi Huang, and Wei Liu.
\newblock Quantized adam with error feedback.
\newblock \emph{{ACM} Trans. Intell. Syst. Technol.}, 12\penalty0 (5):\penalty0
  56:1--56:26, 2021{\natexlab{a}}.

\bibitem[Chen et~al.(2019)Chen, Liu, Sun, and Hong]{Proc:Chen_ICLR19}
Xiangyi Chen, Sijia Liu, Ruoyu Sun, and Mingyi Hong.
\newblock On the convergence of {A} class of adam-type algorithms for
  non-convex optimization.
\newblock In \emph{Proceedings of the 7th International Conference on Learning
  Representations (ICLR)}, New Orleans, LA, 2019.

\bibitem[Chen et~al.(2020)Chen, Li, and Li]{chen2020toward}
Xiangyi Chen, Xiaoyun Li, and Ping Li.
\newblock Toward communication efficient adaptive gradient method.
\newblock In \emph{Proceedings of the ACM-IMS Foundations of Data Science
  Conference (FODS)}, pp.\  119--128, Seattle, WA, 2020.

\bibitem[Chen et~al.(2021{\natexlab{b}})Chen, Karimi, Zhao, and
  Li]{chen2021convergence}
Xiangyi Chen, Belhal Karimi, Weijie Zhao, and Ping Li.
\newblock On the convergence of decentralized adaptive gradient methods.
\newblock \emph{arXiv preprint arXiv:2109.03194}, 2021{\natexlab{b}}.

\bibitem[Chilimbi et~al.(2014)Chilimbi, Suzue, Apacible, and
  Kalyanaraman]{chilimbi2014project}
Trishul~M. Chilimbi, Yutaka Suzue, Johnson Apacible, and Karthik Kalyanaraman.
\newblock Project adam: Building an efficient and scalable deep learning
  training system.
\newblock In \emph{Proceedings of the 11th {USENIX} Symposium on Operating
  Systems Design and Implementation (OSDI)}, pp.\  571--582, Broomfield, CO,
  2014.

\bibitem[Choi et~al.(2019)Choi, Shallue, Nado, Lee, Maddison, and
  Dahl]{Arxiv:Choi_2019}
Dami Choi, Christopher~J Shallue, Zachary Nado, Jaehoon Lee, Chris~J Maddison,
  and George~E Dahl.
\newblock On empirical comparisons of optimizers for deep learning.
\newblock \emph{arXiv preprint arXiv:1910.05446}, 2019.

\bibitem[Covington et~al.(2016)Covington, Adams, and
  Sargin]{Proc:Covington_2016}
Paul Covington, Jay Adams, and Emre Sargin.
\newblock Deep neural networks for youtube recommendations.
\newblock In \emph{Proceedings of the 10th {ACM} Conference on Recommender
  Systems}, pp.\  191--198, Boston, MA, 2016.

\bibitem[Dettmers(2016)]{Proc:8-bit_ICLR16}
Tim Dettmers.
\newblock 8-bit approximations for parallelism in deep learning.
\newblock In \emph{Proceedings of the 4th International Conference on Learning
  Representations (ICLR)}, San Juan, Puerto Rico, 2016.

\bibitem[Devlin et~al.(2019)Devlin, Chang, Lee, and Toutanova]{Proc:BERT}
Jacob Devlin, Ming{-}Wei Chang, Kenton Lee, and Kristina Toutanova.
\newblock {BERT:} pre-training of deep bidirectional transformers for language
  understanding.
\newblock In \emph{Proceedings of the 2019 Conference of the North American
  Chapter of the Association for Computational Linguistics: Human Language
  Technologies (NAACL-HLT)}, pp.\  4171--4186, Minneapolis, MN, 2019.

\bibitem[Duchi et~al.(2010)Duchi, Hazan, and Singer]{Duchi10-adagrad}
John~C. Duchi, Elad Hazan, and Yoram Singer.
\newblock Adaptive subgradient methods for online learning and stochastic
  optimization.
\newblock In \emph{Proceedings of the 23rd Conference on Learning Theory
  (COLT)}, pp.\  257--269, Haifa, Israel, 2010.

\bibitem[Duchi et~al.(2012)Duchi, Agarwal, and Wainwright]{duchi2011dual}
John~C. Duchi, Alekh Agarwal, and Martin~J. Wainwright.
\newblock Dual averaging for distributed optimization: Convergence analysis and
  network scaling.
\newblock \emph{{IEEE} Trans. Autom. Control.}, 57\penalty0 (3):\penalty0
  592--606, 2012.

\bibitem[Ghadimi \& Lan(2013)Ghadimi and Lan]{ghadimi2013stochastic}
Saeed Ghadimi and Guanghui Lan.
\newblock Stochastic first- and zeroth-order methods for nonconvex stochastic
  programming.
\newblock \emph{{SIAM} J. Optim.}, 23\penalty0 (4):\penalty0 2341--2368, 2013.

\bibitem[Goodfellow et~al.(2014)Goodfellow, Pouget{-}Abadie, Mirza, Xu,
  Warde{-}Farley, Ozair, Courville, and Bengio]{Proc:GAN_NIPS14}
Ian~J. Goodfellow, Jean Pouget{-}Abadie, Mehdi Mirza, Bing Xu, David
  Warde{-}Farley, Sherjil Ozair, Aaron~C. Courville, and Yoshua Bengio.
\newblock Generative adversarial nets.
\newblock In \emph{Advances in Neural Information Processing Systems (NIPS)},
  pp.\  2672--2680, Montreal, Canada, 2014.

\bibitem[Goyal et~al.(2017)Goyal, Doll{\'a}r, Girshick, Noordhuis, Wesolowski,
  Kyrola, Tulloch, Jia, and He]{Arxiv:Goyal17}
Priya Goyal, Piotr Doll{\'a}r, Ross Girshick, Pieter Noordhuis, Lukasz
  Wesolowski, Aapo Kyrola, Andrew Tulloch, Yangqing Jia, and Kaiming He.
\newblock Accurate, large minibatch sgd: Training imagenet in 1 hour.
\newblock \emph{arXiv preprint arXiv:1706.02677}, 2017.

\bibitem[Graves et~al.(2013)Graves, Mohamed, and Hinton]{Proc:Graves_ICASSP13}
Alex Graves, Abdel{-}rahman Mohamed, and Geoffrey~E. Hinton.
\newblock Speech recognition with deep recurrent neural networks.
\newblock In \emph{Proceedings of {IEEE} International Conference on Acoustics,
  Speech and Signal Processing (ICASSP)}, pp.\  6645--6649, Vancouver, Canada,
  2013.

\bibitem[Haddadpour et~al.(2020)Haddadpour, Karimi, Li, and
  Li]{Arxiv:fedsketch20}
Farzin Haddadpour, Belhal Karimi, Ping Li, and Xiaoyun Li.
\newblock Fedsketch: Communication-efficient and private federated learning via
  sketching.
\newblock \emph{arXiv preprint arXiv:2008.04975}, 2020.

\bibitem[He et~al.(2016)He, Zhang, Ren, and Sun]{Proc:Resnet_CVPR16}
Kaiming He, Xiangyu Zhang, Shaoqing Ren, and Jian Sun.
\newblock Deep residual learning for image recognition.
\newblock In \emph{Proceedings of the 2016 {IEEE} Conference on Computer Vision
  and Pattern Recognition (CVPR)}, pp.\  770--778, Las Vegas, NV, 2016.

\bibitem[Hong et~al.(2017)Hong, Hajinezhad, and Zhao]{hong2017prox}
Mingyi Hong, Davood Hajinezhad, and Ming{-}Min Zhao.
\newblock Prox-pda: The proximal primal-dual algorithm for fast distributed
  nonconvex optimization and learning over networks.
\newblock In \emph{Proceedings of the 34th International Conference on Machine
  Learning (ICML)}, pp.\  1529--1538, Sydney, Australia, 2017.

\bibitem[Ivkin et~al.(2019)Ivkin, Rothchild, Ullah, Braverman, Stoica, and
  Arora]{Proc:Ivkin_NIPS19}
Nikita Ivkin, Daniel Rothchild, Enayat Ullah, Vladimir Braverman, Ion Stoica,
  and Raman Arora.
\newblock Communication-efficient distributed {SGD} with sketching.
\newblock In \emph{Advances in Neural Information Processing Systems
  (NeurIPS)}, pp.\  13144--13154, Vancouver, Canada, 2019.

\bibitem[Jiang et~al.(2018)Jiang, Fu, Yang, and Cui]{Proc:Jiang_SIGMOD18}
Jiawei Jiang, Fangcheng Fu, Tong Yang, and Bin Cui.
\newblock Sketchml: Accelerating distributed machine learning with data
  sketches.
\newblock In \emph{Proceedings of the 2018 {ACM} International Conference on
  Management of Data (SIGMOD)}, pp.\  1269--1284, Houston, TX, 2018.

\bibitem[Jiang \& Agrawal(2018)Jiang and Agrawal]{jiang2018linear}
Peng Jiang and Gagan Agrawal.
\newblock A linear speedup analysis of distributed deep learning with sparse
  and quantized communication.
\newblock In \emph{Advances in Neural Information Processing Systems
  (NeurIPS)}, pp.\  2530--2541, Montr{\'{e}}al, Canada, 2018.

\bibitem[Karimi et~al.(2021)Karimi, Li, and Li]{Arxiv:fedlamb2021}
Belhal Karimi, Xiaoyun Li, and Ping Li.
\newblock {Fed-LAMB}: Layerwise and dimensionwise locally adaptive optimization
  algorithm.
\newblock \emph{arXiv preprint arXiv:2110.00532}, 2021.

\bibitem[Karimireddy et~al.(2019)Karimireddy, Rebjock, Stich, and
  Jaggi]{karimireddy2019error}
Sai~Praneeth Karimireddy, Quentin Rebjock, Sebastian~U. Stich, and Martin
  Jaggi.
\newblock Error feedback fixes signsgd and other gradient compression schemes.
\newblock In \emph{Proceedings of the 36th International Conference on Machine
  Learning (ICML)}, pp.\  3252--3261, Long Beach, CA, 2019.

\bibitem[Karimireddy et~al.(2020)Karimireddy, Jaggi, Kale, Mohri, Reddi, Stich,
  and Suresh]{karimireddy2020mime}
Sai~Praneeth Karimireddy, Martin Jaggi, Satyen Kale, Mehryar Mohri, Sashank~J
  Reddi, Sebastian~U Stich, and Ananda~Theertha Suresh.
\newblock Mime: Mimicking centralized stochastic algorithms in federated
  learning.
\newblock \emph{arXiv preprint arXiv:2008.03606}, 2020.

\bibitem[Kingma \& Ba(2015)Kingma and Ba]{kingma2014adam}
Diederik~P. Kingma and Jimmy Ba.
\newblock Adam: {A} method for stochastic optimization.
\newblock In \emph{Proceedings of the 3rd International Conference on Learning
  Representations (ICLR)}, San Diego, CA, 2015.

\bibitem[Koloskova et~al.(2019)Koloskova, Stich, and
  Jaggi]{koloskova2019decentralized}
Anastasia Koloskova, Sebastian~U. Stich, and Martin Jaggi.
\newblock Decentralized stochastic optimization and gossip algorithms with
  compressed communication.
\newblock In \emph{Proceedings of the 36th International Conference on Machine
  Learning (ICML)}, pp.\  3478--3487, Long Beach, CA, 2019.

\bibitem[Krizhevsky \& Hinton(2009)Krizhevsky and Hinton]{cifar}
A.~Krizhevsky and G.~Hinton.
\newblock Learning multiple layers of features from tiny images.
\newblock \emph{Master's thesis, Department of Computer Science, University of
  Toronto}, 2009.

\bibitem[LeCun et~al.(1998)LeCun, Bottou, Bengio, and Haffner]{mnist}
Yann LeCun, L{\'e}on Bottou, Yoshua Bengio, and Patrick Haffner.
\newblock Gradient-based learning applied to document recognition.
\newblock \emph{Proceedings of the IEEE}, 86\penalty0 (11):\penalty0
  2278--2324, 1998.

\bibitem[Levine et~al.(2016)Levine, Finn, Darrell, and
  Abbeel]{Article:Levine_JMLR16}
Sergey Levine, Chelsea Finn, Trevor Darrell, and Pieter Abbeel.
\newblock End-to-end training of deep visuomotor policies.
\newblock \emph{J. Mach. Learn. Res.}, 17:\penalty0 39:1--39:40, 2016.

\bibitem[Lin et~al.(2018)Lin, Han, Mao, Wang, and Dally]{Proc:Lin_ICLR18}
Yujun Lin, Song Han, Huizi Mao, Yu~Wang, and Bill Dally.
\newblock Deep gradient compression: Reducing the communication bandwidth for
  distributed training.
\newblock In \emph{Proceedings of the 6th International Conference on Learning
  Representations (ICLR)}, Vancouver, Canada, 2018.

\bibitem[Lu et~al.(2019)Lu, Zhang, Sun, and Hong]{lu2019gnsd}
Songtao Lu, Xinwei Zhang, Haoran Sun, and Mingyi Hong.
\newblock {GNSD:} a gradient-tracking based nonconvex stochastic algorithm for
  decentralized optimization.
\newblock In \emph{Proceedings of the 2019 IEEE Data Science Workshop (DSW)},
  pp.\  315--321, 2019.

\bibitem[Maas et~al.(2011)Maas, Daly, Pham, Huang, Ng, and Potts]{imdb}
Andrew~L. Maas, Raymond~E. Daly, Peter~T. Pham, Dan Huang, Andrew~Y. Ng, and
  Christopher Potts.
\newblock Learning word vectors for sentiment analysis.
\newblock In \emph{Proceedings of the 49th Annual Meeting of the Association
  for Computational Linguistics: Human Language Technologies (NAACL-HLT)}, pp.\
   142--150, Portland, OR, 2011.

\bibitem[McMahan et~al.(2017)McMahan, Moore, Ramage, Hampson, and
  y~Arcas]{mcmahan2017communication}
Brendan McMahan, Eider Moore, Daniel Ramage, Seth Hampson, and
  Blaise~Ag{\"{u}}era y~Arcas.
\newblock Communication-efficient learning of deep networks from decentralized
  data.
\newblock In \emph{Proceedings of the 20th International Conference on
  Artificial Intelligence and Statistics (AISTATS)}, pp.\  1273--1282, Fort
  Lauderdale, FL, 2017.

\bibitem[Mikami et~al.(2018)Mikami, Suganuma, Tanaka, and
  Kageyama]{mikami2018massively}
Hiroaki Mikami, Hisahiro Suganuma, Yoshiki Tanaka, and Yuichi Kageyama.
\newblock Massively distributed {SGD}: Imagenet/resnet-50 training in a flash.
\newblock \emph{arXiv preprint arXiv:1811.05233}, 2018.

\bibitem[Mnih et~al.(2013)Mnih, Kavukcuoglu, Silver, Graves, Antonoglou,
  Wierstra, and Riedmiller]{Arxiv:MnihKSGAWR13}
Volodymyr Mnih, Koray Kavukcuoglu, David Silver, Alex Graves, Ioannis
  Antonoglou, Daan Wierstra, and Martin Riedmiller.
\newblock Playing atari with deep reinforcement learning.
\newblock \emph{arXiv preprint arXiv:1312.5602}, 2013.

\bibitem[Nazari et~al.(2019)Nazari, Tarzanagh, and
  Michailidis]{nazari2019dadam}
Parvin Nazari, Davoud~Ataee Tarzanagh, and George Michailidis.
\newblock Dadam: A consensus-based distributed adaptive gradient method for
  online optimization.
\newblock \emph{arXiv preprint arXiv:1901.09109}, 2019.

\bibitem[Nedic \& Ozdaglar(2009)Nedic and Ozdaglar]{nedic2009distributed}
Angelia Nedic and Asuman~E. Ozdaglar.
\newblock Distributed subgradient methods for multi-agent optimization.
\newblock \emph{{IEEE} Trans. Autom. Control.}, 54\penalty0 (1):\penalty0
  48--61, 2009.

\bibitem[Nemirovski et~al.(2009)Nemirovski, Juditsky, Lan, and
  Shapiro]{nemirovski2009robust}
Arkadi Nemirovski, Anatoli~B. Juditsky, Guanghui Lan, and Alexander Shapiro.
\newblock Robust stochastic approximation approach to stochastic programming.
\newblock \emph{{SIAM} J. Optim.}, 19\penalty0 (4):\penalty0 1574--1609, 2009.

\bibitem[Reddi et~al.(2018)Reddi, Kale, and Kumar]{reddi2019convergence}
Sashank~J. Reddi, Satyen Kale, and Sanjiv Kumar.
\newblock On the convergence of adam and beyond.
\newblock In \emph{Proceedings of the 6th International Conference on Learning
  Representations (ICLR)}, Vancouver, Canada, 2018.

\bibitem[Reddi et~al.(2021)Reddi, Charles, Zaheer, Garrett, Rush,
  Kone{\v{c}}n{\'y}, Kumar, and McMahan]{Proc:Reddi_ICLR21}
Sashank~J. Reddi, Zachary Charles, Manzil Zaheer, Zachary Garrett, Keith Rush,
  Jakub Kone{\v{c}}n{\'y}, Sanjiv Kumar, and Hugh~Brendan McMahan.
\newblock Adaptive federated optimization.
\newblock In \emph{Proceedings of the 9th International Conference on Learning
  Representations (ICLR)}, Virtual Event, 2021.

\bibitem[Richt{\'{a}}rik et~al.(2021)Richt{\'{a}}rik, Sokolov, and
  Fatkhullin]{Proc:Richtarik_NeurIPS21}
Peter Richt{\'{a}}rik, Igor Sokolov, and Ilyas Fatkhullin.
\newblock {EF21:} {A} new, simpler, theoretically better, and practically
  faster error feedback.
\newblock In \emph{Advances in Neural Information Processing Systems
  (NeurIPS)}, virtual, 2021.

\bibitem[Sa et~al.(2017)Sa, Feldman, R{\'{e}}, and
  Olukotun]{de2017understanding}
Christopher~De Sa, Matthew Feldman, Christopher R{\'{e}}, and Kunle Olukotun.
\newblock Understanding and optimizing asynchronous low-precision stochastic
  gradient descent.
\newblock In \emph{Proceedings of the 44th Annual International Symposium on
  Computer Architecture (ISCA)}, pp.\  561--574, Toronto, Canada, 2017.

\bibitem[Seide et~al.(2014)Seide, Fu, Droppo, Li, and Yu]{Proc:Seide14}
Frank Seide, Hao Fu, Jasha Droppo, Gang Li, and Dong Yu.
\newblock 1-bit stochastic gradient descent and its application to
  data-parallel distributed training of speech dnns.
\newblock In \emph{Proceedings of the 15th Annual Conference of the
  International Speech Communication Association (ISCA)}, pp.\  1058--1062,
  Singapore, 2014.

\bibitem[Shen et~al.(2018)Shen, Mokhtari, Zhou, Zhao, and
  Qian]{Proc:Shen_ICML18}
Zebang Shen, Aryan Mokhtari, Tengfei Zhou, Peilin Zhao, and Hui Qian.
\newblock Towards more efficient stochastic decentralized learning: Faster
  convergence and sparse communication.
\newblock In \emph{Proceedings of the 35th International Conference on Machine
  Learning (ICML)}, pp.\  4631--4640, Stockholmsm{\"{a}}ssan, Stockholm,
  Sweden, 2018.

\bibitem[Shi et~al.(2019)Shi, Zhao, Wang, Tang, and Chu]{shi2019convergence}
Shaohuai Shi, Kaiyong Zhao, Qiang Wang, Zhenheng Tang, and Xiaowen Chu.
\newblock A convergence analysis of distributed {SGD} with
  communication-efficient gradient sparsification.
\newblock In \emph{Proceedings of the 28th International Joint Conference on
  Artificial Intelligence (IJCAI)}, pp.\  3411--3417, Macao, China, 2019.

\bibitem[Silver et~al.(2017)Silver, Schrittwieser, Simonyan, Antonoglou, Huang,
  Guez, Hubert, Baker, Lai, Bolton, Chen, Lillicrap, Hui, Sifre, van~den
  Driessche, Graepel, and Hassabis]{AlphaGo_17}
David Silver, Julian Schrittwieser, Karen Simonyan, Ioannis Antonoglou, Aja
  Huang, Arthur Guez, Thomas Hubert, Lucas Baker, Matthew Lai, Adrian Bolton,
  Yutian Chen, Timothy~P. Lillicrap, Fan Hui, Laurent Sifre, George van~den
  Driessche, Thore Graepel, and Demis Hassabis.
\newblock Mastering the game of go without human knowledge.
\newblock \emph{Nat.}, 550\penalty0 (7676):\penalty0 354--359, 2017.

\bibitem[Stich \& Karimireddy(2019)Stich and
  Karimireddy]{Article:Stich_arxiv19}
Sebastian~U Stich and Sai~Praneeth Karimireddy.
\newblock The error-feedback framework: Better rates for sgd with delayed
  gradients and compressed communication.
\newblock \emph{arXiv preprint arXiv:1909.05350}, 2019.

\bibitem[Stich et~al.(2018)Stich, Cordonnier, and Jaggi]{stich2018sparsified}
Sebastian~U. Stich, Jean{-}Baptiste Cordonnier, and Martin Jaggi.
\newblock Sparsified {SGD} with memory.
\newblock In \emph{Advances in Neural Information Processing Systems
  (NeurIPS)}, pp.\  4447--4458, Montr{\'{e}}al, Canada, 2018.

\bibitem[Tang et~al.(2021)Tang, Gan, Awan, Rajbhandari, Li, Lian, Liu, Zhang,
  and He]{Proc:1bitAdam}
Hanlin Tang, Shaoduo Gan, Ammar~Ahmad Awan, Samyam Rajbhandari, Conglong Li,
  Xiangru Lian, Ji~Liu, Ce~Zhang, and Yuxiong He.
\newblock 1-bit adam: Communication efficient large-scale training with adam's
  convergence speed.
\newblock In \emph{Proceedings of the 38th International Conference on Machine
  Learning (ICML)}, pp.\  10118--10129, Virtual Event, 2021.

\bibitem[Wang et~al.(2021)Wang, Li, Karimi, and Li]{wang2019optimistic}
Jun{-}Kun Wang, Xiaoyun Li, Belhal Karimi, and Ping Li.
\newblock An optimistic acceleration of amsgrad for nonconvex optimization.
\newblock In \emph{Proceedings of Asian Conference on Machine Learning (ACML)},
  volume 157, pp.\  422--437, Virtual Event, 2021.

\bibitem[Wangni et~al.(2018)Wangni, Wang, Liu, and Zhang]{wangni2018gradient}
Jianqiao Wangni, Jialei Wang, Ji~Liu, and Tong Zhang.
\newblock Gradient sparsification for communication-efficient distributed
  optimization.
\newblock In \emph{Advances in Neural Information Processing Systems
  (NeurIPS)}, pp.\  1299--1309, Montr{\'{e}}al, Canada, 2018.

\bibitem[Wen et~al.(2017)Wen, Xu, Yan, Wu, Wang, Chen, and Li]{wen2017terngrad}
Wei Wen, Cong Xu, Feng Yan, Chunpeng Wu, Yandan Wang, Yiran Chen, and Hai Li.
\newblock Terngrad: Ternary gradients to reduce communication in distributed
  deep learning.
\newblock In \emph{Advances in Neural Information Processing Systems (NIPS)},
  pp.\  1509--1519, Long Beach, CA, 2017.

\bibitem[Wu et~al.(2018)Wu, Huang, Huang, and Zhang]{Proc:Wu_ICML18}
Jiaxiang Wu, Weidong Huang, Junzhou Huang, and Tong Zhang.
\newblock Error compensated quantized {SGD} and its applications to large-scale
  distributed optimization.
\newblock In \emph{Proceedings of the 35th International Conference on Machine
  Learning (ICML)}, pp.\  5321--5329, Stockholmsm{\"{a}}ssan, Stockholm,
  Sweden, 2018.

\bibitem[Xu et~al.(2021)Xu, Li, Zhao, Shen, Huang, Li, and
  Li]{Proc:Xu_SIGMOD21}
Zhiqiang Xu, Dong Li, Weijie Zhao, Xing Shen, Tianbo Huang, Xiaoyun Li, and
  Ping Li.
\newblock Agile and accurate {CTR} prediction model training for massive-scale
  online advertising systems.
\newblock In \emph{Proceedings of the International Conference on Management of
  Data (SIGMOD)}, pp.\  2404--2409, Virtual Event, China, 2021.

\bibitem[Yang et~al.(2019)Yang, Zhang, Kirichenko, Bai, Wilson, and
  Sa]{yang2019swalp}
Guandao Yang, Tianyi Zhang, Polina Kirichenko, Junwen Bai, Andrew~Gordon
  Wilson, and Christopher~De Sa.
\newblock {SWALP} : Stochastic weight averaging in low precision training.
\newblock In \emph{Proceedings of the 36th International Conference on Machine
  Learning (ICML)}, pp.\  7015--7024, Long Beach, CA, 2019.

\bibitem[You et~al.(2020)You, Li, Reddi, Hseu, Kumar, Bhojanapalli, Song,
  Demmel, Keutzer, and Hsieh]{Proc:LAMB_ICLR20}
Yang You, Jing Li, Sashank~J. Reddi, Jonathan Hseu, Sanjiv Kumar, Srinadh
  Bhojanapalli, Xiaodan Song, James Demmel, Kurt Keutzer, and Cho{-}Jui Hsieh.
\newblock Large batch optimization for deep learning: Training {BERT} in 76
  minutes.
\newblock In \emph{Proceedings of the 8th International Conference on Learning
  Representations (ICLR)}, Addis Ababa, Ethiopia, 2020.

\bibitem[Young et~al.(2018)Young, Hazarika, Poria, and Cambria]{NLP_review18}
Tom Young, Devamanyu Hazarika, Soujanya Poria, and Erik Cambria.
\newblock Recent trends in deep learning based natural language processing.
\newblock \emph{{IEEE} Comput. Intell. Mag.}, 13\penalty0 (3):\penalty0 55--75,
  2018.

\bibitem[Yu et~al.(2019{\natexlab{a}})Yu, Jin, and Yang]{Proc:Yu_ICML19}
Hao Yu, Rong Jin, and Sen Yang.
\newblock On the linear speedup analysis of communication efficient momentum
  {SGD} for distributed non-convex optimization.
\newblock In \emph{Proceedings of the 36th International Conference on Machine
  Learning (ICML)}, pp.\  7184--7193, Long Beach, CA, 2019{\natexlab{a}}.

\bibitem[Yu et~al.(2019{\natexlab{b}})Yu, Wu, and Huang]{Proc:Yu_AISTATS19}
Yue Yu, Jiaxiang Wu, and Junzhou Huang.
\newblock Exploring fast and communication-efficient algorithms in large-scale
  distributed networks.
\newblock In \emph{Proceedings of the 22nd International Conference on
  Artificial Intelligence and Statistics (AISTATS)}, pp.\  674--683, Naha,
  Okinawa, Japan, 2019{\natexlab{b}}.

\bibitem[Zeiler(2012)]{Proc:adadelta}
Matthew~D Zeiler.
\newblock Adadelta: an adaptive learning rate method.
\newblock \emph{arXiv preprint arXiv:1212.5701}, 2012.

\bibitem[Zhang et~al.(2017)Zhang, Li, Kara, Alistarh, Liu, and
  Zhang]{Proc:Zhang_ICML17}
Hantian Zhang, Jerry Li, Kaan Kara, Dan Alistarh, Ji~Liu, and Ce~Zhang.
\newblock {ZipML}: Training linear models with end-to-end low precision, and a
  little bit of deep learning.
\newblock In \emph{Proceedings of the 34th International Conference on Machine
  Learning (ICML)}, pp.\  4035--4043, Sydney, Australia, 2017.

\bibitem[Zhang et~al.(2018)Zhang, Wang, and Liu]{sentiment_review18}
Lei Zhang, Shuai Wang, and Bing Liu.
\newblock Deep learning for sentiment analysis: {A} survey.
\newblock \emph{Wiley Interdiscip. Rev. Data Min. Knowl. Discov.}, 8\penalty0
  (4), 2018.

\bibitem[Zhang et~al.(2021)Zhang, Wu, Katiyar, Weinberger, and
  Artzi]{Arxiv:Zhang_ICLR21}
Tianyi Zhang, Felix Wu, Arzoo Katiyar, Kilian~Q. Weinberger, and Yoav Artzi.
\newblock Revisiting few-sample {BERT} fine-tuning.
\newblock In \emph{Proceedings of the 9th International Conference on Learning
  Representations (ICLR)}, Virtual Event, 2021.

\bibitem[Zhao et~al.(2019)Zhao, Zhang, Xie, Qian, Jia, and
  Li]{Proc:Zhao_CIKM19}
Weijie Zhao, Jingyuan Zhang, Deping Xie, Yulei Qian, Ronglai Jia, and Ping Li.
\newblock {AIBox}: {CTR} prediction model training on a single node.
\newblock In \emph{Proceedings of the 28th {ACM} International Conference on
  Information and Knowledge Management (CIKM)}, pp.\  319--328, Beijing, China,
  2019.

\bibitem[Zhao et~al.(2022)Zhao, Jiao, Hu, Li, Zhang, and Li]{Arxiv:Zhao2022}
Weijie Zhao, Xuewu Jiao, Mingqing Hu, Xiaoyun Li, Xiangyu Zhang, and Ping Li.
\newblock Communication-efficient terabyte-scale model training framework for
  online advertising.
\newblock \emph{arXiv preprint arXiv:2201.05500}, 2022.

\bibitem[Zheng et~al.(2019)Zheng, Huang, and Kwok]{Proc:Zheng_NIPS19}
Shuai Zheng, Ziyue Huang, and James~T. Kwok.
\newblock Communication-efficient distributed blockwise momentum {SGD} with
  error-feedback.
\newblock In \emph{Advances in Neural Information Processing Systems
  (NeurIPS)}, pp.\  11446--11456, Vancouver, Canada, 2019.

\bibitem[Zhou et~al.(2018)Zhou, Chen, Cao, Tang, Yang, and Gu]{Arxiv:Zhou_18}
Dongruo Zhou, Jinghui Chen, Yuan Cao, Yiqi Tang, Ziyan Yang, and Quanquan Gu.
\newblock On the convergence of adaptive gradient methods for nonconvex
  optimization.
\newblock \emph{arXiv preprint arXiv:1808.05671}, 2018.

\bibitem[Zhou et~al.(2020)Zhou, Karimi, Yu, Xu, and Li]{Proc:Zhou_NeurIPS20}
Yingxue Zhou, Belhal Karimi, Jinxing Yu, Zhiqiang Xu, and Ping Li.
\newblock Towards better generalization of adaptive gradient methods.
\newblock In \emph{Advances in Neural Information Processing Systems
  (NeurIPS)}, virtual, 2020.

\end{thebibliography}
\clearpage
\newpage

\appendix

\section{Tuning Details and More Results on ResNet-18} \label{app sec:resnet}

The search grids of the learning rate of each method can be found in Table~\ref{tab:tuning}. Empirically, Dist-AMS, \algo\ and 1BitAdam has similar optimal learning rate, while QAdam usually needs larger step size to reach its best performance.

\vspace{0.1in}

\begin{table}[h]
\centering
\caption{Search grids for learning rate tuning.}
\label{tab:tuning}
\begin{tabular}{c|c}
\toprule[1pt]
 & Learning rate range     \\ \hline
Dist-AMS                  & $[0.00001,0.00003,0.00005,0.0001,0.0003,0.0005,0.001,0.003,0.005,0.01]$                      \\\hline
Comp-AMS                  & $[0.00001,0.00003,0.00005,0.0001,0.0003,0.0005,0.001,0.003,0.005,0.01]$ \\\hline
QAdam                 & $[0.0001,0.0003,0.0005,0.001,0.003,0.005,0.01,0.03,0.05,0.1,0.3,0.5]$ \\\hline
1BitAdam     & $[0.00001,0.00003,0.00005,0.0001,0.0003,0.0005,0.001,0.003,0.005,0.01]$    \\
\toprule[1pt]
\end{tabular}
\end{table}

\vspace{0.2in}

We provide more experimental results on CIFAR-10 dataset, trained with ResNet-18~\citep{Proc:Resnet_CVPR16}. For reference, we also present the result of distributed SGD. As we can see from Figure~\ref{fig:cifar resnet}, again \algo\ can achieve similar accuracy as AMSGrad, and the \textbf{Top-$k$} compressor gives the best accuracy, with substantial communication reduction. Note that distributed SGD converges faster than adaptive methods, but the generalization error is slightly worse. This experiment again confirms that \algo\ can serve as a simple and convenient distributed adaptive training framework with fast convergence, reduced communication and little performance drop.

\begin{figure}[h]
    \begin{center}
        \mbox{\hspace{-0.15in}
        \includegraphics[width=2in]{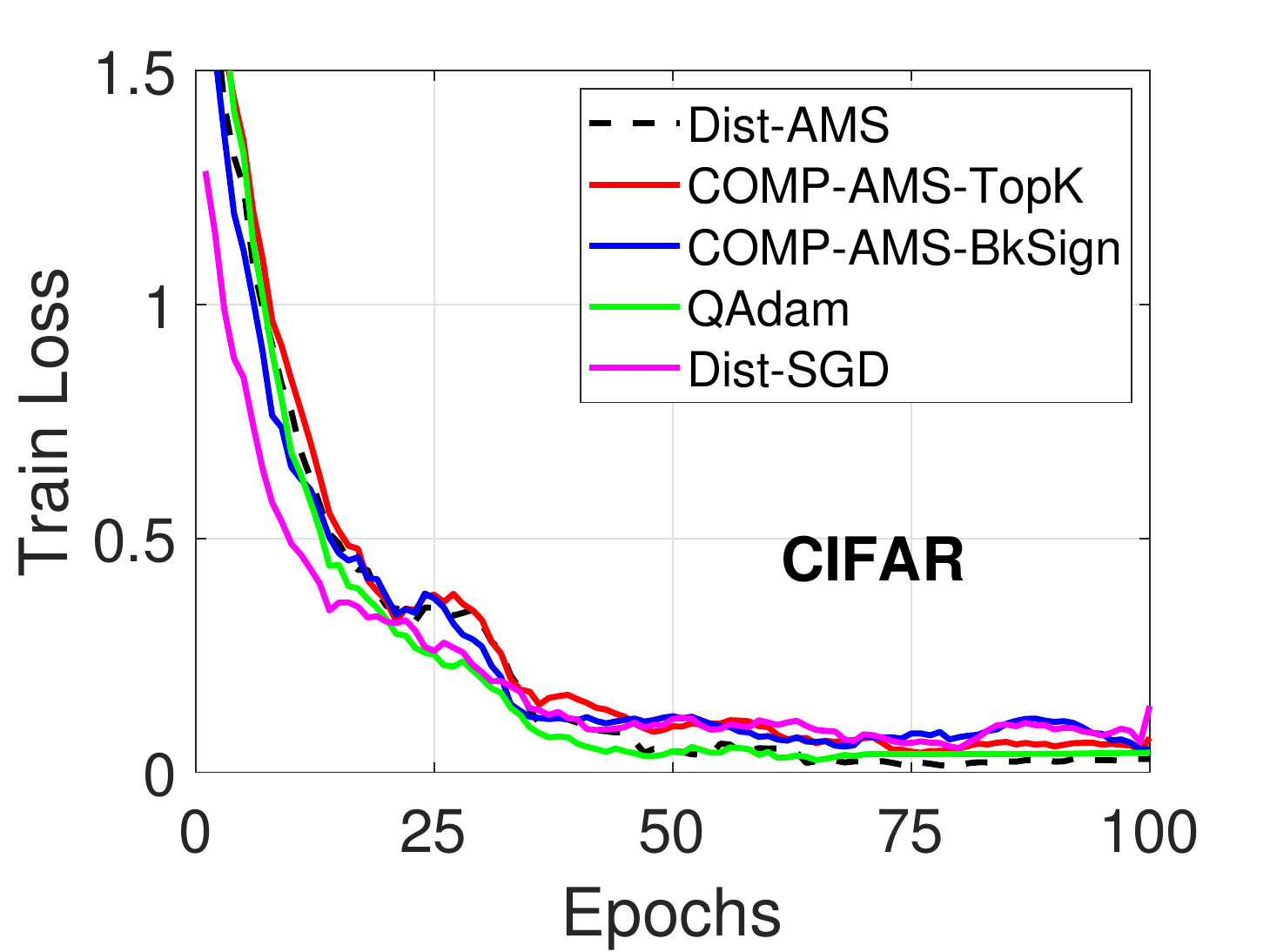}\hspace{-0.1in}
        \includegraphics[width=2in]{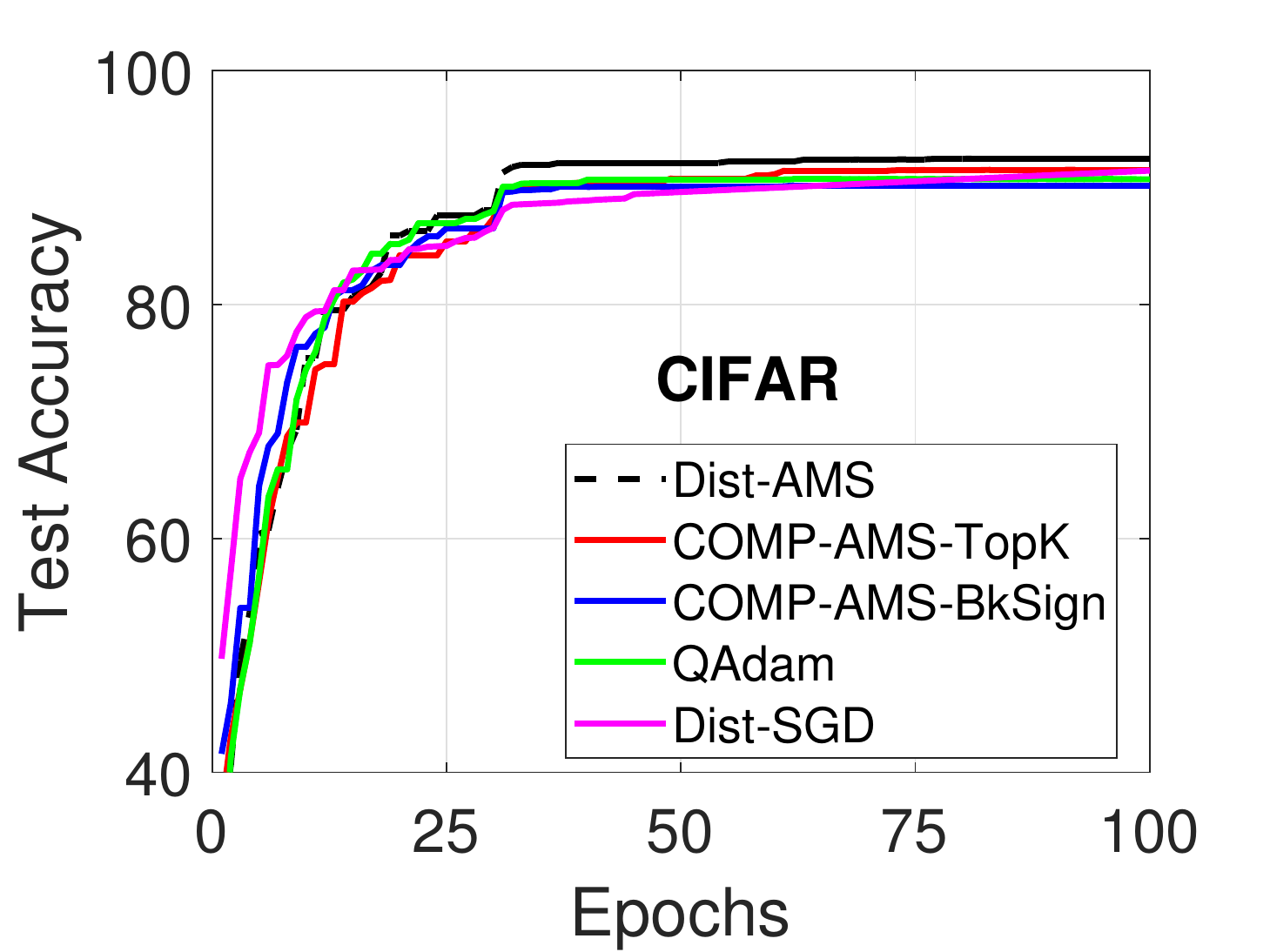}\hspace{-0.1in}
        \includegraphics[width=2in]{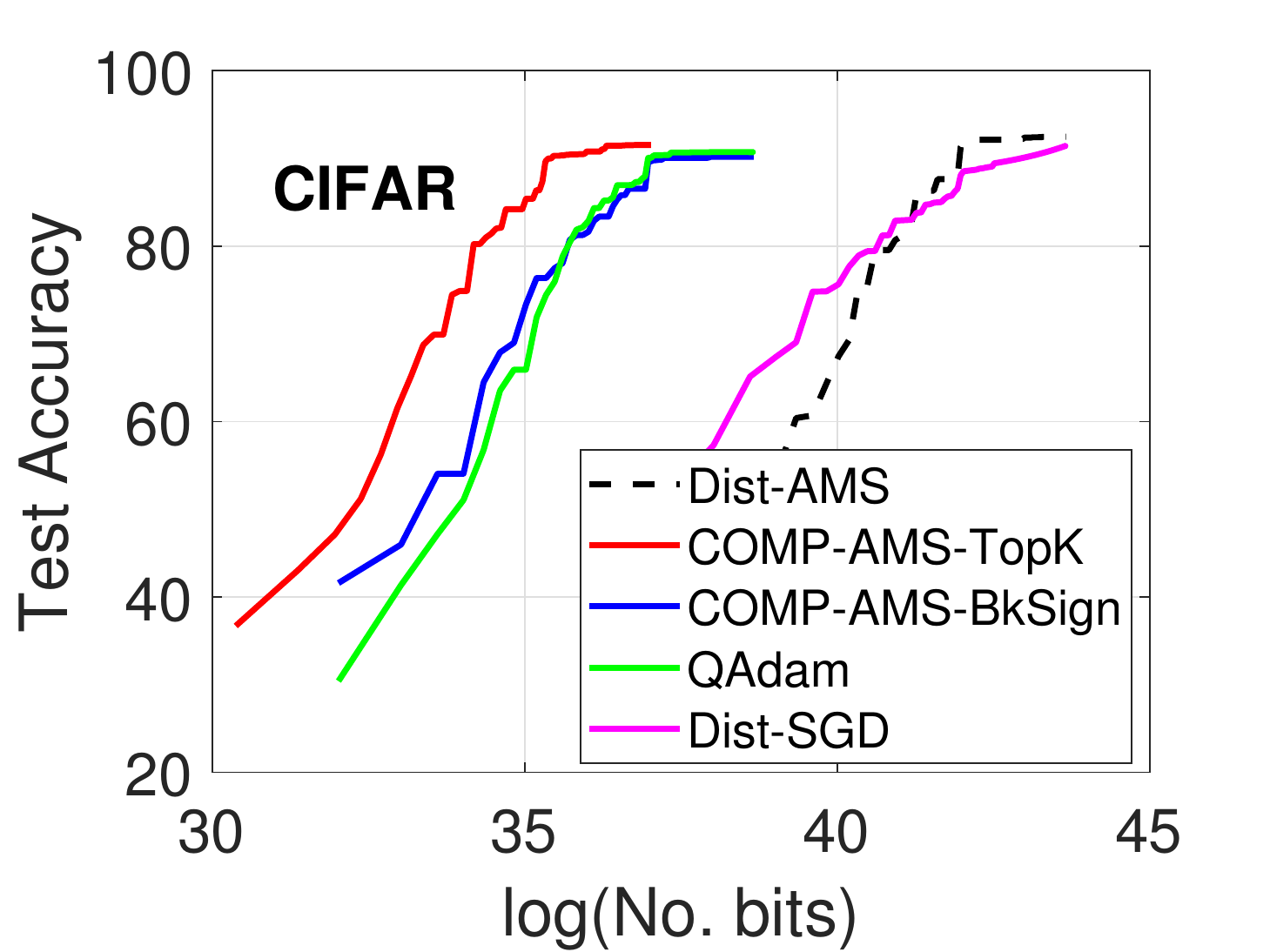}
        }
    \end{center}

	\caption{Training loss and test accuracy of different distributed training methods on CIFAR-10 with ResNet-18~\citep{Proc:Resnet_CVPR16}.}
	\label{fig:cifar resnet}
\end{figure}

\newpage\clearpage

\section{Proof of Convergence Results}\label{app:proof}

In this section, we provide the proof of our main result.

\subsection{Proof of Theorem~\ref{theo:rate}}\label{app:thm}

\begin{Theorem*}
Denote $C_0=\sqrt{\frac{4(1+q^2)^3}{(1-q^2)^2}G^2+\epsilon}$, $C_1=\frac{\beta_1}{1-\beta_1}+\frac{2q}{1-q^2}$. Under Assumption~\ref{ass:quant} to Assumption~\ref{ass:var}, with $\eta_t=\eta\leq \frac{\epsilon}{3C_0\sqrt{2L \max\{2L,C_2\}}}$, for any $T >0$, \algo\ satisfies
\begin{align*}
    \frac{1}{T}\sum_{t=1}^T \mathbb E[\|\nabla f(\theta_t)\|^2]
    &\leq 2C_0\Big(\frac{\mathbb E[f(\theta_1)-f(\theta^*)]}{T\eta}+\frac{\eta L \sigma^2}{n\epsilon}+\frac{3\eta^2 LC_0C_1^2\sigma^2}{n\epsilon^2}  \\
    &\hspace{0.7in} + \frac{12\eta^2q^2LC_0\sigma_g^2}{(1-q^2)^2\epsilon^2}+\frac{ (1+C_1)G^2d}{T\sqrt\epsilon}+\frac{\eta (1+2C_1)C_1LG^2d}{T\epsilon} \Big).
\end{align*}
\end{Theorem*}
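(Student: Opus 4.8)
The plan is to combine the standard error-feedback machinery with a momentum-removing auxiliary sequence for AMSGrad, and then run a single descent-lemma argument on the smoothed objective. First I would control the error accumulator. Since $\tilde g_{t,i}=\mathcal C(g_{t,i}+e_{t,i})$ and $e_{t+1,i}=(g_{t,i}+e_{t,i})-\mathcal C(g_{t,i}+e_{t,i})$, Assumption~\ref{ass:quant} gives the one-step contraction $\|e_{t+1,i}\|\le q\|g_{t,i}+e_{t,i}\|$. Unrolling this recursion with Young's inequality and the gradient bound $\|g_{t,i}\|\le G$ from Assumption~\ref{ass:boundgrad} yields a uniform estimate $\mathbb E\|e_{t,i}\|^2\lesssim \frac{q^2}{(1-q^2)^2}G^2$, whence $\|g_{t,i}+e_{t,i}\|$, $\|\tilde g_{t,i}\|$, and $\|\bar g_t\|$ are all bounded by a multiple of $\frac{1+q^2}{1-q^2}G$. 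This is exactly where the constants $\frac{(1+q^2)^3}{(1-q^2)^2}$ inside $C_0$ and $\frac{2q}{1-q^2}$ inside $C_1$ originate. Because $\hat v_t$ is a coordinate-wise running maximum of exponential averages of $\bar g_t^2$, these bounds give the two-sided preconditioner estimate $\frac{1}{C_0}\le \frac{1}{\sqrt{(\hat v_t)_j+\epsilon}}\le \frac{1}{\sqrt\epsilon}$ for every coordinate $j$ and every $t$. The lower bound is what converts a descent into a bound on $\|\nabla f\|^2$ and explains the overall factor $2C_0$.

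To handle the momentum $m_t$, I would introduce the shifted iterate $z_t=\frac{1}{1-\beta_1}\theta_t-\frac{\beta_1}{1-\beta_1}\theta_{t-1}$, for which $z_{t+1}-z_t=-\eta\frac{\bar g_t}{\sqrt{\hat v_t+\epsilon}}$ up to a correction term $-\frac{\eta\beta_1}{1-\beta_1}m_{t-1}\big(\frac{1}{\sqrt{\hat v_{t-1}+\epsilon}}-\frac{1}{\sqrt{\hat v_t+\epsilon}}\big)$ coming from the change of preconditioner. Writing the averaged compressed gradient as $\bar g_t=\bar g_t^{\mathrm{true}}+\bar e_t-\bar e_{t+1}$ with $\bar g_t^{\mathrm{true}}=\frac1n\sum_i g_{t,i}$ (so that $\mathbb E[\bar g_t^{\mathrm{true}}]=\nabla f(\theta_t)$), I would apply the $L$-smoothness descent lemma to $f(z_{t+1})$, take conditional expectation, and use unbiasedness to kill the noise cross-term. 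Replacing $\nabla f(z_t)$ by $\nabla f(\theta_t)$ via smoothness (note $\|z_t-\theta_t\|=O(\eta)$) and invoking the preconditioner lower bound, the leading inner-product term produces the descent $-\frac{\eta}{C_0}\|\nabla f(\theta_t)\|^2$.

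The remaining terms I would bound individually: (i) the quadratic term $\frac L2\|z_{t+1}-z_t\|^2$ together with the preconditioner-mismatch correction, controlled by $\frac{1}{\sqrt\epsilon}$ and the telescoping bound $\sum_t\big\|\frac{1}{\sqrt{\hat v_t+\epsilon}}-\frac{1}{\sqrt{\hat v_{t-1}+\epsilon}}\big\|_1\le \frac{d}{\sqrt\epsilon}$, which yields the $\frac{G^2 d}{T}$ terms; (ii) the variance of the averaged stochastic gradient, $\mathbb E\|\bar g_t^{\mathrm{true}}-\nabla f(\theta_t)\|^2\le \sigma^2/n$ by independence across the $n$ workers, giving the $1/n$ factors on the $\sigma^2$ terms; and (iii) the error-feedback contributions $\langle\nabla f(\theta_t),\frac{\bar e_t-\bar e_{t+1}}{\sqrt{\hat v_t+\epsilon}}\rangle$, which I would trade for squared-norm bounds through the uniform estimate on $\mathbb E\|\bar e_t\|^2$, isolating the part that does not average out into the $\frac{q^2\sigma_g^2}{(1-q^2)^2}$ term. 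Summing over $t=1,\dots,T$, telescoping $f(z_1)-\mathbb E f(z_{T+1})\le f(\theta_1)-f(\theta^*)$, dividing by $T$ and multiplying through by $2C_0$ then delivers the stated bound; the step-size ceiling $\eta\le\frac{\epsilon}{3C_0\sqrt{2L\max\{2L,C_1\}}}$ is precisely what absorbs the $O(\eta^2)$ self-terms back into the $-\frac{\eta}{C_0}\|\nabla f(\theta_t)\|^2$ descent.

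The main obstacle is the interaction between the time-varying AMSGrad preconditioner and the error-feedback correction: because $\bar e_t$ and $\bar e_{t+1}$ enter multiplied by $\frac{1}{\sqrt{\hat v_t+\epsilon}}$ at \emph{different} steps, the clean telescoping available for constant-step error-feedback SGD breaks, and one must instead replace those terms by squared-norm bounds via the uniform error estimate, while carefully tracking which contributions carry a $1/n$ factor (the local variance $\sigma^2$) and which do not (the heterogeneity term $\sigma_g^2$).
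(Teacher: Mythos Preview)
Your overall architecture is sound: the error-accumulator contraction, the preconditioner sandwich $\frac{1}{C_0}\le\frac{1}{\sqrt{\hat v_{t,j}+\epsilon}}\le\frac{1}{\sqrt\epsilon}$, and the telescoping of $D_t=\frac{1}{\sqrt{\hat v_{t-1}+\epsilon}}-\frac{1}{\sqrt{\hat v_t+\epsilon}}$ are exactly the right pieces. But there is a genuine gap precisely where you flag the ``main obstacle,'' and your proposed workaround does not close it.

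With only the momentum-removing sequence $z_t$, the descent inequality contains the first-order term
\[
-\eta\Big\langle\nabla f(\theta_t),\ \frac{\bar e_t-\bar e_{t+1}}{\sqrt{\hat v_t+\epsilon}}\Big\rangle.
\]
If you trade this for squared norms via Young's inequality and the uniform estimate $\|\bar e_t\|^2\lesssim\frac{q^2G^2}{(1-q^2)^2}$ (or even the variance-refined version), then after absorbing the $\frac{\eta\rho}{2}\|\nabla f(\theta_t)\|^2$ piece into the descent (which forces $\rho\sim 1/C_0$), the residual is of order $\frac{\eta C_0}{\epsilon}\cdot\frac{q^2}{(1-q^2)^2}(\sigma^2\text{ or }G^2)$ \emph{per step}. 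Summed over $T$ and divided by $T\eta/C_0$, this leaves an $O(1)$ constant independent of $\eta$ and $T$, whereas every variance term in the theorem carries at least one factor of $\eta$. So the stated bound does not follow from your plan; this is not a cosmetic issue but a loss of one full power of $\eta$.

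The paper's fix---and this is the missing idea---is to fold the error accumulator into the auxiliary sequence \emph{before} removing the momentum, exactly as in EF-SGD one shifts by $\eta e_t$. Concretely, set $\mathcal E_t=(1-\beta_1)\sum_{\tau\le t}\beta_1^{t-\tau}\bar e_\tau$ (the momentum-weighted accumulated error), then define $\theta_t'=\theta_t-\eta\,\mathcal E_t/\sqrt{\hat v_{t-1}+\epsilon}$ and finally $x_t=\theta_t'-\eta\frac{\beta_1}{1-\beta_1}\,m_{t-1}'/\sqrt{\hat v_{t-1}+\epsilon}$, where $m_t'$ is the momentum of the \emph{uncompressed} average. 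The recursion for $x_t$ is now driven by the true averaged gradient plus only preconditioner-drift terms $D_t\mathcal E_t$ and $\frac{\beta_1}{1-\beta_1}D_t m_{t-1}'$; the raw difference $\bar e_t-\bar e_{t+1}$ never appears at first order. The price is the cross term $\langle\nabla f(\theta_t)-\nabla f(x_t),\cdot\rangle$, but since $\|x_t-\theta_t\|\le \eta C_1 G/\sqrt\epsilon=O(\eta)$, smoothness plus Young converts it into $\frac{\eta^3 L}{\rho\epsilon}\,\mathbb E\big[\|\mathcal E_t\|^2+\frac{\beta_1^2}{(1-\beta_1)^2}\|m_{t-1}'\|^2\big]$. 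A separate lemma then bounds $\sum_t\mathbb E\|\mathcal E_t\|^2\le \frac{4q^2}{(1-q^2)^2}\big(T(\sigma^2+\sigma_g^2)+\sum_t\mathbb E\|\nabla f(\theta_t)\|^2\big)$, with the gradient part re-absorbed by the step-size ceiling; this is where the $\eta^2$-scaled $\sigma^2$ and $\sigma_g^2$ terms in the theorem actually come from. Your $z_t$ is the right second stage; you are just missing the first shift by $\eta\,\mathcal E_t/\sqrt{\hat v_{t-1}+\epsilon}$, which is what buys the extra $\eta$ factor on every error-feedback contribution.
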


\begin{proof}
We first clarify some notations. At time $t$, let the full-precision gradient of the $i$-th worker be $g_{t,i}$, the error accumulator be $e_{t,i}$, and the compressed gradient be $\tilde g_{t,i}=\mathcal C(g_{t,i}+e_{t,i})$. Slightly different from the notations in the algorithm, we denote $\bar g_t=\frac{1}{n}\sum_{i=1}^n g_{t,i}$, $\overline{\tilde g}_t=\frac{1}{n}\sum_{i=1}^n \tilde g_{t,i}$ and $\bar e_t=\frac{1}{n}\sum_{i=1}^n e_{t,i}$. The second moment computed by the compressed gradients is denoted as $v_t=\beta_2 v_{t-1}+(1-\beta_2) \overline{\tilde g}_t^2$, and $\hat v_t=\max\{\hat v_{t-1}, v_t\}$. Also, the first order moving average sequence
\begin{align*}
m_t=\beta_1 m_{t-1}+(1-\beta_1)\overline{\tilde g}_t \quad & \textrm{and} \quad m_t'=\beta_1 m_{t-1}'+(1-\beta_1) \bar g_t,
\end{align*}
where $m_t'$ represents the first moment moving average sequence using the uncompressed stochastic gradients. By construction we have $m_t'=(1-\beta_1)\sum_{\tau=1}^t \beta_1^{t-\tau} \bar g_\tau$.

Our proof will use the following auxiliary sequences,
\begin{align*}
& \mathcal E_{t+1}\eqdef (1-\beta_1)\sum_{\tau=1}^{t+1} \beta_1^{t+1-\tau} \bar e_\tau,\\
&\theta_{t+1}':=\theta_{t+1}-\eta \frac{\mathcal E_{t+1}}{\sqrt{\hat v_t+\epsilon}}.
\end{align*}

Then, we can write the evolution of $\theta_t'$ as
\begin{align*}
    \theta_{t+1}'&=\theta_{t+1}-\eta \frac{\mathcal E_{t+1}}{\sqrt{\hat v_t+\epsilon}}\\
    &=\theta_t-\eta\frac{(1-\beta_1)\sum_{\tau=1}^{t} \beta_1^{t-\tau}\overline{\tilde g}_\tau+(1-\beta_1)\sum_{\tau=1}^{t+1} \beta_1^{t+1-\tau}\bar e_\tau}{\sqrt{\hat v_t+\epsilon}}\\
    &=\theta_t-\eta\frac{(1-\beta_1)\sum_{\tau=1}^{t} \beta_1^{t-\tau}(\overline{\tilde g}_\tau+ \bar e_{\tau+1})+(1-\beta)\beta_1^t \bar e_1}{\sqrt{\hat v_t+\epsilon}}\\
    &=\theta_t-\eta\frac{(1-\beta_1)\sum_{\tau=1}^{t} \beta_1^{t-\tau} \bar e_\tau}{\sqrt{\hat v_t+\epsilon}}-\eta\frac{m_t'}{\sqrt{\hat v_t+\epsilon}}\\
    &=\theta_t-\eta\frac{\mathcal E_t}{\sqrt{\hat v_{t-1}+\epsilon}}-\eta\frac{m_t'}{\sqrt{\hat v_t+\epsilon}}+\eta(\frac{1}{\sqrt{\hat v_{t-1}+\epsilon}}-\frac{1}{\sqrt{\hat v_t+\epsilon}})\mathcal E_t\\
    &\overset{(a)}{=}\theta_t'-\eta\frac{m_t'}{\sqrt{\hat v_t+\epsilon}}+\eta(\frac{1}{\sqrt{\hat v_{t-1}+\epsilon}}-\frac{1}{\sqrt{\hat v_t+\epsilon}})\mathcal E_t\\
    &\eqdef \theta_t'-\eta \frac{m_t'}{\sqrt{\hat v_t+\epsilon}}+\eta D_t\mathcal E_t,
\end{align*}
where (a) uses the fact that for every $i\in[n]$, $\tilde g_{t,i}+e_{{t+1,i}}=g_{t,i}+e_{t,i}$, and $e_{t,1}=0$ at initialization. Further define the virtual iterates:
\begin{align*}
    x_{t+1}\eqdef\theta_{t+1}'-\eta \frac{\beta_1}{1-\beta_1} \frac{m_t'}{\sqrt{\hat v_t+\epsilon}},
\end{align*}
which follows the recurrence:
\begin{align*}
    x_{t+1}&=\theta_{t+1}'-\eta\frac{\beta_1}{1-\beta_1} \frac{m_t'}{\sqrt{\hat v_t+\epsilon}}\\
    &=\theta_t'-\eta\frac{m_t'}{\sqrt{\hat v_t+\epsilon}}-\eta\frac{\beta_1}{1-\beta_1} \frac{m_t'}{\sqrt{\hat v_t+\epsilon}}+\eta D_t\mathcal E_t\\
    &=\theta_t'-\eta \frac{\beta_1 m_{t-1}'+(1-\beta_1)\bar g_t+\frac{\beta_1^2}{1-\beta_1}m_{t-1}'+\beta_1 \bar g_t}{\sqrt{\hat v_t+\epsilon}}+\eta D_t\mathcal E_t\\
    &=\theta_t'-\eta\frac{\beta_1}{1-\beta_1}\frac{m_{t-1}'}{\sqrt{\hat v_t+\epsilon}}-\eta\frac{\bar g_t}{\sqrt{\hat v_t+\epsilon}}+\eta D_t\mathcal E_t\\
    &=x_t-\eta\frac{\bar g_t}{\sqrt{\hat v_t+\epsilon}}+\eta\frac{\beta_1}{1-\beta_1} D_t m_{t-1}'+\eta D_t\mathcal E_t.
\end{align*}

When summing over $t=1,...,T$, the difference sequence $D_t$ satisfies the bounds of Lemma~\ref{lemma:bound difference}.

By the smoothness Assumption~\ref{ass:smooth}, we have
\begin{align*}
    f(x_{t+1})\leq f(x_t)+\langle \nabla f(x_t), x_{t+1}-x_t\rangle+\frac{L}{2}\| x_{t+1}-x_t\|^2.
\end{align*}
Taking expectation w.r.t. the randomness at time $t$, we obtain
\begin{align}
    &\mathbb E[f(x_{t+1})]-f(x_t) \nonumber\\
    &\leq -\eta\mathbb E[\langle \nabla f(x_t), \frac{\bar g_t}{\sqrt{\hat v_t+\epsilon}}\rangle]+\eta \mathbb E[\langle \nabla f(x_t), \frac{\beta_1}{1-\beta_1}D_tm_{t-1}'+D_t\mathcal E_t\rangle] \nonumber\\
    &\hspace{2in} +\frac{\eta^2L}{2}\mathbb E[\|\frac{\bar g_t}{\sqrt{\hat v_t+\epsilon}}-\frac{\beta_1}{1-\beta_1}D_tm_{t-1}'- D_t\mathcal E_t\|^2] \nonumber\\
    &=\underbrace{-\eta\mathbb E[\langle \nabla f(\theta_t), \frac{\bar g_t}{\sqrt{\hat v_t+\epsilon}}\rangle]}_{I}+\underbrace{\eta \mathbb E[\langle \nabla f(x_t), \frac{\beta_1}{1-\beta_1}D_tm_{t-1}'+D_t\mathcal E_t\rangle]}_{II} \nonumber\\
    &\hspace{0.5in} +\underbrace{\frac{\eta^2L}{2}\mathbb E[\|\frac{\bar g_t}{\sqrt{\hat v_t+\epsilon}}-\frac{\beta_1}{1-\beta_1}D_tm_{t-1}'- D_t\mathcal E_t\|^2]}_{III}+\underbrace{\eta\mathbb E[\langle \nabla f(\theta_t)-\nabla f(x_t), \frac{\bar g_t}{\sqrt{\hat v_t+\epsilon}} \rangle]}_{IV}, \label{eq0}
\end{align}

In the following, we bound the terms separately.

\textbf{Bounding term I.} We have
\begin{align}
    I&=-\eta\mathbb E[\langle \nabla f(\theta_t), \frac{\bar g_t}{\sqrt{\hat v_{t-1}+\epsilon}}]-\eta\mathbb E[\langle \nabla f(\theta_t), (\frac{1}{\sqrt{\hat v_t+\epsilon}}-\frac{1}{\sqrt{\hat v_{t-1}+\epsilon}})\bar g_t\rangle] \nonumber\\
    &\leq -\eta\mathbb E[\langle \nabla f(\theta_t), \frac{\nabla f(\theta_t)}{\sqrt{\hat v_{t-1}+\epsilon}}]+\eta G^2\mathbb E[\|D_t\|].  \nonumber\\
    &\leq -\frac{\eta}{\sqrt{\frac{4(1+q^2)^3}{(1-q^2)^2}G^2+\epsilon}}\mathbb E[\|\nabla f(\theta_t)\|^2]+\eta G^2\mathbb E[\|D_t\|_1], \label{eq:I}
\end{align}
where we use Assumption~\ref{ass:boundgrad}, Lemma~\ref{lemma:bound v_t} and the fact that $l_2$ norm is no larger than $l_1$ norm.

\textbf{Bounding term II.} By the definition of $\mathcal E_t$, we know that $\|\mathcal E_t\|\leq (1-\beta_1)\sum_{\tau=1}^t \beta_1^{t-\tau}\|\bar e_t\|\leq \frac{2q}{1-q^2}G$. Then we have
\begin{align}
    II&\leq\eta(\mathbb E[\langle  \nabla f(\theta_t),\frac{\beta_1}{1-\beta_1}D_tm_{t-1}'+D_t\mathcal E_t\rangle]+\mathbb E[\langle  \nabla f(x_t)-\nabla f(\theta_t),\frac{\beta_1}{1-\beta_1}D_tm_{t-1}'+D_t\mathcal E_t\rangle]) \nonumber\\
    &\leq \eta\mathbb E[\|\nabla f(\theta_t)\|\|\frac{\beta_1}{1-\beta_1}D_tm_{t-1}'+D_t\mathcal E_t\|]+\eta^2 \ L \mathbb E[\|\frac{\frac{\beta_1}{1-\beta_1}m_{t-1}'+\mathcal E_t}{\sqrt{\hat v_{t-1}+\epsilon}}\| \|\frac{\beta_1}{1-\beta_1}D_tm_{t-1}'+D_t\mathcal E_t\|] \nonumber\\
    &\leq \eta C_1 G^2 \mathbb E[\|D_t\|_1]+\frac{\eta^2 C_1^2 LG^2}{\sqrt\epsilon}\mathbb E[\|D_t\|_1],  \label{eq:II}
\end{align}
where $C_1\eqdef \frac{\beta_1}{1-\beta_1}+\frac{2q}{1-q^2}$. The second inequality is because of smoothness of $f(\theta)$, and the last inequality is due to Lemma~\ref{lemma:bound e_t}, Assumption~\ref{ass:boundgrad} and the property of norms.

\textbf{Bounding term III.} This term can be bounded as follows:
\begin{align}
    III&\leq \eta^2 L\mathbb E[\|\frac{\bar g_t}{\sqrt{\hat v_t+\epsilon}}\|^2]+\eta^2 L\mathbb E[\|\frac{\beta_1}{1-\beta_1}D_tm_{t-1}'- D_t\mathcal E_t\|^2]] \nonumber\\
    &\leq \frac{\eta^2 L}{\epsilon}\mathbb E[\|\frac{1}{n}\sum_{i=1}^n g_{t,i}-\nabla f(\theta_t)+\nabla f(\theta_t)\|^2]+\eta^2 L\mathbb E[\|D_t(\frac{\beta_1}{1-\beta_1}m_{t-1}'-\mathcal E_t)\|^2] \nonumber\\
    &\overset{(a)}{\leq} \frac{\eta^2 L}{\epsilon}\mathbb E[\|\nabla f(\theta_t)\|^2]+\frac{\eta^2 L \sigma^2}{n \epsilon}+\eta^2 C_1^2 LG^2 \mathbb E[\|D_t\|^2],  \label{eq:III}
\end{align}
where (a) follows from $\nabla f(\theta_t)=\frac{1}{n}\sum_{i=1}^n \nabla f_i(\theta_t)$ and Assumption~\ref{ass:var} that $g_{t,i}$ is unbiased of $\nabla f_i(\theta_t)$ and has bounded variance $\sigma^2$.

\textbf{Bounding term IV.} We have
\begin{align}
    IV&=\eta\mathbb E[\langle \nabla f(\theta_t)-\nabla f(x_t), \frac{\bar g_t}{\sqrt{\hat v_{t-1}+\epsilon}} \rangle]+\eta\mathbb E[\langle \nabla f(\theta_t)-\nabla f(x_t), (\frac{1}{\sqrt{\hat v_t+\epsilon}}-\frac{1}{\sqrt{\hat v_{t-1}+\epsilon}})\bar g_t \rangle] \nonumber\\
    &\leq \eta\mathbb E[\langle \nabla f(\theta_t)-\nabla f(x_t), \frac{\nabla f(\theta_t)}{\sqrt{\hat v_{t-1}+\epsilon}} \rangle]+\eta^2 L\mathbb E[\|\frac{\frac{\beta_1}{1-\beta_1}m_{t-1}'+\mathcal E_t}{\sqrt{\hat v_{t-1}+\epsilon}}\|\|D_t g_t\|] \nonumber\\
    &\overset{(a)}{\leq} \frac{\eta \rho}{2\epsilon}\mathbb E[\|\nabla f(\theta_t)\|^2]+\frac{\eta}{2\rho}\mathbb E[\|\nabla f(\theta_t)-\nabla f(x_t)\|^2]+\frac{\eta^2 C_1LG^2}{\sqrt\epsilon} \mathbb E[\|D_t\|]  \nonumber\\
    &\overset{(b)}{\leq} \frac{\eta \rho}{2\epsilon}\mathbb E[\|\nabla f(\theta_t)\|^2]+\frac{\eta^3 L}{2\rho}\mathbb E[\|\frac{\frac{\beta_1}{1-\beta_1}m_{t-1}'+\mathcal E_t}{\sqrt{\hat v_{t-1}+\epsilon}}\|^2]+\frac{\eta^2 C_1LG^2}{\sqrt\epsilon} \mathbb E[\|D_t\|_1],  \label{eq:IV}
\end{align}
where (a) is due to Young's inequality and (b) is based on Assumption~\ref{ass:smooth}.

Regarding the second term in \eqref{eq:IV}, by Lemma~\ref{lemma:bound big E_t} and Lemma~\ref{lemma:m_t,m_t'}, summing over $t=1,...,T$ we have
\begin{align}
    &\sum_{t=1}^T\frac{\eta^3 L}{2\rho}\mathbb E[\|\frac{\frac{\beta_1}{1-\beta_1}m_{t-1}'+\mathcal E_t}{\sqrt{\hat v_{t-1}+\epsilon}}\|^2] \nonumber\\
    &\leq \sum_{t=1}^T\frac{\eta^3 L}{2\rho\epsilon} \mathbb E[\|\frac{\beta_1}{1-\beta_1}m_{t-1}'+\mathcal E_t\|^2] \nonumber\\
    &\leq \sum_{t=1}^T\frac{\eta^3 L}{\rho\epsilon}\Big[ \frac{\beta_1^2}{(1-\beta_1)^2}\mathbb E[\|m_t'\|^2]+ \mathbb E[\|\mathcal E_t\|^2]\Big] \nonumber\\
    &\leq \frac{T\eta^3\beta_1^2 L \sigma^2}{n\rho(1-\beta_1)^2\epsilon}+\frac{\eta^3\beta_1^2 L}{\rho(1-\beta_1)^2\epsilon}\sum_{t=1}^T \mathbb E[\|\nabla f(\theta_t)\|^2] \nonumber\\
    &\hspace{2in} +\frac{4T\eta^3q^2L}{\rho(1-q^2)^2\epsilon}(\sigma^2+\sigma_g^2) + \frac{4\eta^3 q^2L}{\rho(1-q^2)^2\epsilon} \sum_{t=1}^T \mathbb E[\|\nabla f(\theta_t)\|^2 ] \nonumber\\
    &=\frac{T\eta^3 LC_2\sigma^2}{n\rho\epsilon}+\frac{4T\eta^3q^2L\sigma_g^2}{\rho(1-q^2)^2\epsilon}+\frac{\eta^3LC_2}{\rho\epsilon}\sum_{t=1}^T \mathbb E[\|\nabla f(\theta_t)\|^2 ], \label{eq:IV error}
\end{align}
with $C_2\eqdef \frac{\beta_1^2}{(1-\beta_1)^2}+\frac{4q^2}{(1-q^2)^2}$. Now integrating \eqref{eq:I}, \eqref{eq:II}, \eqref{eq:III}, \eqref{eq:IV} and \eqref{eq:IV error} into \eqref{eq0}, taking the telescoping summation over $t=1,...,T$, we obtain
\begin{align*}
    &\mathbb E[f(x_{T+1})-f(x_1)]\\
    &\leq (-\frac{\eta}{C_0}+\frac{\eta^2 L}{\epsilon}+\frac{\eta \rho}{2\epsilon}+\frac{\eta^3LC_2}{\rho\epsilon})\sum_{t=1}^T\mathbb E[\|\nabla f(\theta_t)\|^2]+\frac{T\eta^2 L \sigma^2}{n\epsilon}+\frac{T\eta^3 LC_2\sigma^2}{n\rho\epsilon}+\frac{4T\eta^3q^2L\sigma_g^2}{\rho(1-q^2)^2\epsilon}  \\
    &\hspace{0.8in} + (\eta(1+C_1)G^2+\frac{\eta^2 (1+C_1)C_1LG^2}{\sqrt\epsilon})\sum_{t=1}^T\mathbb E[\|D_t\|_1]+\eta^2C_1^2LG^2 \sum_{t=1}^T\mathbb E[\|D_t\|^2.
\end{align*}
with $C_0\eqdef \sqrt{\frac{4(1+q^2)^3}{(1-q^2)^2}G^2+\epsilon}$. Setting $\eta\leq \frac{\epsilon}{3C_0\sqrt{2L \max\{2L,C_2\}}}$ and choosing $\rho=\frac{\epsilon}{3C_0}$, we further arrive at
\begin{align*}
    &\mathbb E[f(x_{T+1})-f(x_1)]\\
    &\leq -\frac{\eta}{2C_0}\sum_{t=1}^T\mathbb E[\|\nabla f(\theta_t)\|^2]+\frac{T\eta^2 L \sigma^2}{n\epsilon}+\frac{3T\eta^3 LC_0C_2\sigma^2}{n\epsilon^2}+\frac{12T\eta^3q^2LC_0\sigma_g^2}{(1-q^2)^2\epsilon^2}  \\
    &\hspace{2.2in} + \frac{\eta (1+C_1)G^2d}{\sqrt\epsilon}+\frac{\eta^2 (1+2C_1)C_1LG^2d}{\epsilon}.
\end{align*}
where the inequality follows from Lemma~\ref{lemma:bound difference}. Re-arranging terms, we get that
\begin{align*}
    \frac{1}{T}\sum_{t=1}^T \mathbb E[\|\nabla f(\theta_t)\|^2]&\leq 2C_0\Big(\frac{\mathbb E[f(x_1)-f(x_{T+1})]}{T\eta}+\frac{\eta L \sigma^2}{n\epsilon}+\frac{3\eta^2 LC_0C_2\sigma^2}{n\epsilon^2}  \\
    &\hspace{0.7in} + \frac{12\eta^2q^2LC_0\sigma_g^2}{(1-q^2)^2\epsilon^2}+\frac{ (1+C_1)G^2d}{T\sqrt\epsilon}+\frac{\eta (1+2C_1)C_1LG^2d}{T\epsilon} \Big)\\
    &\leq 2C_0\Big(\frac{\mathbb E[f(\theta_1)-f(\theta^*)]}{T\eta}+\frac{\eta L \sigma^2}{n\epsilon}+\frac{3\eta^2 LC_0C_1^2\sigma^2}{n\epsilon^2}  \\
    &\hspace{0.7in} + \frac{12\eta^2q^2LC_0\sigma_g^2}{(1-q^2)^2\epsilon^2}+\frac{ (1+C_1)G^2d}{T\sqrt\epsilon}+\frac{\eta (1+2C_1)C_1LG^2d}{T\epsilon} \Big),
\end{align*}
where $C_0=\sqrt{\frac{4(1+q^2)^3}{(1-q^2)^2}G^2+\epsilon}$, $C_1=\frac{\beta_1}{1-\beta_1}+\frac{2q}{1-q^2}$. The last inequality is because $x_1=\theta_1$, $\theta^* \eqdef \argmin_\theta f(\theta)$ and the fact that $C_2\leq C_1^2$. This completes the proof.
\end{proof}

\subsection{Intermediate Lemmata}\label{app:lemmas}

The lemmas used in the proof of Theorem~\ref{theo:rate} are given as below.

\begin{Lemma} \label{lemma:m_t,m_t'}
Under Assumption~\ref{ass:quant} to Assumption~\ref{ass:var} we have:
\begin{align*}
    &\|m_t'\|\leq G, \quad \forall t,\\
    &\sum_{t=1}^T\mathbb E\|m_t'\|^2\leq \frac{T\sigma^2}{n}+\sum_{t=1}^T \mathbb E[\|\nabla f(\theta_t)\|^2].
\end{align*}
\end{Lemma}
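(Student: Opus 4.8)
The plan is to work directly from the closed form $m_t'=(1-\beta_1)\sum_{\tau=1}^t \beta_1^{t-\tau}\bar g_\tau$ already recorded in the proof of the theorem, treating $m_t'$ as a geometrically weighted average of the averaged stochastic gradients $\bar g_\tau=\frac1n\sum_{i=1}^n g_{\tau,i}$. For the first (uniform) bound, I would apply the triangle inequality together with Assumption~\ref{ass:boundgrad}: since $\|\bar g_\tau\|\le \frac1n\sum_i\|g_{\tau,i}\|\le G$, we obtain $\|m_t'\|\le (1-\beta_1)\sum_{\tau=1}^t\beta_1^{t-\tau}\|\bar g_\tau\|\le G(1-\beta_1)\sum_{j=0}^{t-1}\beta_1^{j}\le G$, using that the geometric series is bounded by $1/(1-\beta_1)$.

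For the summed second-moment bound I would first strip off the first-order filtering. The weights $w_\tau=(1-\beta_1)\beta_1^{t-\tau}$ satisfy $\sum_{\tau=1}^t w_\tau=1-\beta_1^t\le 1$, so by Cauchy--Schwarz (equivalently, convexity of $\|\cdot\|^2$ with subprobability weights) one gets $\|m_t'\|^2\le\big(\sum_\tau w_\tau\big)\sum_\tau w_\tau\|\bar g_\tau\|^2\le (1-\beta_1)\sum_{\tau=1}^t\beta_1^{t-\tau}\|\bar g_\tau\|^2$. Summing over $t$ and exchanging the order of summation, the inner geometric sum $\sum_{t=\tau}^T\beta_1^{t-\tau}\le 1/(1-\beta_1)$ cancels the $(1-\beta_1)$ prefactor, leaving $\sum_{t=1}^T\|m_t'\|^2\le\sum_{t=1}^T\|\bar g_t\|^2$. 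This reduces the claim to controlling $\sum_t\mathbb E\|\bar g_t\|^2$.

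It then remains to carry out a bias--variance decomposition of each $\mathbb E\|\bar g_t\|^2$. Writing $\bar g_t=(\bar g_t-\nabla f(\theta_t))+\nabla f(\theta_t)$ and using $\nabla f(\theta_t)=\frac1n\sum_i\nabla f_i(\theta_t)$ together with the unbiasedness $\mathbb E[g_{t,i}]=\nabla f_i(\theta_t)$ from Assumption~\ref{ass:boundgrad}, the cross term vanishes in expectation (conditioning on $\theta_t$), giving $\mathbb E\|\bar g_t\|^2=\mathbb E\|\bar g_t-\nabla f(\theta_t)\|^2+\mathbb E\|\nabla f(\theta_t)\|^2$. The quantitative step is then to bound the centered term by $\sigma^2/n$: treating the per-worker noises $g_{t,i}-\nabla f_i(\theta_t)$ as independent across the $n$ workers, their cross-covariances vanish, so $\mathbb E\|\frac1n\sum_i(g_{t,i}-\nabla f_i(\theta_t))\|^2=\frac1{n^2}\sum_i\mathbb E\|g_{t,i}-\nabla f_i(\theta_t)\|^2\le\sigma^2/n$ by the local-variance part of Assumption~\ref{ass:var}. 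Summing over $t$ and combining with the reduction above yields exactly $\frac{T\sigma^2}{n}+\sum_t\mathbb E\|\nabla f(\theta_t)\|^2$.

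The only genuinely load-bearing ingredient, and the step I would flag as the main subtlety, is the \emph{independence of the per-worker stochastic gradients}, since this is what produces the favorable $1/n$ variance scaling responsible for the linear speedup; without it one could only conclude the trivial bound $\sigma^2$. Everything else is routine manipulation of the geometric weights plus a standard orthogonality (bias--variance) argument.
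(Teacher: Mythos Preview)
Your proposal is correct and follows essentially the same route as the paper's proof: triangle inequality plus the geometric weights for the uniform bound, and Cauchy--Schwarz on the convex combination followed by a bias--variance split of $\mathbb E\|\bar g_t\|^2$ for the summed second-moment bound. The only cosmetic difference is ordering: the paper first bounds $\mathbb E\|\bar g_t\|^2\le \sigma^2/n+\mathbb E\|\nabla f(\theta_t)\|^2$ and then applies Cauchy--Schwarz and sums, whereas you first reduce $\sum_t\mathbb E\|m_t'\|^2\le\sum_t\mathbb E\|\bar g_t\|^2$ via the geometric-sum swap and only then decompose; both arrive at the same inequality. Your explicit flag that the $\sigma^2/n$ scaling rests on independence of the per-worker stochastic gradients is apt---the paper uses exactly this step without calling it out, so your proposal is in fact slightly more careful on that point.
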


\begin{proof}
For the first part, it is easy to see that by Assumption~\ref{ass:boundgrad},
\begin{align*}
    \|m_t'\|&=(1-\beta_1)\|\sum_{\tau=1}^t \beta_1^{t-\tau} \bar g_t\|\leq G.
\end{align*}
For the second claim, the expected squared norm of average stochastic gradient can be bounded by
\begin{align*}
    \mathbb E[\|\bar g_t^2\|]&=\mathbb E[\|\frac{1}{n}\sum_{i=1}^n g_{t,i}-\nabla f(\theta_t)+\nabla f(\theta_t)\|^2]\\
    &=\mathbb E[\|\frac{1}{n}\sum_{i=1}^n (g_{t,i}-\nabla f_i(\theta_t))\|^2]+\mathbb E[\|\nabla f(\theta_t)\|^2]\\
    &\leq \frac{\sigma^2}{n}+\mathbb E[\|\nabla f(\theta_t)\|^2],
\end{align*}
where we use Assumption~\ref{ass:var} that $g_{t,i}$ is unbiased with bounded variance. Let $\bar g_{t,j}$ denote the $j$-th coordinate of $\bar g_t$. By the updating rule of \algo, we have
\begin{align*}
    \mathbb E[\|m_t'\|^2]&=\mathbb E[\|(1-\beta_1)\sum_{\tau=1}^t\beta_1^{t-\tau} \bar g_\tau\|^2]\\
    &\leq (1-\beta_1)^2\sum_{j=1}^d \mathbb E[(\sum_{\tau=1}^t\beta_1^{t-\tau} \bar g_{\tau,j})^2]\\
    &\overset{(a)}{\leq} (1-\beta_1)^2\sum_{j=1}^d \mathbb E[(\sum_{\tau=1}^t\beta_1^{t-\tau})(\sum_{\tau=1}^t\beta_1^{t-\tau} \bar g_{\tau,j}^2)]\\
    &\leq (1-\beta_1)\sum_{\tau=1}^t \beta_1^{t-\tau}\mathbb E[\|\bar g_\tau\|^2]\\
    &\leq \frac{\sigma^2}{n}+(1-\beta_1)\sum_{\tau=1}^t \beta_1^{t-\tau}\mathbb E[\|\nabla f(\theta_t)\|^2],
\end{align*}
where (a) is due to Cauchy-Schwartz inequality. Summing over $t=1,...,T$, we obtain
\begin{align*}
    \sum_{t=1}^T\mathbb E\|m_t'\|^2\leq \frac{T\sigma^2}{n}+\sum_{t=1}^T \mathbb E[\|\nabla f(\theta_t)\|^2].
\end{align*}
This completes the proof.
\end{proof}

\begin{Lemma} \label{lemma:bound e_t}
Under Assumption~\ref{ass:var}, we have for $\forall t$ and each local worker $\forall i\in [n]$,
\begin{align*}
    &\|e_{t,i}\|^2\leq \frac{4q^2}{(1-q^2)^2}G^2,\\
    &\mathbb E[\|e_{t+1,i}\|^2]\leq \frac{4q^2}{(1-q^2)^2}\sigma^2 + \frac{2q^2}{1-q^2}\sum_{\tau=1}^t (\frac{1+q^2}{2})^{t-\tau} \mathbb E[\|\nabla f_i(\theta_\tau)\|^2].
\end{align*}
\end{Lemma}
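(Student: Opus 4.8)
The plan is to run the standard error-feedback argument directly off the recursion $e_{t+1,i}=(g_{t,i}+e_{t,i})-\mathcal C(g_{t,i}+e_{t,i})$, which comes straight from lines 7--8 of Algorithm~\ref{alg:sparsams}. First I would apply Assumption~\ref{ass:quant} at the point $x=g_{t,i}+e_{t,i}$ to get
\[
\|e_{t+1,i}\|^2=\|\mathcal C(g_{t,i}+e_{t,i})-(g_{t,i}+e_{t,i})\|^2\leq q^2\|g_{t,i}+e_{t,i}\|^2 ,
\]
so the whole problem collapses to controlling $\|g_{t,i}+e_{t,i}\|^2$ and unrolling a one-step recursion in $\|e_{t,i}\|^2$, starting from $e_{1,i}=\bm 0$.

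The key step, and really the only delicate one, is to split $\|g_{t,i}+e_{t,i}\|^2$ by the weighted Young inequality $\|a+b\|^2\leq(1+\gamma)\|a\|^2+(1+\gamma^{-1})\|b\|^2$ with the free parameter $\gamma$ tuned so that the recursion genuinely contracts. Taking $\gamma=\frac{1-q^2}{2q^2}$ makes $q^2(1+\gamma)=\frac{1+q^2}{2}$ and $q^2(1+\gamma^{-1})=\frac{q^2(1+q^2)}{1-q^2}$, which yields
\[
\|e_{t+1,i}\|^2\leq\frac{1+q^2}{2}\,\|e_{t,i}\|^2+\frac{q^2(1+q^2)}{1-q^2}\,\|g_{t,i}\|^2 .
\]
Since $q<1$, the factor $\frac{1+q^2}{2}$ is strictly below $1$, which is exactly what makes the accumulated error summable; everything downstream is just geometric-series bookkeeping once this contraction is in hand.

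From here both claims follow by unrolling. For the deterministic bound I would insert $\|g_{t,i}\|\leq G$ from Assumption~\ref{ass:boundgrad}, sum the geometric series $\sum_{j\geq 0}\big(\frac{1+q^2}{2}\big)^j=\frac{2}{1-q^2}$, and use $1+q^2\leq 2$ to collapse $\frac{2q^2(1+q^2)}{(1-q^2)^2}G^2$ into $\frac{4q^2}{(1-q^2)^2}G^2$. For the second-moment bound I would take full expectations and invoke Assumption~\ref{ass:var} together with unbiasedness (Assumption~\ref{ass:boundgrad}) in the form $\mathbb E[\|g_{t,i}\|^2]\leq\sigma^2+\mathbb E[\|\nabla f_i(\theta_t)\|^2]$, so that unrolling gives
\[
\mathbb E[\|e_{t+1,i}\|^2]\leq\frac{q^2(1+q^2)}{1-q^2}\sum_{\tau=1}^t\Big(\tfrac{1+q^2}{2}\Big)^{t-\tau}\big(\sigma^2+\mathbb E[\|\nabla f_i(\theta_\tau)\|^2]\big).
\]
Evaluating the geometric sum on the $\sigma^2$ part reproduces the $\frac{4q^2}{(1-q^2)^2}\sigma^2$ term (again using $1+q^2\leq 2$), while bounding the prefactor $\frac{q^2(1+q^2)}{1-q^2}\leq\frac{2q^2}{1-q^2}$ on the remaining sum gives the stated $\nabla f_i$ term. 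The one subtlety worth flagging is that $\theta_\tau$ is itself random, so the decomposition of $\mathbb E[\|g_{\tau,i}\|^2]$ must be carried out under the full expectation rather than conditionally on $\theta_\tau$.
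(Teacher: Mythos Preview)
Your proposal is correct and follows exactly the same route as the paper: apply the $q$-deviate bound, split via Young's inequality with parameter $\rho=\frac{1-q^2}{2q^2}$ to obtain the contraction factor $\frac{1+q^2}{2}$, then unroll the geometric recursion and insert either $\|g_{t,i}\|\leq G$ or the bias--variance decomposition $\mathbb E[\|g_{t,i}\|^2]\leq\sigma^2+\mathbb E[\|\nabla f_i(\theta_t)\|^2]$. The only cosmetic difference is that the paper immediately relaxes $q^2(1+\rho^{-1})=\frac{q^2(1+q^2)}{1-q^2}$ to $\frac{2q^2}{1-q^2}$ in the recursion, whereas you carry the sharper constant through and apply $1+q^2\leq 2$ at the end.
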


\begin{proof}
We start by using Assumption~\ref{ass:quant} and Young's inequality to get
\begin{align}
    \|e_{t+1,i}\|^2&=\|g_{t,i}+e_{t,i}-\mathcal C(g_{t,i}+e_{t,i})\|^2 \nonumber\\
    &\leq q^2\|g_{t,i}+e_{t,i}\|^2 \nonumber\\
    &\leq q^2(1+\rho)\|e_{t,i}\|^2+q^2(1+\frac{1}{\rho})\|g_{t,i}\|^2 \nonumber\\
    &\leq \frac{1+q^2}{2}\|e_{t,i}\|^2 + \frac{2q^2}{1-q^2}\|g_{t,i}\|^2, \label{eq:e_t 0}
\end{align}
where \eqref{eq:e_t 0} is derived by choosing $\rho=\frac{1-q^2}{2q^2}$ and the fact that $q<1$. Now by recursion and the initialization $e_{1,i}=0$, we have
\begin{align*}
    \mathbb E[\|e_{t+1,i}\|^2]&\leq \frac{2q^2}{1-q^2} \sum_{\tau=1}^t (\frac{1+q^2}{2})^{t-\tau} \mathbb E[\|g_{\tau,i}\|^2]  \\
    &\leq \frac{4q^2}{(1-q^2)^2}\sigma^2 + \frac{2q^2}{1-q^2}\sum_{\tau=1}^t (\frac{1+q^2}{2})^{t-\tau} \mathbb E[\|\nabla f_i(\theta_{\tau})\|^2], \nonumber
\end{align*}
which proves the second argument. Meanwhile, the absolute bound $\|e_{t,i}\|^2\leq \frac{4q^2}{(1-q^2)^2}G^2$ follows directly from \eqref{eq:e_t 0}.
\end{proof}

\begin{Lemma} \label{lemma:bound big E_t}
For the moving average error sequence $\mathcal E_t$, it holds that
\begin{align*}
    \sum_{t=1}^T \mathbb E[\|\mathcal E_t\|^2]\leq \frac{4Tq^2}{(1-q^2)^2}(\sigma^2+\sigma_g^2) + \frac{4q^2}{(1-q^2)^2} \sum_{t=1}^T \mathbb E[\|\nabla f(\theta_t)\|^2 ].
\end{align*}
\end{Lemma}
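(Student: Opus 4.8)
The plan is to reduce the bound on the moving-average error sequence $\mathcal{E}_t$ to the per-worker error bound already established in Lemma~\ref{lemma:bound e_t}, via two applications of Jensen's inequality interleaved with two exchanges of summation order. The starting point is the defining relation $\mathcal{E}_t = (1-\beta_1)\sum_{\tau=1}^t \beta_1^{t-\tau}\bar{e}_\tau$, which expresses $\mathcal{E}_t$ as a (sub-)convex combination of the averaged error accumulators $\bar{e}_\tau = \frac{1}{n}\sum_{i=1}^n e_{\tau,i}$, since the weights $(1-\beta_1)\beta_1^{t-\tau}$ sum to $1-\beta_1^t \le 1$.

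First I would use convexity of $\|\cdot\|^2$ to pull the norm inside the average, obtaining $\|\mathcal{E}_t\|^2 \le (1-\beta_1)\sum_{\tau=1}^t \beta_1^{t-\tau}\|\bar{e}_\tau\|^2$. Summing over $t=1,\dots,T$ and swapping the order of summation, the inner geometric sum $\sum_{t=\tau}^T (1-\beta_1)\beta_1^{t-\tau}$ is bounded by $1$, so that $\sum_{t=1}^T \mathbb{E}[\|\mathcal{E}_t\|^2] \le \sum_{\tau=1}^T \mathbb{E}[\|\bar{e}_\tau\|^2]$. Next, a second application of Jensen gives $\mathbb{E}[\|\bar{e}_\tau\|^2] \le \frac{1}{n}\sum_{i=1}^n \mathbb{E}[\|e_{\tau,i}\|^2]$, at which point I would invoke the second inequality of Lemma~\ref{lemma:bound e_t} for each worker $i$.

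The constant ($\sigma^2$) part of that lemma contributes $\frac{4Tq^2}{(1-q^2)^2}\sigma^2$ after summing over $\tau$. For the gradient part, I would average $\frac{1}{n}\sum_i \|\nabla f_i(\theta_s)\|^2$ over workers and apply the bias–variance decomposition, using that the cross term vanishes because $\frac1n\sum_i(\nabla f_i(\theta_s)-\nabla f(\theta_s))=0$; the global-variance bound in Assumption~\ref{ass:var}(ii) then yields $\frac1n\sum_i \|\nabla f_i(\theta_s)\|^2 \le \sigma_g^2 + \|\nabla f(\theta_s)\|^2$. Finally I would swap the order of the remaining double sum once more, noting that the inner geometric sum $\sum_{\tau>s}\big(\tfrac{1+q^2}{2}\big)^{\tau-1-s}\le \frac{2}{1-q^2}$; multiplying this by the prefactor $\frac{2q^2}{1-q^2}$ produces the coefficient $\frac{4q^2}{(1-q^2)^2}$ in front of both the $\sigma_g^2$ term and the gradient sum, and relaxing $\sum_{s=1}^{T-1}$ to $\sum_{s=1}^{T}$ delivers exactly the claimed inequality.

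The main obstacle here is purely organizational: the expression carries two independent exponential weightings (the $\beta_1$-decay in $\mathcal{E}_t$ and the $\tfrac{1+q^2}{2}$-decay inside Lemma~\ref{lemma:bound e_t}), and the result requires two nested summation swaps. The delicate part is tracking these carefully so that each swap contributes precisely its geometric-series factor ($\le 1$ for the $\beta_1$ layer, $\le \frac{2}{1-q^2}$ for the $q$ layer) and the constants combine into $\frac{4q^2}{(1-q^2)^2}$ without loss; there is no single hard estimate, only the risk of an index or off-by-one error in the telescoping.
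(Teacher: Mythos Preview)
Your proposal is correct and follows essentially the same approach as the paper: Jensen (the paper phrases it as Cauchy--Schwarz, which here yields the identical inequality) on the $\beta_1$-weighted sum, Jensen on the worker average, invocation of Lemma~\ref{lemma:bound e_t}, two geometric-series summation swaps, and the bias--variance decomposition with Assumption~\ref{ass:var}(ii) to convert $\tfrac{1}{n}\sum_i\|\nabla f_i\|^2$ into $\|\nabla f\|^2+\sigma_g^2$. The only cosmetic difference is that you sum over $t$ before plugging in Lemma~\ref{lemma:bound e_t}, whereas the paper does it after; the resulting constants are identical.
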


\begin{proof}
Denote $K_{t,i}\eqdef \sum_{\tau=1}^t (\frac{1+q^2}{2})^{t-\tau} \mathbb E[\|\nabla f_i(\theta_\tau)\|^2]$. Using the same technique as in the proof of Lemma~\ref{lemma:m_t,m_t'}, denoting $\bar e_{t,j}$ as the $j$-th coordinate of $\bar e_{t}$, it follows that
\begin{align*}
    \mathbb E[\|\mathcal E_t\|^2]&=\mathbb E[\|(1-\beta_1)\sum_{\tau=1}^t\beta_1^{t-\tau} \bar e_\tau\|^2]\\
    &\leq (1-\beta_1)^2\sum_{j=1}^d \mathbb E[(\sum_{\tau=1}^t\beta_1^{t-\tau} \bar e_{\tau,j})^2]\\
    &\overset{(a)}{\leq} (1-\beta_1)^2\sum_{j=1}^d \mathbb E[(\sum_{\tau=1}^t\beta_1^{t-\tau})(\sum_{\tau=1}^t\beta_1^{t-\tau} \bar e_{\tau,j}^2)]\\
    &\leq (1-\beta_1)\sum_{\tau=1}^t \beta_1^{t-\tau}\mathbb E[\|\bar e_\tau\|^2]\\
    &\leq (1-\beta_1)\sum_{\tau=1}^t \beta_1^{t-\tau}\mathbb E[\frac{1}{n}\sum_{i=1}^n\|e_{\tau,i}\|^2] \\
    &\overset{(b)}{\leq} \frac{4q^2}{(1-q^2)^2}\sigma^2+\frac{2q^2(1-\beta_1)}{(1-q^2)}\sum_{\tau=1}^t \beta_1^{t-\tau} (\frac{1}{n}\sum_{i=1}^n K_{\tau,i}),
\end{align*}
where (a) is due to Cauchy-Schwartz and (b) is a result of Lemma~\ref{lemma:bound e_t}. Summing over $t=1,...,T$ and using the technique of geometric series summation leads to
\begin{align*}
    \sum_{t=1}^T \mathbb E[\|\mathcal E_t\|^2]&\leq \frac{4Tq^2}{(1-q^2)^2}\sigma^2 + \frac{2q^2(1-\beta_1)}{(1-q^2)}\sum_{t=1}^T \sum_{\tau=1}^t \beta_1^{t-\tau} (\frac{1}{n}\sum_{i=1}^n K_{\tau,i})\\
    &\leq \frac{4Tq^2}{(1-q^2)^2}\sigma^2 +\frac{2q^2}{(1-q^2)}\sum_{t=1}^T\sum_{\tau=1}^t (\frac{1+q^2}{2})^{t-\tau} \mathbb E[\frac{1}{n}\sum_{i=1}^n\|\nabla f_i(\theta_\tau)\|^2]\\
    &\leq \frac{4Tq^2}{(1-q^2)^2}\sigma^2 + \frac{4q^2}{(1-q^2)^2} \sum_{t=1}^T \mathbb E[\frac{1}{n}\sum_{i=1}^n\|\nabla f_i(\theta_t)\|^2]\\
    &\overset{(a)}{\leq } \frac{4Tq^2}{(1-q^2)^2}\sigma^2 + \frac{4q^2}{(1-q^2)^2} \sum_{t=1}^T \mathbb E[\|\frac{1}{n}\sum_{i=1}^n\nabla f_i(\theta_t)\|^2+\frac{1}{n}\sum_{i=1}^n\|\nabla f_i(\theta_t)-\nabla f(\theta_t)  \|^2 ]\\
    &\leq \frac{4Tq^2}{(1-q^2)^2}(\sigma^2+\sigma_g^2) + \frac{4q^2}{(1-q^2)^2} \sum_{t=1}^T \mathbb E[\|\nabla f(\theta_t)\|^2 ],
\end{align*}
where (a) is derived by the variance decomposition and the last inequality holds due to Assumption~\ref{ass:var}. The desired result is obtained.
\end{proof}

\begin{Lemma} \label{lemma:bound v_t}
It holds that $\forall t\in [T]$, $\forall i\in [d]$, $\hat v_{t,i}\leq \frac{4(1+q^2)^3}{(1-q^2)^2}G^2$.

\end{Lemma}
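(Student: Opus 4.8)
The plan is to establish a uniform coordinate-wise bound on the averaged compressed gradient $\overline{\tilde g}_t$, and then propagate it through the moving-average recursion defining $v_t$ and the running-maximum defining $\hat v_t$. Since $\hat v_t$ is by construction the coordinate-wise running maximum of the $v_\tau$, and each $v_\tau$ is a convex combination of past $\overline{\tilde g}_\tau^2$ terms, it suffices to bound $\overline{\tilde g}_{\tau,i}^2$ for every $\tau$ and $i$ by the target constant $B\eqdef\frac{4(1+q^2)^3}{(1-q^2)^2}G^2$.

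First I would bound the per-worker compressed gradient $\tilde g_{t,i}=\mathcal C(g_{t,i}+e_{t,i})$. The $q$-deviate Assumption~\ref{ass:quant} gives $\|\mathcal C(x)\|\le \|x\|+\|\mathcal C(x)-x\|\le (1+q)\|x\|$, hence $\|\tilde g_{t,i}\|^2\le (1+q)^2\|g_{t,i}+e_{t,i}\|^2$. Applying the elementary bound $\|g_{t,i}+e_{t,i}\|^2\le 2\|g_{t,i}\|^2+2\|e_{t,i}\|^2$, together with $\|g_{t,i}\|\le G$ from Assumption~\ref{ass:boundgrad} and the absolute error bound $\|e_{t,i}\|^2\le\frac{4q^2}{(1-q^2)^2}G^2$ from Lemma~\ref{lemma:bound e_t}, yields $\|g_{t,i}+e_{t,i}\|^2\le 2G^2\big(1+\frac{4q^2}{(1-q^2)^2}\big)$, uniformly in $t$ and $i$.

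The algebraic heart of the argument, which I expect to be the only non-obvious step, is the simplification $1+\frac{4q^2}{(1-q^2)^2}=\frac{(1-q^2)^2+4q^2}{(1-q^2)^2}=\frac{(1+q^2)^2}{(1-q^2)^2}$, combined with the elementary inequality $(1+q)^2\le 2(1+q^2)$ (equivalent to $(1-q)^2\ge 0$). These two facts together give $\|\tilde g_{t,i}\|^2\le (1+q)^2\cdot 2G^2\cdot\frac{(1+q^2)^2}{(1-q^2)^2}\le 2(1+q^2)\cdot 2G^2\cdot\frac{(1+q^2)^2}{(1-q^2)^2}=\frac{4(1+q^2)^3}{(1-q^2)^2}G^2=B$, so the per-worker compressed gradient norm is already bounded by exactly the target constant.

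Finally I would transfer this to the average and close the induction. By convexity of $\|\cdot\|^2$, $\overline{\tilde g}_{t,i}^2\le\|\overline{\tilde g}_t\|^2\le\frac1n\sum_{i=1}^n\|\tilde g_{t,i}\|^2\le B$. A one-line induction on the recursion $v_{t,i}=\beta_2 v_{t-1,i}+(1-\beta_2)\overline{\tilde g}_{t,i}^2$ with $v_{0,i}=0$ then shows $v_{t,i}\le B$, since a convex combination of quantities at most $B$ remains at most $B$; and a second induction on $\hat v_{t,i}=\max\{\hat v_{t-1,i},v_{t,i}\}$ with $\hat v_{0,i}=0$ yields $\hat v_{t,i}\le B$ for all $t$ and all $i\in[d]$. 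Apart from the algebraic collapse to the clean constant, the argument is a routine chain of norm inequalities followed by two trivial inductions, so no further difficulty is anticipated.
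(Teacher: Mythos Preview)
Your proposal is correct and follows essentially the same route as the paper: bound $\|\tilde g_{t,i}\|^2$ via the $q$-deviate assumption, the stochastic-gradient bound $G$, and the accumulated-error bound from Lemma~\ref{lemma:bound e_t}, then propagate through the convex combination defining $v_t$ and the running max defining $\hat v_t$. The only cosmetic difference is that the paper bounds $\|\mathcal C(x)\|^2\le 2(1+q^2)\|x\|^2$ directly via $\|a+b\|^2\le 2\|a\|^2+2\|b\|^2$, whereas you first obtain the slightly sharper $(1+q)^2\|x\|^2$ from the triangle inequality and then relax it using $(1+q)^2\le 2(1+q^2)$; the two paths coincide at the target constant.
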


\begin{proof}
For any $t$, by Lemma~\ref{lemma:bound e_t} and Assumption~\ref{ass:boundgrad} we have
\begin{align*}
    \|\tilde g_t\|^2&=\|\mathcal C(g_t+e_t)\|^2\\
    &\leq \|\mathcal C(g_t+e_t)-(g_t+e_t)+(g_t+e_t)\|^2\\
    &\leq 2(q^2+1)\|g_t+e_t\|^2\\
    &\leq 4(q^2+1)(G^2+\frac{4q^2}{(1-q^2)^2}G^2)\\
    &=\frac{4(1+q^2)^3}{(1-q^2)^2}G^2.
\end{align*}
It's then easy to show by the updating rule of $\hat v_t$, there exists a $j\in[t]$ such that $\hat v_t=v_j$. Then
\begin{align*}
    \hat v_{t,i}=(1-\beta_2)\sum_{\tau=1}^j \beta_2^{j-\tau} \tilde g_{\tau,i}^2\leq \frac{4(1+q^2)^3}{(1-q^2)^2}G^2,
\end{align*}
which concludes the claim.
\end{proof}

\begin{Lemma}  \label{lemma:bound difference}
Let $D_t\eqdef \frac{1}{\sqrt{\hat v_{t-1}+\epsilon}}-\frac{1}{\sqrt{\hat v_t+\epsilon}}$ be defined as above. Then,
\begin{align*}
    &\sum_{t=1}^T \|D_t\|_1 \leq \frac{d}{\sqrt\epsilon},\quad  \sum_{t=1}^T \|D_t\|^2 \leq \frac{d}{\epsilon}.
\end{align*}
\end{Lemma}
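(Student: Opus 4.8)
The plan is to exploit the monotonicity of the sequence $\{\hat v_t\}$ that is built into the AMSGrad update. Because line~\ref{line:v} of Algorithm~\ref{alg:sparsams} sets $\hat v_t=\max(v_t,\hat v_{t-1})$, the sequence $\hat v_{t,j}$ is non-decreasing in $t$ for every coordinate $j\in[d]$. Consequently $\frac{1}{\sqrt{\hat v_{t,j}+\epsilon}}$ is non-increasing in $t$, so each coordinate $(D_t)_j=\frac{1}{\sqrt{\hat v_{t-1,j}+\epsilon}}-\frac{1}{\sqrt{\hat v_{t,j}+\epsilon}}$ is non-negative. This single observation drives both bounds: it removes the absolute values in the $\ell_1$ norm and converts the sum into a telescoping series.

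For the first bound I would write $\sum_{t=1}^T\|D_t\|_1=\sum_{j=1}^d\sum_{t=1}^T (D_t)_j$ (the sign constraint lets me drop the $|\cdot|$), exchange the order of summation, and telescope each inner sum to $\frac{1}{\sqrt{\hat v_{0,j}+\epsilon}}-\frac{1}{\sqrt{\hat v_{T,j}+\epsilon}}$. Using the initialization $\hat v_0=\bm{0}$ and discarding the non-negative subtracted term, each coordinate contributes at most $\frac{1}{\sqrt\epsilon}$, and summing over the $d$ coordinates gives $\frac{d}{\sqrt\epsilon}$.

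For the second bound I would invoke the elementary inequality $\sum_t a_t^2\le(\max_t a_t)(\sum_t a_t)$, valid for any non-negative sequence $\{a_t\}$, applied coordinate-wise with $a_t=(D_t)_j$. Since $\hat v_{t-1,j}\ge 0$ we have $(D_t)_j\le\frac{1}{\sqrt{\hat v_{t-1,j}+\epsilon}}\le\frac{1}{\sqrt\epsilon}$, so $\max_t(D_t)_j\le\frac{1}{\sqrt\epsilon}$; combined with the per-coordinate telescoping estimate $\sum_t(D_t)_j\le\frac{1}{\sqrt\epsilon}$ from the previous step, each coordinate of $\sum_t\|D_t\|^2$ is bounded by $\frac{1}{\epsilon}$, and summing over the $d$ coordinates yields $\frac{d}{\epsilon}$.

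There is essentially no serious obstacle here; the whole argument rests on recognizing the monotonicity induced by the $\max$ operation and handling the initialization correctly. The only points requiring a little care are verifying that $\hat v_{t,j}\ge 0$ (immediate, since $v_t$ is a convex combination of squared quantities starting from $v_0=\bm{0}$) so that the $\frac{1}{\sqrt\epsilon}$ ceiling on each coordinate of $D_t$ is legitimate, and making sure the telescope is anchored at $\hat v_0=\bm{0}$ rather than at $\hat v_1$.
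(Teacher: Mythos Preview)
Your proof is correct. The treatment of the $\ell_1$ bound is identical to the paper's: both drop the absolute values via the monotonicity $\hat v_{t-1}\le\hat v_t$ induced by the $\max$ operator, telescope, and use $\hat v_0=\bm{0}$.

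For the squared bound the paper takes a slightly different elementary route: instead of your inequality $\sum_t a_t^2\le(\max_t a_t)(\sum_t a_t)$, it applies $(a-b)^2\le a^2-b^2$ (valid for $a\ge b>0$) to each coordinate, obtaining
\[
\Big(\tfrac{1}{\sqrt{\hat v_{t-1,j}+\epsilon}}-\tfrac{1}{\sqrt{\hat v_{t,j}+\epsilon}}\Big)^2\le \tfrac{1}{\hat v_{t-1,j}+\epsilon}-\tfrac{1}{\hat v_{t,j}+\epsilon},
\]
and then telescopes the right-hand side directly. Both arguments are one-liners resting on the same monotonicity observation; the paper's version telescopes the squares themselves, whereas yours reuses the already-established $\ell_1$ bound together with the trivial uniform bound $(D_t)_j\le 1/\sqrt\epsilon$. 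Neither approach offers a material advantage over the other.
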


\begin{proof}
By the updating rule of \algo, $\hat v_{t-1}\leq \hat v_t$ for $\forall t$. Therefore, by the initialization $\hat v_0=0$, we have
\begin{align*}
    \sum_{t=1}^T \|D_t\|_1 &=\sum_{t=1}^T \sum_{i=1}^d (\frac{1}{\sqrt{\hat v_{t-1,i}+\epsilon}}-\frac{1}{\sqrt{\hat v_{t,i}+\epsilon}})\\
    &=\sum_{i=1}^d (\frac{1}{\sqrt{\hat v_{0,i}+\epsilon}}-\frac{1}{\sqrt{\hat v_{T,i}+\epsilon}})\\
    &\leq \frac{d}{\sqrt\epsilon}.
\end{align*}
For the sum of squared $l_2$ norm, note the fact that for $a\geq b>0$, it holds that
\begin{equation*}
    (a-b)^2\leq (a-b)(a+b)=a^2-b^2.
\end{equation*}
Thus,
\begin{align*}
    \sum_{t=1}^T \|D_t\|^2&=\sum_{t=1}^T \sum_{i=1}^d (\frac{1}{\sqrt{\hat v_{t-1,i}+\epsilon}}-\frac{1}{\sqrt{\hat v_{t,i}+\epsilon}})^2\\
    &\leq \sum_{t=1}^T \sum_{i=1}^d (\frac{1}{\hat v_{t-1,i}+\epsilon}-\frac{1}{\hat v_{t,i}+\epsilon})\\
    &\leq \frac{d}{\epsilon},
\end{align*}
which gives the desired result.
\end{proof}

\end{document}